\documentclass{article}

\PassOptionsToPackage{numbers, compress}{natbib}

\usepackage[preprint]{neurips_2026}

\usepackage[utf8]{inputenc} 
\usepackage[T1]{fontenc}    
\usepackage{hyperref}       
\usepackage{url}            
\usepackage{booktabs}       
\usepackage{amsfonts}       
\usepackage{nicefrac}       
\usepackage{microtype}      
\usepackage{xcolor}         

\usepackage{amsmath,amssymb,amsthm}
\usepackage{mathtools}
\usepackage{algorithm}
\usepackage{algorithmic}
\usepackage{cleveref}
\usepackage{booktabs}
\usepackage{graphicx}
\usepackage{enumitem}
\usepackage{multirow}
\usepackage{subcaption}
\usepackage{placeins}
\newtheorem{theorem}{Theorem}[section]
\newtheorem{proposition}[theorem]{Proposition}
\newtheorem{lemma}[theorem]{Lemma}

\newtheorem{definition}[theorem]{Definition}
\newtheorem{remark}[theorem]{Remark}

\newcommand{\E}{\mathbb{E}}
\newcommand{\R}{\mathbb{R}}
\newcommand{\KL}{\mathrm{KL}}
\newcommand{\MBON}{\mathrm{MBON}}
\newcommand{\BON}{\mathrm{BoN}}

\newcommand{\Var}{\mathrm{Var}}

\newcommand{\ind}{\mathbf{1}}
\DeclareMathOperator*{\argmax}{arg\,max}

\title{Privacy Without Regret: Differentially Private Inference-Time Alignment}

\author{%
  Ishi Jain \\
  Indian Institute of Technology Kanpur \\
  \texttt{ijain43000@gmail.com} \\
  \And
  Nandini Bhattad \\
  Indian Institute of Technology Kanpur \\
\texttt{bhattadnandini13@gmail.com} \\
\And
 Sayak Ray Chowdhury 
    \\
  Indian Institute of Technology Kanpur\\
  \texttt{sayakrc@iitk.ac.in} \\
}

\begin{document}

\maketitle

\begin{abstract}
Best-of-N (BoN) sampling is the simplest and most widely deployed inference-time alignment strategy, but it suffers from two distinct problems: reward hacking, in which the selected response exploits errors in the proxy reward model, and the absence of any privacy protection for the sensitive human preference data used to train that reward model. We show that a single intervention—adding calibrated noise to reward scores before selection—resolves both.
Our first result, Private Best-of-N (PrivBoN), establishes that Gumbel noise at an appropriate scale simultaneously provides $\epsilon$-differential privacy and implements KL-regularized alignment. Whenever the privacy budget exceeds a critical threshold $\epsilon^*$, the privacy-mandated noise is the regret-optimal regularization, and privacy imposes zero additional alignment cost—matching the information-theoretic skyline of \citet{huang2025bon}.
Because $\epsilon^*$ depends on an unknown coverage coefficient, we introduce Private Inference-Time Pessimism (PrivITP), which combines $\chi^2$-regularized rejection sampling with a two-phase Gaussian mechanism. PrivITP achieves ex-post $(\epsilon,\delta)$-DP with a privacy cost independent of the number of responses $n$, cleanly decouples the regularization parameter from the privacy parameter, and attains the skyline up to a noise-inflation term. 
Experiments across several language models, datasets, and reward models confirm our results: PrivBoN and PrivITP are scaling-monotonic (unlike BoN, which degrades past a critical $n$), and PrivITP matches or outperforms PrivBoN at equivalent privacy levels, with the largest gains in the strong-privacy regime.
\end{abstract}

\section{Introduction}
\label{sec:intro}

Aligning large language models (LLMs) with human preferences is a central challenge in modern AI~\citep{stiennon2020learning}. While training-time methods such as RLHF~\citep{christiano2017deep} and DPO~\citep{rafailov2023direct} have received extensive attention, a parallel line of work focuses on \emph{inference-time alignment}: improving the quality of a frozen model's outputs at serving time, without updating its parameters. Inference-time alignment has become increasingly important as frontier models are deployed via APIs to millions of users, making it impractical to fine-tune per-user or per-application preferences into the model weights. The simplest and most widely deployed inference-time strategy is Best-of-$N$ (BoN) sampling~\citep{nakano2021webgpt,gui2024bonbon}, which generates $n$ candidate responses from a base policy $\pi_0$, scores each with a learned reward model $\hat r$, and returns the top-scoring candidate. BoN is attractive for its simplicity -- it requires no gradient computation, works with any frozen model, and scales naturally with test-time compute~\citep{snell2024scaling}.

A growing body of recent work has established tight theoretical characterizations of BoN's KL vs. win-rate tradeoff~\citep{beirami2024theoretical,yang2024asymptotics} and proposed smoothed variants that interpolate between hard maximization and random selection~\citep{verdun2025soft,aminian2025smoothing,khalaf2025inference}. However, BoN is known to be vulnerable to \emph{reward hacking}~\citep{gao2023scaling,khalaf2025inference}: as the number of responses $n$ grows, BoN increasingly selects responses in which the learned reward $\hat r$ overestimates the true reward $r^*$, thereby degrading alignment. \citet{huang2025bon} formalizes this by showing that BoN is not scaling-monotonic -- more candidates can yield \emph{worse} alignment and that BoN's regret crucially depends on the quality of the learned reward model $\hat r$. 

Reward models are typically trained on sensitive human preference data -- pairwise
comparisons from crowd workers or domain experts whose judgments encode personal
values and cultural context — and are increasingly shared across applications.
Each reward-model evaluation leaks information about this training data, and over a deployment session of $T$ queries, the cumulative leakage through selected outputs
is uncontrolled. Concretely, consider a reward model $\hat{r}_\mathcal{D}$ deployed
as a BoN selection service. An adversary wishing to determine whether a specific
preference example $(x_0, y_0^+, y_0^-)$ was in $\mathcal{D}$ can craft prompts
whose plausible completions are structured around $y_0^+$ and $y_0^-$, query the
service repeatedly, and observe shifts in the empirical distribution of returned
responses. Since BoN's argmax is deterministic given reward scores, even small
training-attributable shifts in $\hat{r}_\mathcal{D}$ manifest as measurable shifts
in selection frequencies---the exact signal exploited by membership-inference
attacks on LLMs~\citep{carlini2021extracting} and downstream-output leakage
attacks~\citep{tramerposition}.


While differential privacy has been widely applied to training-time alignment (SFT,
reward-model training, and RLHF)~\citep{yu2025limits,wang2025private}, inference-time
privacy remains largely unstudied. We fill this gap by introducing \emph{differentially
private inference-time alignment}: mechanisms ensuring that the selected response
satisfies $\varepsilon$-DP with respect to $D$, bounding the information any adversary
can extract about any individual preference---regardless of query budget or side
information.

We make the following \textit{contributions}.

\textbf{(1) Private Best-of-N} (PrivBoN, \Cref{sec:mbon}).
We show that adding i.i.d.\ Gumbel noise with scale $\sigma = 2\Delta_r/\varepsilon$ to
reward scores before BoN selection yields a mechanism that simultaneously satisfies
$\varepsilon$-DP and implements the KL-regularized alignment optimum in the large-$n$
limit, where $\Delta_r$ is the reward-model sensitivity on $D$. When the privacy budget
exceeds the critical threshold $\varepsilon^*(x) = 2\Delta_r\sqrt{C^{\pi^*}(x)}/\varepsilon_{\mathrm{RM}}(x)$,
the privacy-mandated noise coincides with the regret-optimal regularization and privacy
becomes ``free''---PrivBoN's regret $O(\sqrt{C^{\pi^*}\varepsilon_{\mathrm{RM}}^2})$ matches
the information-theoretic skyline of~\citet{huang2025bon} at zero additional alignment cost.
Here $\varepsilon_{\mathrm{RM}}$ denotes reward-model error and $C^{\pi^*}$ the coverage
coefficient of a comparator policy~$\pi^*$.

\textbf{(2) Private Inference-Time Pessimism} (PrivITP, \Cref{sec:privitp}).
PrivBoN's ``privacy is free'' threshold depends on the unknown coverage coefficient
$C^{\pi^*}$, making it unverifiable in practice. Our main contribution, PrivITP, resolves
this by combining $\chi^2$-regularized rejection sampling with a two-phase Gaussian
mechanism. We prove an ex-post approximate-DP guarantee with an
$n$-independent bound that depends only on the realized halting time, via a \emph{reformulation lemma}
that maps ReLU-based rejection sampling onto a randomized-threshold mechanism. The
regret bound $O(\sqrt{C^{\pi^*}(\varepsilon_{\mathrm{RM}}^2 + R_{\max}\sigma)})$ cleanly
separates alignment error from privacy cost and matches the skyline up to noise inflation,
where $R_{\max}$ is the maximum reward; the analysis relies on a
\emph{smoothed-weight policy comparison lemma} that bounds value gaps via the
$\ell_\infty$-distance of policy weight functions. The ex-post structure further enables
adaptive composition via the FSRC framework~\citep{lebensold2024privacy}, answering
substantially more queries under a fixed privacy budget than standard composition. 

\textbf{(3) Empirical validation} (\Cref{sec:experiments}).
Across four reward models and two
language models of varying sizes on GSM8K, MMLU, and MATH datasets, we demonstrate that the theoretical results hold: Private algorithms eliminate BoN's reward-hacking pathology while preserving its scaling benefits; PrivITP consistently matches or outperforms PrivBoN at matched privacy levels; and the ex-post composition structure translates into substantial multi-query deployment gains. 

PrivITP and PrivBoN are, to our knowledge, the first inference-time alignment algorithms whose regret matches the information-theoretic skyline under formal differential privacy guarantees.

\section{Preliminaries}
\label{sec:prelim}

\textbf{Inference-time alignment.} We consider a setting in which a user submits a prompt $x \in \mathcal{X}$ and the system must return a single response $y \in \mathcal{Y}$. The system has access to a base language model (the \emph{reference policy}) $\pi_0: \mathcal{X} \to \Delta(\mathcal{Y})$, assumed to have full support on $\mathcal{Y}$, and a reward model $\hat{r}_D:\mathcal{X} \times \mathcal{Y} \to \mathbb{R}$ trained on a dataset $D$. The true reward function $r^* : \mathcal{X} \times \mathcal{Y} \to [0, R_{\max}]$ is unknown, and the trained reward model $\hat{r}_D$ serves as its proxy.
The base policy $\pi_0$ is a trained language model (e.g., an instruction-tuned model using SFT and/or RLHF), the true reward $r^*$ captures the desired objective (e.g., alignment with human preferences or correctness under a proof checker), and proxy reward $\hat r_D$ is a custom-trained model on a (possibly sensitive) dataset $D$.

Given a base policy $\pi_0$, reward model $\hat r_D$, and a prompt $x \in \mathcal{X}$, the goal of the algorithm designer is to construct a policy $\hat\pi$, which generates high-quality responses as measured by the true reward $r^*(x,y)$. This is formalized using the notion of \emph{inference-time regret} \citep{huang2025bon}. The \emph{regret} of a policy $\hat \pi$ relative to a comparator policy $\pi^*$ is defined as
\begin{equation}
\label{eq:regret_def}
\mathrm{Reg}(\hat \pi; x) = J(\pi^*;x) - J(\hat \pi;x),
\end{equation}
where $J(\pi;x) = \E_{y \sim \pi(\cdot|x)}[r^*(x,y)]$ denotes the value of a policy $\pi$ for a given prompt $x$.

Without assumptions on the accuracy of the reward model $\hat r_D$ and on the coverage of the base policy $\pi_0$, achieving low regret is not possible~\citep{huang2025bon}.
To this end, define the reward model error $\varepsilon(x,y) = r^*(x,y)-\hat{r}_D(x,y)$. The 
expected squared error under the base policy,
\begin{align*}
    \varepsilon_{\mathrm{RM}}^2(x) = \E_{y\sim\pi_0(\cdot |x)}[\varepsilon(x,y)^2]~,
\end{align*}
measures the mean-squared discrepancy between proxy and rewards. We abstract away reward model training by assuming access to a model $\hat r_D$ with squared-error $\varepsilon_{\mathrm{RM}}^2(x)$ and study how regret depends on this error and what algorithmic interventions can make it small. 
The \emph{$\chi^2$-coverage coefficient} $$C^{\pi}(x) = \E_{y\sim\pi(\cdot | x)}\left[\frac{\pi(y|x)}{\pi_0(y|x)}\right] = 1 + 2\,\chi^2(\pi(\cdot|x)\|\pi_0(\cdot|x))$$ measures how well the base policy $\pi_0$ covers a given policy $\pi$, where $\chi^2(p\|q)$ denotes the $\chi^2$-divergence between probability distributions $p,q$. A comparator policy $\pi^*$ with a larger $C^{\pi^*}$ indicates a harder alignment target. Empirical studies have shown that standard language models provide sufficient coverage of high-quality responses, enabling performance gains at inference time~\citep{huang2025bon}. \citet{huang2025bon} lays down the importance of coverage by establishing a regret skyline in the \emph{sample and evaluate} framework, where, for a given prompt $x$, the learner can sample $n$ responses $\lbrace y_j \rbrace_j$ from the base policy $\pi_0$, and observe their likelihood $\pi_0(y_j|x)$ and reward scores $\hat r_D(x,y_j)$.

\begin{proposition}[Regret skyline {\citep{huang2025bon}}]
\label{prop:skyline}
Fix a prompt $x$ and base policy $\pi_0$. For any inference-time alignment algorithm $\hat\pi$, there exist a reward function $r^*$ and reward model $\hat r_D$ with error $\varepsilon_{\mathrm{RM}}(x)$ such that $\mathrm{Reg}(\pi;x) \geq \tfrac{1}{4}\sqrt{C^{\pi^*}\varepsilon_{\mathrm{RM}}^2(x)}$.
\end{proposition}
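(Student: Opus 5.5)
This is a two-point (Le Cam style) lower bound, as in \citet{huang2025bon}. Fix $x$ and a target coverage level $C\ge 1$ together with an error level $\varepsilon_{\mathrm{RM}}$. I would build two reward functions $r^*_1$ and $r^*_2$ that share a single proxy $\hat r_D$, with the error measured under $\pi_0$ equal to $\varepsilon_{\mathrm{RM}}$ in both cases. Because the algorithm only observes samples $y_j\sim\pi_0$, the likelihoods $\pi_0(y_j|x)$, and the scores $\hat r_D(x,y_j)$, its output distribution is identical in the two instances. It is then enough to show that no single output distribution can have regret below $\tfrac14\sqrt{C\varepsilon_{\mathrm{RM}}^2}$ in both instances at once.

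\textbf{Construction.} Assume $\mathcal{Y}$ has at least three elements, or split probability mass if it does not. Pick two disjoint small-mass sets $A_1,A_2$ with $\pi_0(A_i)=p$, and set $p=1/C$. Let $\pi^*_i$ be $\pi_0$ conditioned on $A_i$, so that $C^{\pi^*_i}=1/p=C$. Take $\hat r_D\equiv 0$. In instance $i$, let $r^*_i=h\,\ind_{A_i}$, so the comparator $\pi^*_i$ attains value $h$. The error is then $\varepsilon_{\mathrm{RM}}^2=\E_{\pi_0}[r_i^{*2}]=h^2p$, which gives $h=\varepsilon_{\mathrm{RM}}\sqrt{C}$. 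This requires $h\le R_{\max}$; otherwise we cap $h$ and the bound holds in the stated regime.

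\textbf{Regret bound.} In instance $i$ the regret is $h\,(1-\hat\pi(A_i))$. Since $A_1$ and $A_2$ are disjoint, $\hat\pi(A_1)+\hat\pi(A_2)\le 1$, so
\[
\max_i \mathrm{Reg}_i \;\ge\; \tfrac{h}{2}\bigl(2-\hat\pi(A_1)-\hat\pi(A_2)\bigr)\;\ge\; \tfrac{h}{2} \;=\; \tfrac12\sqrt{C\varepsilon_{\mathrm{RM}}^2}.
\]
This is at least the claimed $\tfrac14$ constant.

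\textbf{Main obstacle.} The delicate step is the claim that the algorithm's output law is the same in both instances. It must hold even when the algorithm can see likelihoods and is randomized, and $\hat\pi$ here means the induced marginal output distribution. The construction secures this because $\pi_0$ and $\hat r_D$ are literally identical across the two instances, so the instance index is independent of everything the algorithm observes. What remains is bookkeeping: exact coverage equality when $\pi_0$ has atoms, which can be handled by randomization or by taking $\mathcal{Y}$ rich enough, and the boundary constraint $h\le R_{\max}$.
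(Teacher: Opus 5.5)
Your two-point argument is correct at its core. Because $\pi_0$ and $\hat r_D\equiv 0$ are shared across the two instances, the marginal output law $\hat\pi$ is identical in both. The computations $C^{\pi_i^*}=1/p$ and $\varepsilon_{\mathrm{RM}}^2=h^2p$ check out, and $\mathrm{Reg}_1+\mathrm{Reg}_2=h\bigl(2-\hat\pi(A_1)-\hat\pi(A_2)\bigr)\ge h$ gives $\max_i\mathrm{Reg}_i\ge\tfrac12\sqrt{C\varepsilon_{\mathrm{RM}}^2}$, which beats the stated $\tfrac14$. The paper gives no proof of its own; it imports the result from \citet{huang2025bon}. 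An indistinguishability construction like yours is the standard route to such a skyline: shared base policy and proxy, with the true reward planted on one of two sets the proxy cannot tell apart.

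What you file under bookkeeping, though, is where the informal restatement silently needs hypotheses, and your write-up overclaims there.

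First, you fix $C\ge 1$, but two disjoint sets of $\pi_0$-mass $1/C$ exist only for $C\ge 2$. This cannot be patched in general. If $C^{\pi^*}=1$, then $\E_{\pi_0}[(\pi^*/\pi_0)^2]=1=\E_{\pi_0}[\pi^*/\pi_0]$ forces $\pi^*=\pi_0$. The algorithm that simply returns $y_1$ then has zero regret on every instance, so the claimed bound $\tfrac14\varepsilon_{\mathrm{RM}}$ is false there. A restriction such as $C^{\pi^*}\ge 2$, which is exactly what your construction needs, has to be assumed.

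Second, capping $h$ does not rescue large errors. The capped instance no longer has error $\varepsilon_{\mathrm{RM}}$. Moreover, since regret never exceeds $R_{\max}$, no construction can work once $\tfrac14\sqrt{C}\,\varepsilon_{\mathrm{RM}}>R_{\max}$. You should state $\sqrt{C}\,\varepsilon_{\mathrm{RM}}\le R_{\max}$ as a hypothesis; under it your choice of $h$ is admissible verbatim.

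Third, `split probability mass' means you are choosing $\pi_0$, or assuming it is diffuse enough to carve out $A_1$ and $A_2$. So the statement's `Fix $\pi_0$' must be read as `there exists an instance'. For a genuinely fixed $\pi_0$, say uniform on two points, no policy even has coverage above $2$.

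With these three conditions made explicit, your proof is complete.
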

This lower bound is tight: \citet{huang2025bon} achieves it with InferenceTimePessimism, a $\chi^2$-regularized rejection sampling algorithm. A central question of this work is whether differential privacy -- which requires adding noise that ostensibly degrades performance -- can be achieved without exceeding this skyline. To this end, we first formalize the privacy definitions.

\textbf{Differential privacy.} We protect the training data $D$ of the reward model. Two datasets $D$ and $D'$ from a database $\mathcal{D}$ are \emph{adjacent}, written $D \sim D'$, if they differ in a single individual's data. The \emph{sensitivity} of the reward model is $\Delta_r = \sup_{x,y} \sup_{D \sim D'} |\hat{r}_D(x,y) - \hat{r}_{D'}(x,y)|$, which quantifies the maximum influence of any one training point on any reward score.

\begin{definition}
\label{def:privacy}
A randomized algorithm $\mathcal{M}:\mathcal{D} \to \mathcal{S}$ satisfies:
\begin{itemize}[leftmargin=*, itemsep=2pt]
\item $\varepsilon$-DP if $\,\Pr[\mathcal{M}(D) = s] \leq e^\varepsilon \Pr[\mathcal{M}(D') = s]\,$ for all $s \in \mathcal{S}$ and all $D \sim D'$.
\item $(\varepsilon,\delta)$-DP if $\,\Pr[\mathcal{M}(D) = s] \leq e^\varepsilon \Pr[\mathcal{M}(D') = s] +\delta\,$ for all $s \in \mathcal{S}$ and all $D \sim D'$.
\item $\varepsilon_p$-ex-post DP if $\,\exists\,$ a function $\varepsilon_p:\mathcal{D} \to \mathcal{S}$ such that $\,\Pr[\mathcal{M}(D) = s] \leq e^{\varepsilon_p(s)} \Pr[\mathcal{M}(D') = s]\,$ for all $s \in \mathcal{S}$ and all $D \sim D'$.
\end{itemize}
\end{definition}
Traditional ex-ante privacy~\citep{dwork2014algorithmic} bounds the guarantee
worst-case over outcomes, before the mechanism's randomness is realized; ex-post
privacy~\citep{lebensold2024privacy} bounds it conditional on the realized output,
allowing queries with favorable outcomes to ``save'' budget for later queries under
a fixed total.





\section{PrivBoN: Private Best-of-N}
\label{sec:mbon}

We introduce the PrivBoN (Private Best-of-N) framework, which adds calibrated noise to reward scores before selection. The mechanism is simple: sample $n$ candidates $y_1,\ldots,y_n$ independently from $\pi_0(\cdot|x)$, perturb each reward score $\hat r_D(x,y_i)$ with independent noise $g_i$, and return the candidate with the highest noisy reward. The noise distribution determines the privacy guarantee and the induced policy. We use the
$\mathrm{Gumbel}(0,\sigma)$ distribution, which has the PDF $f(x)=\frac{1}{\sigma}\exp\!\left(-x/\sigma-\exp(-x/\sigma)\right)$ and the CDF $F(x)=\exp\!\left(-\exp(-x/\sigma)\right)$.

\begin{algorithm}[t]
\caption{PrivBoN: Private Best-of-$N$}
\label{alg:mbon}
\begin{algorithmic}[1]
\REQUIRE Prompt $x$, base policy $\pi_0$, reward model $\hat{r}_D$, number of candidates $n$, noise scale $\sigma > 0$
\FOR{$i=1$ to $n$}
    \STATE Sample candidate $y_i \sim \pi_0(\cdot|x)$ and noise $g_i \sim \mathrm{Gumbel}(0,\sigma)$ 
    \STATE Compute noisy reward $\tilde r_i \leftarrow \hat{r}_D(x,y_i) + g_i$
\ENDFOR
\RETURN $y_{i^*}$, where $i^*={\argmax_i \tilde r_i}$
\end{algorithmic}
\end{algorithm} 


\begin{theorem}[PrivBoN privacy]
\label{thm:gumbel_priv}
Let $\hat{r}_D(x,y)$ be a reward model trained on a dataset $D$ with sensitivity $\Delta_r$. Then,
\Cref{alg:mbon} with $\sigma = 2\Delta_r/\varepsilon$ satisfies $\varepsilon$-DP.
\end{theorem}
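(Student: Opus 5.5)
The plan is to reduce the statement to the exponential mechanism via the Gumbel-max trick, conditioning on the sampled candidates. The candidates $y_1,\dots,y_n$ are drawn from $\pi_0(\cdot|x)$, which does not depend on $D$. So it is enough to show that, for every fixed realization of $(y_1,\dots,y_n)$, the selection step satisfies $\varepsilon$-DP. The full output distribution is then an average over a data-independent draw of the candidates, and the pointwise inequality $\Pr[\mathcal{M}(D)=s]\le e^\varepsilon \Pr[\mathcal{M}(D')=s]$ survives this averaging because it is linear in the probabilities.

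I will treat the output as the index $i^*$, or as the candidate $y_{i^*}$ together with the candidate list. Duplicated responses pose no difficulty: the event $\{y_{i^*}=y\}$ is a disjoint union over the indices $i$ with $y_i=y$, and the bound passes to unions.

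Conditioned on the candidates, the key step is the standard Gumbel-max identity. If $g_i$ are i.i.d.\ $\mathrm{Gumbel}(0,\sigma)$ and $u_i=\hat r_D(x,y_i)$, then
\[
\Pr[i^*=i]=\frac{\exp(u_i/\sigma)}{\sum_{j=1}^n \exp(u_j/\sigma)}.
\]
I would verify this directly from the stated PDF and CDF by computing $\int f(t-u_i)\prod_{j\ne i}F(t-u_j)\,dt$. Substituting $z=e^{-t/\sigma}$ turns the integrand into an exponential, and the integral evaluates to $e^{u_i/\sigma}/\sum_j e^{u_j/\sigma}$. Ties occur with probability zero, so the argmax is well defined almost surely.

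This is exactly the exponential mechanism with utility $u_i$ and temperature $\sigma$. With $\sigma=2\Delta_r/\varepsilon$ we have $\exp(u_i/\sigma)=\exp(\varepsilon u_i/(2\Delta_r))$. By the definition of sensitivity, $|\hat r_D(x,y_i)-\hat r_{D'}(x,y_i)|\le\Delta_r$ for all $D\sim D'$. The numerator therefore changes by a factor of at most $e^{\varepsilon/2}$, and each summand of the denominator, hence the whole sum, changes by a factor of at most $e^{\varepsilon/2}$. The ratio of selection probabilities is thus at most $e^{\varepsilon}$, which is the claim.

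The only real work is the Gumbel-max integral; everything else is bookkeeping. The step needing care is the conditioning argument: I must make sure the released object is not the raw noisy scores $\tilde r_i$, which would need Laplace-type analysis, but only the selected response. I must also make sure the candidate sampling is independent of $D$, so that mixing over it preserves the pure-DP bound.
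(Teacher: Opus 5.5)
Your proposal is correct and follows the paper's proof essentially step for step. Both condition on the data-independent candidates and use the Gumbel-max trick to identify selection with the exponential mechanism at temperature $\sigma = 2\Delta_r/\varepsilon$. Both then bound the numerator and normalizer ratios by $e^{\varepsilon/2}$ each, and pass to the released response by post-processing and averaging over the candidate draw. Your explicit verification of the Gumbel-max integral and your handling of duplicated responses are minor additions that the paper leaves implicit.
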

The result follows from a classical connection: adding Gumbel noise and taking the argmax is equivalent to sampling from the softmax distribution (thanks to the Gumbel-max trick), which is precisely the exponential mechanism of McSherry and Talwar~\citep{mcsherry2007mechanism}. It also yields a clean, closed-form expression for the induced policy, given below.

\begin{proposition}[PrivBoN policy]
\label{prop:gumbel_policy}
PrivBoN with Gumbel noise induces the policy
\begin{equation*}
\pi_\mathrm{PrivBoN}(y|x) = n\,\pi_0(y|x)\,\E_{y_{2:n}\sim\pi_0(\cdot |x)}\!\left[\frac{e^{\hat{r}_D(x,y)/\sigma}}{\sum_{j=1}^n e^{\hat{r}_D(x,y_j)/\sigma}}\right],
\end{equation*}
where $y_1 = y$ is fixed and $y_{2:n}$ are drawn independently from $\pi_0$. Assuming the reward distribution under $\pi_0$ is continuous, (a) it recovers the standard BoN policy as $\sigma \to 0$ and (b) it converges pointwise to the KL-regularized optimum $\pi_\infty(y|x) \propto \pi_0(y|x)\exp(\hat{r}_D(x,y)/\sigma)$ as $n \to \infty$.
\end{proposition}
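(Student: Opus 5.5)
We will derive the closed form for the policy first, and then treat the two limits separately.

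\textbf{Closed form.} Condition on the candidate rewards $\hat r_j = \hat r_D(x,y_j)$. By the Gumbel-max trick, $\arg\max_i(\hat r_i + g_i)$ with i.i.d.\ $g_i \sim \mathrm{Gumbel}(0,\sigma)$ equals $i$ with probability $e^{\hat r_i/\sigma}/\sum_j e^{\hat r_j/\sigma}$. We will verify this directly by integrating:
\[
\int f(t-\hat r_i)\prod_{j\neq i}F(t-\hat r_j)\,dt .
\]
Substituting $u=e^{-t/\sigma}$ reduces it to $\int_0^\infty e^{\hat r_i/\sigma}\exp\!\left(-u\sum_j e^{\hat r_j/\sigma}\right)du$, which gives the softmax weight. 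Ties occur with probability zero because the Gumbel law is continuous. Then
\[
\Pr[\text{output}=y] = \sum_i \Pr[y_i=y,\ i^*=i].
\]
By exchangeability of the $n$ candidates, this equals $n\,\Pr[y_1=y,\ i^*=1]$. Conditioning on $y_1=y$ contributes the factor $\pi_0(y|x)$, which gives the stated formula. For continuous $\mathcal Y$ we replace $\Pr[y_1=y]$ by the density; the argument is unchanged.

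\textbf{(a) $\sigma\to0$.} For fixed rewards, the softmax weight of index 1 converges to $\ind\{\hat r_1>\max_{j\ge2}\hat r_j\}$ whenever the maximum is unique. Continuity of the reward distribution under $\pi_0$ makes ties, and in particular ties with $\hat r_D(x,y)$, a null event for $y_{2:n}$. The weights lie in $[0,1]$, so bounded convergence gives
\[
n\,\pi_0(y|x)\,\Pr_{y_{2:n}}\!\left[\hat r_D(x,y_j)<\hat r_D(x,y)\ \forall j\ge2\right].
\]
This is the BoN policy with the same exchangeability normalization. The only subtle point is ties at $y$ itself when $\hat r_D(x,y)$ is an atom-free value; continuity handles this.

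\textbf{(b) $n\to\infty$.} Write
\[
n\cdot\frac{e^{\hat r(y)/\sigma}}{\sum_j e^{\hat r_j/\sigma}} = \frac{e^{\hat r(y)/\sigma}}{\tfrac1n e^{\hat r(y)/\sigma}+\tfrac1n\sum_{j\ge2}e^{\hat r_j/\sigma}}.
\]
By the strong law of large numbers, the denominator converges a.s.\ to $Z=\E_{\pi_0}[e^{\hat r/\sigma}]$, which requires $Z<\infty$; this holds for bounded rewards. So the integrand tends to $e^{\hat r(y)/\sigma}/Z$ almost surely.

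The main obstacle is exchanging the limit and the expectation, because the integrand is not obviously uniformly bounded. If $\hat r_D$ is bounded, say by $B$, then each term in the sum is at least $e^{-B/\sigma}$. The integrand is therefore at most $e^{2B/\sigma}$, and dominated convergence finishes. Without boundedness, we would show uniform integrability instead: $\frac1n\sum_{j\ge2}e^{\hat r_j/\sigma}$ is bounded below by its mean minus deviations, and the integrand is at most $n$ on the bad event. A Chernoff bound on the lower tail of $\frac1n\sum_j e^{\hat r_j/\sigma}$, applied to these nonnegative summands, gives probability $e^{-cn}$ for falling below $Z/2$, so the bad event's contribution $n e^{-cn}\to0$. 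This yields pointwise convergence of $\pi_{\mathrm{PrivBoN}}$ to $\pi_0 e^{\hat r/\sigma}/Z=\pi_\infty$.
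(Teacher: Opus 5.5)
Your proposal is correct and follows the same route as the paper: the disjoint decomposition over the winning index plus exchangeability and the Gumbel-max softmax for the closed form, the pointwise limit of the softmax share to the indicator (ties null by continuity) for (a), and the strong law of large numbers applied to $\frac1n\sum_j e^{\hat r_j/\sigma}$ for (b). You also justify the limit--expectation interchanges (bounded convergence in (a), dominated convergence or the Chernoff lower-tail argument in (b)), which the paper's proof leaves implicit.
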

Part~(b) is the crucial structural property: PrivBoN implements, in the large-$n$
limit, the KL-regularized optimum $\max_\pi \E_{y \sim \pi}[\hat{r}_D(x,y)] -
\sigma\KL(\pi \| \pi_0)$. The noise scale $\sigma$ simultaneously controls the
privacy budget ($\varepsilon = 2\Delta_r/\sigma$) and the KL-regularization
strength---raising the question of whether the two impose compatible demands on $\sigma$.

\subsection{Regret guarantee}
\label{sec:regret}


We now analyze how the noise required for privacy affects alignment quality within
the regret framework of~\citet{huang2025bon}. Our main result is that
privacy-mandated noise implements exactly the regularization needed for optimal
alignment whenever the privacy budget exceeds a threshold $\varepsilon^*$.


\begin{theorem}[PrivBoN regret]
\label{thm:gumbel_regret}

Let $r^*(x,y) \in [0, R_{\max}]$ be the true reward and $\hat{r}_D(x,y)$ be a reward model trained on a dataset $D$. Let the reward model error $\epsilon(x,y) \sim \mathcal{N}(0,\sigma^2_r(x))$ under the base policy $\pi_0$. Let $\pi^*$ be any comparator policy with finite coverage $C^{\pi^*}(x) < \infty$. Then, for $n = \widetilde \Omega \left( e^{(R_{\max} + \varepsilon_{\mathrm{RM}})/\sigma}\log(R_{\max}/\varepsilon_{\mathrm{RM}}\right)$, the regret of PrivBoN satisfies
\begin{equation}
\label{eq:gumbel_reg}
\mathrm{Reg}(\pi_\mathrm{PrivBoN};x) \lesssim \underbrace{\sigma C^{\pi^*}(x)}_{\textup{KL-bias}} + \underbrace{\varepsilon_{\mathrm{RM}}^2(x)/\sigma}_{\textup{overoptimization}} + \underbrace{\sqrt{C^{\pi^*}(x)}\,\varepsilon_{\mathrm{RM}}(x)}_{\textup{irreducible}} + \underbrace{\frac{R_{\max}e^{\frac{R_{\max} + \varepsilon_{\mathrm{RM}}\sqrt{\log n}}{\sigma}}}{\sqrt{n}}}_{\textup{finite sample}}.
\end{equation}
where $\varepsilon_{\mathrm{RM}}^2(x) = \E_{y \sim \pi_0(\cdot|x)}[\epsilon(x,y)^2]=\sigma_r^2(x)$ is the reward model variance.
\end{theorem}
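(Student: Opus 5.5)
We split the regret into an infinite-$n$ part and a finite-sample part:
\[
\mathrm{Reg}(\pi_\mathrm{PrivBoN};x)=\big[J(\pi^*;x)-J(\pi_\infty;x)\big]+\big[J(\pi_\infty;x)-J(\pi_\mathrm{PrivBoN};x)\big],
\]
where $\pi_\infty(y|x)\propto\pi_0(y|x)e^{\hat r_D(x,y)/\sigma}$ is the limit policy from \Cref{prop:gumbel_policy}(b). The first bracket produces the KL-bias, overoptimization and irreducible terms. The second bracket produces the finite-sample term.

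\textbf{First bracket.} We use that $\pi_\infty$ maximizes $\E_\pi[\hat r_D]-\sigma\KL(\pi\|\pi_0)$, and we write $r^*=\hat r_D+\epsilon$. This gives
\[
J(\pi^*)-J(\pi_\infty)\le \sigma\KL(\pi^*\|\pi_0)+\E_{\pi^*}[\epsilon]-\E_{\pi_\infty}[\epsilon].
\]
Three estimates then finish this part.
\begin{itemize}
\item $\KL(\pi^*\|\pi_0)\le\log C^{\pi^*}\le C^{\pi^*}$, which gives the KL-bias term $\sigma C^{\pi^*}$.
\item By Cauchy--Schwarz under $\pi_0$, $|\E_{\pi^*}[\epsilon]|\le\sqrt{C^{\pi^*}}\,\varepsilon_{\mathrm{RM}}$, which is the irreducible term.
\item For $-\E_{\pi_\infty}[\epsilon]$, apply Cauchy--Schwarz again: $|\E_{\pi_\infty}[\epsilon]|\le\sqrt{C^{\pi_\infty}}\,\varepsilon_{\mathrm{RM}}$. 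The difficulty is bounding $C^{\pi_\infty}$. Since $\pi_\infty$ tilts toward large $\hat r_D$, it can tilt toward large $-\epsilon$. Here we use the Gaussian error assumption. Conditioning on $r^*$, we have $\hat r_D=r^*-\epsilon$. The tilt $e^{-\epsilon/\sigma}$ applied to $\epsilon\sim\mathcal N(0,\sigma_r^2)$ shifts its mean by $-\sigma_r^2/\sigma$ (Gaussian exponential tilting). Hence $-\E_{\pi_\infty}[\epsilon]\le \varepsilon_{\mathrm{RM}}^2/\sigma$, the overoptimization term.
\end{itemize}
If the tilting argument conflicts with the dependence between $\epsilon$ and $r^*$, a fallback is to use the Donsker--Varadhan inequality with the Gaussian MGF. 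That gives $-\E_{\pi_\infty}[\epsilon]\le \sigma\KL(\pi_\infty\|\pi_0)+\varepsilon_{\mathrm{RM}}^2/(2\sigma)$. We would then combine it with the optimality inequality, since the $\KL(\pi_\infty\|\pi_0)$ term also appears on the favorable side of the optimality comparison.

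\textbf{Second bracket.} We bound $R_{\max}\,\|\pi_\mathrm{PrivBoN}-\pi_\infty\|_{TV}$ using the formula in \Cref{prop:gumbel_policy}. Write $w(y)=e^{\hat r_D(x,y)/\sigma}$ and $Z=\E_{\pi_0}[w]$. Then $\pi_\mathrm{PrivBoN}(y)/\pi_0(y)=\E[n w(y)/(w(y)+S_{n-1})]$, where $S_{n-1}$ is a sum of $n-1$ i.i.d.\ copies of $w$. We compare $n/(w(y)+S_{n-1})$ with $1/Z$ using concentration of $S_{n-1}/(n-1)$ around $Z$. The weights are not bounded because $\hat r_D=r^*-\epsilon$ has Gaussian tails. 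We therefore truncate on the event $|\epsilon|\le\varepsilon_{\mathrm{RM}}\sqrt{2\log n}$, which fails with probability $O(1/n)$. On that event, $w\in[e^{-(R_{\max}+\varepsilon_{\mathrm{RM}}\sqrt{\log n})/\sigma},e^{(R_{\max}+\varepsilon_{\mathrm{RM}}\sqrt{\log n})/\sigma}]$, and Hoeffding or Chebyshev gives relative deviation of order $e^{(R_{\max}+\varepsilon_{\mathrm{RM}}\sqrt{\log n})/\sigma}/\sqrt n$. That yields the finite-sample term.

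The condition $n=\widetilde\Omega(e^{(R_{\max}+\varepsilon_{\mathrm{RM}})/\sigma}\log(R_{\max}/\varepsilon_{\mathrm{RM}}))$ ensures the relative deviation is below $1/2$, so the denominator stays bounded away from zero. It also makes the truncation remainder $R_{\max}/n$ dominated by the other terms.

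\textbf{Main obstacle.} The hardest step is the overoptimization term: controlling how much $\pi_\infty$ exploits negative $\epsilon$. This is the only place the Gaussian assumption enters essentially, and we would try the exponential-tilting calculation first.
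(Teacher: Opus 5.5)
Your proposal follows the paper's proof essentially step for step. It uses the same split through the tilted limit $\pi_\infty$, Gibbs optimality plus a KL-to-coverage bound for the proxy gap, Cauchy--Schwarz for $\E_{\pi^*}[\epsilon]$, and Gaussian exponential tilting for $\E_{\pi_\infty}[\epsilon]$. The finite-sample bracket is likewise handled by concentration of the empirical normalizer under a Gaussian-tail truncation. You phrase this last step via the closed-form density ratio and total variation, whereas the paper uses $J(\pi_{\mathrm{PrivBoN}})=\E[\hat Z_n^{(r)}/\hat Z_n]$; either works.

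The one real difference is your Donsker--Varadhan ``fallback'', and you should make it the main argument. The tilting identity $-\E_{\pi_\infty}[\epsilon]=\varepsilon_{\mathrm{RM}}^2/\sigma$ needs $\epsilon\sim\mathcal N(0,\sigma_r^2)$ conditionally on $r^*$, i.e.\ independence of $\epsilon$ and $r^*$ under $\pi_0$, because the actual tilt is $e^{r^*/\sigma}e^{-\epsilon/\sigma}$. The hypothesis fixes only the marginal law of $\epsilon$, and the paper's exact computation of the mean of $\epsilon$ under $\pi_\infty$ silently makes the same assumption. Donsker--Varadhan with $f=-\epsilon/\sigma$ gives $-\E_{\pi_\infty}[\epsilon]\le\sigma\KL(\pi_\infty\|\pi_0)+\varepsilon_{\mathrm{RM}}^2/(2\sigma)$. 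If you keep the term $-\sigma\KL(\pi_\infty\|\pi_0)$ in the Gibbs inequality (the paper drops it), the two KL terms cancel. You then get $J(\pi^*)-J(\pi_\infty)\le\sigma\KL(\pi^*\|\pi_0)+\sqrt{C^{\pi^*}}\varepsilon_{\mathrm{RM}}+\varepsilon_{\mathrm{RM}}^2/(2\sigma)$ from the marginal MGF alone, which also covers sub-Gaussian errors.

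One step of the finite-sample bracket fails as written. Hoeffding for $S_{n-1}$ needs all the weights bounded at once, so the truncation event is an intersection over the whole sample. At level $\varepsilon_{\mathrm{RM}}\sqrt{2\log n}$ each draw violates it with probability of order $1/(n\sqrt{\log n})$. The joint event therefore fails with probability of order $1/\sqrt{\log n}$, not $O(1/n)$. The resulting remainder $R_{\max}/\sqrt{\log n}$ is not dominated by any term of the stated bound.

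The fix is to raise the level to $\varepsilon_{\mathrm{RM}}\sqrt{3\log n}$, which is the paper's level $\sigma_r\sqrt{2\log(n/\rho)}$ with failure parameter $\rho=n^{-1/2}$. This makes the failure probability at most $n^{-1/2}$ and only changes the constant multiplying $\varepsilon_{\mathrm{RM}}\sqrt{\log n}$ in the exponent.

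Relatedly, to get a relative deviation of order $e^{(R_{\max}+O(\varepsilon_{\mathrm{RM}}\sqrt{\log n}))/\sigma}/\sqrt n$, lower-bound the normalizer by Jensen: $Z\ge e^{\E_{\pi_0}[\hat r_D]/\sigma}=e^{\E_{\pi_0}[r^*]/\sigma}\ge 1$. Do not use the lower end of the symmetric interval you wrote, which would double the exponent.
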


\textbf{Optimal noise scale.} The third term is the skyline itself. The first two terms exhibit a bias-variance tradeoff in $\sigma$: a large $\sigma$ (strong regularization, strong privacy) incurs bias from under-exploiting the reward signal, whereas a small $\sigma$ (weak regularization, weak privacy) incurs overoptimization from exploiting reward model errors. Balancing the first two terms yields $\sigma^*(x) = \varepsilon_{\mathrm{RM}}(x)/\sqrt{C^{\pi^*}(x)}$, at which the regret,
\begin{equation*}
\mathrm{Reg}(\pi_\mathrm{PrivBoN};x) \leq 3\sqrt{C^{\pi^*}(x)\varepsilon_{\mathrm{RM}}^2(x)}~,
\end{equation*}
matches the information-theoretic skyline (Proposition~\ref{prop:skyline}) in the large-$n$ limit.

\textbf{``Privacy is free'' threshold.} When $\sigma$ is set by the privacy budget ($\sigma = 2\Delta_r/\varepsilon$), the regret becomes
\begin{equation}\label{eq:privacy-free-reg}
\mathrm{Reg}(\pi_\mathrm{PrivBoN};x) \leq \frac{2\Delta_r C^{\pi^*}(x)}{\varepsilon} + \frac{\varepsilon \cdot \varepsilon_{\mathrm{RM}}^2(x)}{2\Delta_r} + \sqrt{C^{\pi^*}(x)}\varepsilon_{\mathrm{RM}}(x).
\end{equation}
The privacy-mandated noise $\sigma$ is no larger than the regret-optimal noise $\sigma^*(x)$ precisely when
\begin{equation}
\label{eq:eps_star}
\varepsilon \geq \varepsilon^*(x) = \frac{2\Delta_r\sqrt{C^{\pi^*}(x)}}{\varepsilon_{\mathrm{RM}}(x)}.
\end{equation}
For all $\varepsilon \geq \varepsilon^*(x)$, the regret matches the skyline: the noise required for privacy is no more than what alignment already needs.
E.g., for a reward model with $\varepsilon_{\mathrm{RM}} = 0.35, \Delta_r = 0.1, C^{\pi^*} = 2$, this gives $\varepsilon^* \approx 0.8$, i.e., any privacy budget above
0.8 is free.
Below $\epsilon < \varepsilon^*(x)$, the first term in~\eqref{eq:privacy-free-reg} dominates and the regret degrades gracefully as $O\left(\frac{\Delta_rC^{\pi^*}(x)}{\varepsilon}+\sqrt{C^{\pi^*}(x)}\varepsilon_{\mathrm{RM}}(x)\right)$.

The regret bound~\eqref{eq:gumbel_reg} goes down with $n$ for $n \geq n_0(\sigma, R_{\max}, \epsilon_{\mathrm{RM}})$, confirming that \emph{more candidates never worsen the upper bound}, which rules out the catastrophic reward-hacking regime of BoN where the bound degrades with an increase in $n$. The finite-sample term, however, carries an $R_{\max} e^{(R_{\max}+\epsilon_{\mathrm{RM}}\sqrt{\log n})/\sigma}/\sqrt{n}$ prefactor: for typical values ($R_{\max}=5$, $\sigma=1$) the asymptotic rate kicks in only for $n \gtrsim 10^4$, and for moderate $n$ the regret behaves as $O(R_{\max}e^{R_{\max}/\sigma}/\sqrt{n})$. PrivITP (Section~\ref{sec:privitp}) avoids this exponential dependence via rejection sampling.

\begin{remark}[Privacy-mandated noise as optimal regularization]
\label{rem:privacy_free}
The coincidence $\sigma = \sigma^*(x)$ at $\varepsilon = \varepsilon^*(x)$ is
structural: both privacy and alignment require softening the $\argmax$ to prevent
overfitting to the proxy reward, and both implement this softening through the
same mechanism---Gumbel noise of scale $\sigma$, equivalent via the Gumbel-max
trick~\citep{mcsherry2007mechanism} to softmax with temperature $\sigma$. Privacy
imposes $\sigma \geq 2\Delta_r/\varepsilon$; alignment optimum requires
$\sigma^*(x) \leq \varepsilon_{\mathrm{RM}}(x)/\sqrt{C^{\pi^*}(x)}$. The two are compatible
whenever $\varepsilon \geq \varepsilon^*(x)$.
\end{remark}


\begin{remark}[Comparison with prior analyses]
\label{rem:softbon-comparison}
SoftBoN~\citep{aminian2025smoothing, khalaf2025inference} is algorithmically
equivalent to PrivBoN but analyzed differently: \citet{aminian2025smoothing} bound the gap
within the SoftBoN policy class at a fixed temperature, whereas we bound regret against an
arbitrary comparator $\pi^*$ and match the skyline of~\citet{huang2025bon}; \citet{khalaf2025inference}
treat temperature as a free tuning parameter, whereas we derive it from the privacy budget,
revealing the threshold $\epsilon^*(x)$ at which privacy becomes free.
\end{remark}
The Gaussian-error assumption yields the closed-form threshold $\epsilon^*(x)$; under a general MSE assumption, the qualitative conclusions persist, but the threshold loses its closed form. See Appendix~\ref{app:gaussian-assumption}.

\subsection{Reward hacking robustness}
\label{sec:hacking}

\Cref{thm:gumbel_regret} establishes that PrivBoN achieves low regret. A separate
and complementary question is whether PrivBoN prevents \emph{reward hacking}---the
pathology where the selected response exploits errors in the proxy reward, yielding
high $\hat r_D$ but low $r^*$~\citep{gao2023scaling,khalaf2025inference}. We quantify
this through two measures: the win-rate against the base policy and the hacking gap
$\mathbb{E}_{\hat\pi}[\epsilon(x,y)]$, which directly measures the extent to which
the selected response overfits to reward-model errors. For a given prompt $x$, the win-rate~\citep{gui2024bonbon} of a policy $\hat\pi$
against $\pi_0$ is
\[
    p_{\hat\pi \succ \pi_0 \mid x} = \Pr_{Y \sim \hat\pi,\, Y_0 \sim \pi_0}
    \bigl(\hat r_D(x,Y) \geq \hat r_D(x,Y_0)\bigr).
\]
High win-rate does not imply low regret when $\hat r_D$ and $r^*$ are mismatched:
BoN achieves win-rate $n/(n+1)$~\citep{beirami2024theoretical} while
simultaneously incurring regret $\widetilde\Omega(\sqrt{\varepsilon_{\mathrm{RM}}^2 n})$~\citep{huang2025bon}.
The two quantities diverge precisely because BoN's argmax overfits to $\hat r_D$'s
errors. The following proposition shows that PrivBoN's softmax selection caps both
quantities at finite limits, eliminating this pathology.

\begin{proposition}[Win-rate saturation and bounded hacking]
\label{prop:hacking}
Let $\epsilon(x,y) \sim \mathcal{N}(0, \sigma_r^2(x))$ under $\pi_0$. As $n \to \infty$, we have (a) BoN's win-rate converges to $1$, whereas PrivBoN's saturates at
$\Phi(\sigma_r(x)/(\sigma\sqrt{2})) < 1$ and (b) BoN's hacking gap grows without bound as
    $O(\sigma_r(x)\sqrt{\log n})$, while PrivBoN's converges to the finite limit
    $\sigma_r^2(x)/\sigma$.
\end{proposition}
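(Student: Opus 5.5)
The plan is to reduce both parts to explicit Gaussian computations under the limiting policy $\pi_\infty$ from \Cref{prop:gumbel_policy}(b), and to compare them with the classical extreme-value behaviour of BoN. I will work in the model implicit in the statement. Under $\pi_0$, the only randomness in $\hat r_D(x,Y)$ that matters for selection is the error. Concretely, I take $r^*(x,\cdot)$ to be constant on the support relevant to selection, or independent of $\epsilon$ and absorbed into a location shift. Then $\hat r_D(x,Y)=\mu-\epsilon(x,Y)$ with $\epsilon\sim\mathcal N(0,\sigma_r^2)$. I read the hacking gap as the amount by which the proxy overestimates, $\E_{\hat\pi}[\hat r_D-r^*]$; with the paper's sign convention $\epsilon=r^*-\hat r_D$ this is $-\E_{\hat\pi}[\epsilon]$, and I will state this explicitly. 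Under this reading, $\hat R:=\hat r_D(x,Y)$ is $\mathcal N(\mu,\sigma_r^2)$ under $\pi_0$.

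\textbf{Step 1 (PrivBoN limit).} By \Cref{prop:gumbel_policy}(b), $\pi_{\mathrm{PrivBoN}}\to\pi_\infty\propto\pi_0 e^{\hat r_D/\sigma}$ pointwise. By Scheffé's lemma this also holds in total variation, so expectations of bounded functionals converge; the win-rate indicator is one such functional. For the hacking gap, which is unbounded, I will use a uniform-integrability argument. The density ratio $\pi_{\mathrm{PrivBoN}}/\pi_0$ is at most $n e^{\hat r/\sigma}/\sum_j e^{\hat r_j/\sigma}$, and by the law of large numbers the denominator is $\approx n\,\E_{\pi_0}[e^{\hat R/\sigma}]$. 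Hence the ratio is dominated by a constant multiple of $e^{\hat r/\sigma}$, and $|\hat R|e^{\hat R/\sigma}$ is $\pi_0$-integrable for a Gaussian $\hat R$. This domination step is the only delicate point, and I expect it to be the main technical obstacle. I would handle it by splitting on the event that the empirical normalizer is at least half its mean, which has probability $1-o(1)$, and bounding the complement crudely by $n$ times a vanishing probability.

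Under $\pi_\infty$ the law of $\hat R$ is the exponential tilt of $\mathcal N(\mu,\sigma_r^2)$ by $e^{\hat R/\sigma}$. Completing the square shows this tilt is $\mathcal N(\mu+\sigma_r^2/\sigma,\sigma_r^2)$.

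\textbf{Step 2 (the two PrivBoN limits).} For the win-rate, take $Y\sim\pi_\infty$ and $Y_0\sim\pi_0$ independently. Then $\hat R-\hat R_0\sim\mathcal N(\sigma_r^2/\sigma,2\sigma_r^2)$, so
\[
\Pr(\hat R\ge \hat R_0)=\Phi\!\left(\frac{\sigma_r^2/\sigma}{\sqrt2\,\sigma_r}\right)=\Phi\!\left(\frac{\sigma_r}{\sigma\sqrt2}\right)<1 .
\]
For the hacking gap, the overestimation equals $\hat R-\mu$, whose mean under the tilt is exactly $\sigma_r^2/\sigma$, which is finite.

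\textbf{Step 3 (BoN).} BoN selects the maximum of $n$ i.i.d.\ continuous scores. Its win-rate against an independent $\pi_0$ draw is therefore the probability that the maximum of $n$ draws beats an $(n+1)$-st draw, namely $n/(n+1)\to1$, as recalled from \citet{beirami2024theoretical}. Its hacking gap is $\E[\max_{i\le n}(\hat R_i-\mu)]$, the expected maximum of $n$ i.i.d.\ $\mathcal N(0,\sigma_r^2)$ variables. The standard bounds $\sigma_r\sqrt{2\log n}$ from above (Jensen applied to the moment generating function) and $(1-o(1))\sigma_r\sqrt{2\log n}$ from below (Gaussian tail lower bound) show that it grows as $\Theta(\sigma_r\sqrt{\log n})$, and in particular without bound. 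Combining Steps 2 and 3 proves both (a) and (b).
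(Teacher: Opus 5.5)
Your proposal is correct and follows the paper's route: compute under the tilted limit $\pi_\infty$, where the proxy score becomes $\mathcal{N}(\mu+\sigma_r^2/\sigma,\sigma_r^2)$, and compare with $n/(n+1)$ and the expected Gaussian maximum for BoN. It is more careful than the paper, which does not justify the $n\to\infty$ interchange, only shows the upper bound $\E[\max_i\epsilon_i]=O(\sigma_r\sqrt{\log n})$ rather than the lower bound that ``grows without bound'' needs, and silently takes $r^*$ constant while flipping the sign of $\epsilon$ relative to the preliminaries. Two small fixes: in the domination step you need $n\Pr[\text{normalizer}<\tfrac12\,\text{mean}]$ to stay bounded in $n$, not just a vanishing probability, which Chebyshev gives at rate $O(1/n)$ since $e^{\hat R/\sigma}$ has finite variance; and your alternative of a non-constant $r^*$ independent of $\epsilon$ keeps the hacking-gap limit (the tilt factorizes) but breaks the win-rate formula, so part (a) genuinely requires $r^*$ constant.
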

The finite saturation limits reflect the same mechanism: the privacy/regularization
noise $\sigma$ prevents the selection from fully exploiting the reward signal, even
with unlimited candidates. Consequently, there exists a crossover $n^*(\sigma, \sigma_r)$
beyond which PrivBoN achieves strictly higher \emph{true} reward than BoN -- BoN's
expected true reward eventually decreases with $n$ (Goodhart's law), while PrivBoN's
converges monotonically to $\mathbb{E}_{\pi_\infty}[r^*]$. For typical reward models (e.g., Oasst on GSM8K dataset),
$n^*$ falls well below $100$ (see \Cref{fig:main_all}(top row)). Proofs are deferred to Appendix~\ref{app:proofs_bon}.

\section{Private Inference Time Pessimism}
\label{sec:privitp}

PrivBoN's threshold $\varepsilon^*(x)$ depends on the unknown
coverage coefficient $C^{\pi^*}(x)$ \eqref{eq:eps_star}, so a practitioner cannot verify whether their
privacy budget falls in the free regime. Private InferenceTimePessimism (PrivITP)
resolves this by combining the $\chi^2$-regularized rejection sampling
of~\citet{huang2025bon} with calibrated noise injection, decoupling reward-hacking
mitigation (controlled by the regularization) from privacy (controlled by the noise). 


\textbf{Algorithm.}
PrivITP operates in two phases, each consuming independent noise. Phase~1 draws
$n$ candidates from $\pi_0(\cdot|x)$ and privately releases the normalization
constant $\tilde \lambda$ of the $\chi^2$-regularized policy by adding Gaussian
noise to the solution $\lambda$ of the empirical normalization equation (we omit dependence on $x$ for brevity).
Phase~2 draws a fresh batch of $n$ candidates, perturbs their rewards with
Gaussian noise, and performs rejection sampling with an acceptance threshold
$\tilde\lambda$ from Phase~1. Independence of the two batches is essential
for both the privacy and regret analyses; pseudocode appears
in~\Cref{alg:privitp}. The ReLU in the acceptance weights implements pessimism: candidates whose noisy
reward falls below $\tilde\lambda$ receive zero weight, discarding responses that
the reward model does not confidently rank above the population average. This
mechanism prevents reward hacking independently of the injected noise, so privacy
and alignment can be tuned separately (\Cref{thm:privitp_regret}).

\begin{algorithm}[t]
\caption{Private InferenceTimePessimism (PrivITP)}
\label{alg:privitp}
\begin{algorithmic}[1]
\REQUIRE Prompt $x$, base policy $\pi_0$, reward model $\hat{r}_D$, candidates $n$, regularization $\beta > 0$, noise scales $\sigma_X,\sigma_Z > 0$, truncation parameter $L$
\STATE \textbf{Phase 1 (private normalization):}
\STATE \quad Draw $y_1,\ldots,y_n \sim \pi_0(\cdot|x)$; compute $ r_i = \hat{r}_D(x,y_i)$
\STATE \quad Find $\lambda$ such that $\frac{1}{n}\sum_{i=1}^n \mathrm{relu}\big(\beta^{-1}( r_i - \lambda)\big) = 1$
\STATE \quad Set $\tilde \lambda = \lambda + \mathcal{N}(0,\sigma^2_X)$ and $M = (R_{\max} + \sigma_Z L - \tilde\lambda)/\beta$
\STATE \textbf{Phase 2 (private rejection sampling):}
\STATE \quad Draw fresh $y_1',\ldots,y_n' \sim \pi_0(\cdot|x)$; compute $\tilde r_i = \hat{r}_D(x,y_i') + \mathcal{N}(0,\sigma_Z^2)$
\FOR{$i=1$ to $n$}
    \STATE $w_i \leftarrow \mathrm{relu}\big(\beta^{-1}(\tilde r_i - \tilde\lambda\big)$; \quad sample $\xi_i \sim \mathrm{Ber}(\min(w_i/M, 1))$
    \IF{$\xi_i = 1$} \RETURN $y_i'$ 
    \ENDIF
\ENDFOR
\RETURN $y_{n+1}' \sim \pi_0(\cdot|x)$
\end{algorithmic}
\end{algorithm}

\textbf{Privacy guarantee.} Each phase accesses the dataset $D$ through reward evaluations. The injected Gaussian noise in the normalizer in Phase~1 and in the rewards in Phase~2 ensures that the released quantities ($\tilde{\lambda}$ from Phase~1 and the selected response from Phase~2) are differentially private.

\begin{theorem}[PrivITP privacy]
\label{thm:privitp_priv}
Let $\hat{r}_D(x, \cdot) \in [0, R_{\max}]$ with sensitivity $\Delta_r$. \Cref{alg:privitp} with Phase~1 scalar Gaussian noise $\sigma_X$ and Phase~2 query noise $\sigma_Z$ satisfies $(\varepsilon_1 + \varepsilon_2^{\mathrm{post}}(t),\, \delta)$-DP \emph{ex-post}, where \textbf{Phase~1} contributes $(\varepsilon_1, \delta)$-DP with $\varepsilon_1 = \Delta_r\sqrt{2\ln(1.25/\delta)}/\sigma_X$ and 
\textbf{Phase~2} contributes \emph{pure} ex-post $\varepsilon_2^{\mathrm{post}}(t)$-DP, where $t$ is the halting time (index of the first accepted candidate):
\begin{equation*}
\label{eq:privitp_expost}
\varepsilon_2^{\mathrm{post}}(t) = (t-1)\,\log\frac{\displaystyle\E_{u \sim \mathrm{Unif}[0,1]}\!\!\left[\Phi\!\left(\tfrac{\tilde{\lambda} + \beta M u - R_{\max} + \Delta_r}{\sigma_Z}\right)\right]}{\displaystyle\E_{u \sim \mathrm{Unif}[0,1]}\!\!\left[\Phi\!\left(\tfrac{\tilde{\lambda} + \beta M u - R_{\max}}{\sigma_Z}\right)\right]} + \log\frac{\displaystyle\E_{u \sim \mathrm{Unif}[0,1]}\!\!\left[\Phi\!\left(\tfrac{\Delta_r - \tilde{\lambda} - \beta M u}{\sigma_Z}\right)\right]}{\displaystyle\E_{u \sim \mathrm{Unif}[0,1]}\!\!\left[\Phi\!\left(\tfrac{-\tilde{\lambda} - \beta M u}{\sigma_Z}\right)\right]}.
\end{equation*}
\end{theorem}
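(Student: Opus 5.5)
We prove the statement by sequential composition over the two phases, treating Phase~2 conditionally on the released $\tilde\lambda$.

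\textbf{Phase~1.} The map $D \mapsto \lambda$ has sensitivity at most $\Delta_r$. The empirical equation $\frac1n\sum_i \mathrm{relu}(\beta^{-1}(r_i-\lambda))=1$ defines $\lambda$ as a nonincreasing function of each $r_i$. Shifting every $r_i$ by at most $\Delta_r$ therefore shifts $\lambda$ by at most $\Delta_r$, by monotonicity and translation equivariance: if $r_i' \le r_i+\Delta_r$ for all $i$, then $\lambda' \le \lambda+\Delta_r$. The classical Gaussian mechanism then gives $(\varepsilon_1,\delta)$-DP with $\varepsilon_1=\Delta_r\sqrt{2\ln(1.25/\delta)}/\sigma_X$. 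The candidates $y_i$ do not depend on $D$, so we condition on them throughout.

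\textbf{Phase~2: reformulation as a randomized-threshold mechanism.} Fix $\tilde\lambda$, and hence $M$. The acceptance event $\xi_i=1$ has probability $\min(w_i/M,1)$. Introduce $u_i\sim\mathrm{Unif}[0,1]$ and write acceptance as the event $\{w_i \ge M u_i\}$. Unfolding the ReLU, this is $\{\tilde r_i \ge \tilde\lambda + \beta M u_i\}$, valid for $u_i>0$; the case $w_i=0$ rejects almost surely. Since $\tilde r_i = \hat r_D(x,y_i')+Z_i$ with Gaussian $Z_i$, we obtain
\[
p_D(y):=\Pr[\text{accept } y] = \E_u\!\left[\Phi\!\left(\tfrac{\hat r_D(x,y)-\tilde\lambda-\beta M u}{\sigma_Z}\right)\right],
\]
and $1-p_D(y)=\E_u[\Phi((\tilde\lambda+\beta M u-\hat r_D(x,y))/\sigma_Z)]$. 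This reformulation is the key lemma. Each step $i$ is then an independent Bernoulli draw whose probability is monotone in the single score $\hat r_D(x,y_i')$.

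\textbf{Ex-post ratio.} The output is the pair (halting time $t$, response $y_t'$). Given the fresh candidates, its probability is
\[
\prod_{i<t}\bigl(1-p_D(y_i')\bigr)\, p_D(y_t').
\]
Taking the ratio under $D$ versus $D'$ gives a product of per-factor ratios, so it suffices to bound each factor's ratio, worst-case over the score.

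For the rejection factors, $1-p$ is decreasing in $\hat r$. Its ratio under a shift of $\Delta_r$ should be maximized at the extreme score allowed by $\hat r\in[0,R_{\max}]$, namely $\hat r_{D'}=R_{\max}$ versus $\hat r_D=R_{\max}-\Delta_r$. That placement yields the first logarithm, raised to the power $t-1$.

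For the acceptance factor, the worst case is at $\hat r=0$ versus $\Delta_r$, which yields the second logarithm.

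\textbf{Main obstacle.} The hard part is justifying that the worst case sits at the boundary. For a single Gaussian this follows from log-concavity of $\Phi$: the ratio $\Phi(a+\Delta)/\Phi(a)$ is decreasing in $a$. Here, however, $\Phi$ is averaged over $u$, and averages of log-concave functions need not be log-concave. I would handle this by noting that the $u$-average is a convolution of $\Phi$ with a uniform density. Both are log-concave, and convolution preserves log-concavity (Prékopa). The $u$-average is therefore log-concave in the score, and its shift ratio is monotone, which places the extremum at the endpoints of $[0,R_{\max}]$.

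The final return $y_{n+1}'\sim\pi_0$ does not depend on $D$, and for that branch all $n$ factors are rejections; I would fold it into the same bound with $t-1=n$.

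Basic composition of Phase~1 with the $\tilde\lambda$-conditional ex-post guarantee of Phase~2 then gives the stated $(\varepsilon_1+\varepsilon_2^{\mathrm{post}}(t),\delta)$ ex-post guarantee.
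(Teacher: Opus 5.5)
Your proposal is correct and follows the paper's proof essentially step for step: $\Delta_r$-sensitivity of $\lambda$ plus the scalar Gaussian mechanism for Phase~1, the randomized-threshold reformulation $\{\tilde r_i \ge \tilde\lambda + \beta M u_i\}$, the product form of the halting probability with factor-wise worst cases at $\hat r_{D'}=R_{\max}$ for rejections and $\hat r_{D'}=0$ for the acceptance, monotonicity of the $u$-averaged ratios via log-concavity of $\Phi$ convolved with a uniform indicator (Pr\'ekopa--Leindler), and basic composition. One slip to fix: $\lambda$ is non\emph{decreasing} in each $r_i$ (raising a score raises the left side of the normalization equation, so $\lambda$ must rise to restore equality), and this is the direction your translation-equivariance step actually uses; your explicit treatment of the fallback branch as $n$ pure rejection factors is a small point the paper leaves implicit.
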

Unlike standard privacy composition over $n$ candidates (which gives a worst-case bound proportional to $n$), \Cref{thm:privitp_priv} gives a bound, which is independent of $n$, computable via one-dimensional numerical integration and proportional to the \emph{actual halting time} $t$. The coefficient of the linear term,
\[
\kappa := \log\frac{\E_u[\Phi((\tilde{\lambda} + \beta M u - R_{\max} + \Delta_r)/\sigma_Z)]}{\E_u[\Phi((\tilde{\lambda} + \beta M u - R_{\max})/\sigma_Z)]},
\]
is the per-round privacy cost of rejection, which depends on the relative positions of $\tilde{\lambda}$, $R_{\max}$, and $\sigma_Z$. When $\sigma_Z$ is large relative to $\Delta_r$ (strong privacy), $\kappa$ is small; when $\sigma_Z$ is comparable to $\Delta_r$, $\kappa$ dominates the privacy budget. The bound requires the reward model's outputs to lie in a bounded range $[0, R_{\max}]$, which is a mild assumption in practice as reward models for language alignment are typically trained with bounded output heads (e.g., sigmoid or clipped outputs)~\cite{huang2025bon}.

\begin{remark}[Ex-post vs.~ex-ante accounting]
\label{rem:ex-post-vs-ex-ante}
Theorem~\ref{thm:privitp_priv} gives an ex-post guarantee: the privacy cost depends on
the realized halting time $t$. For worst-case accounting, $\epsilon_2^{\mathrm{post}}(n)$
upper-bounds the per-query cost. In practice, rejection happens quickly when the reward
model is informative ($\mathbb{E}[t] = O(1)$), so the ex-post bound is substantially tighter
and enables the composition of Section~\ref{sec:composition}.
\end{remark} 

\textbf{Regret guarantee.} We now analyze the regret of PrivITP against an arbitrary comparator policy $\pi^*$. The central result is that PrivITP achieves the information-theoretic skyline (Proposition~\ref{prop:skyline})with a noise inflation and it does so while satisfying pure ex-post $\varepsilon$-DP (\Cref{thm:privitp_priv}).

\begin{theorem}[PrivITP regret]
\label{thm:privitp_regret}
Let $r^*(x,y) \in [0, R_{\max}]$ be the true (unknown) reward and $\hat{r}_D \colon \mathcal{X} \times \mathcal{Y} \to [0,R_{\max}]$ be a reward model trained on a dataset $D$ with mean-squared error $\varepsilon_{\mathrm{RM}}^2(x) = \E_{y \sim \pi_0(x)}[(\hat{r}_D(x,y)-r^*(x,y))^2]$.
Let $\pi^*$ be any comparator policy with finite coverage $C^{\pi^*}(x) < \infty$. For regularization $\beta \ge 2\sigma_X$, truncation parameter $L=O(\log n)$ and $n = \max \left\lbrace\Omega\!\left(\frac{R_{\max}}{\beta}\log\!\left(\frac{R_{\max}}{\beta \,\epsilon_{\mathrm{RM}}}\right)\right), \widetilde \Omega\left(\frac{R_{\max}+\sigma_Z}{\beta}\right)\right\rbrace$, PrivITP (\Cref{alg:privitp}) satisfies
\begin{equation}
\label{eq:privitp_regret_general}
\mathrm{Reg}(\pi_{\mathrm{PrivITP}};x) \lesssim \underbrace{\beta \cdot C^{\pi^*}(x)}_{\chi^2 \text{ bias}} + \underbrace{\beta^{-1}(\varepsilon_{\mathrm{RM}}^2(x) + R_{\max}\sigma)}_{\text{overoptimization+noise}}+ \underbrace{\sqrt{C^{\pi^*}(x)}\varepsilon_{\mathrm{RM}}(x)}_{\text{irreducible}} +\underbrace{R_{\max}/\sqrt{n}}_{\text{finite sample}},
\end{equation}
where $\sigma = \sigma_X+\sigma_Z$ denotes the total noise level.
\end{theorem}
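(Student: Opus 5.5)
We will prove the bound by comparing PrivITP with the noiseless $\chi^2$-regularized policy of \citet{huang2025bon}. The comparison goes through an intermediate ``smoothed'' population policy, so that the privacy noise enters only as a perturbation of the acceptance weights.

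\textbf{Step 1: reference policies.} Define the population $\chi^2$-regularized policy
\[
\pi_\beta(y|x)=\pi_0(y|x)\,\mathrm{relu}\bigl(\beta^{-1}(\hat r_D(x,y)-\lambda^\star)\bigr),
\]
where $\lambda^\star$ is fixed by normalization. Define also its noisy analogue
\[
\tilde\pi(y|x)\propto\pi_0(y|x)\,\bar w(y), \qquad \bar w(y)=\E_{Z\sim\mathcal N(0,\sigma_Z^2)}\bigl[\mathrm{relu}\bigl(\beta^{-1}(\hat r_D(x,y)+Z-\tilde\lambda)\bigr)\bigr].
\]
Conditional on $\tilde\lambda$, the Phase~2 rejection sampler with envelope $M$ returns a draw from $\tilde\pi$, apart from two events. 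The first is that all $n$ candidates are rejected. The second is that truncation fails, i.e.\ $|Z|>\sigma_Z L$, so that $w_i/M$ exceeds $1$.

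We decompose the regret as
\[
\mathrm{Reg}=\bigl[J(\pi^*)-J(\pi_\beta)\bigr]+\bigl[J(\pi_\beta)-J(\tilde\pi)\bigr]+\bigl[J(\tilde\pi)-J(\pi_{\mathrm{PrivITP}})\bigr].
\]
The first bracket is handled exactly by the analysis of InferenceTimePessimism in \citet{huang2025bon}. It uses the first-order optimality of $\pi_\beta$ for $\max_\pi \E_\pi[\hat r_D]-\beta\chi^2(\pi\|\pi_0)$ together with Cauchy--Schwarz, $\E_{\pi}[\varepsilon]\le\sqrt{C^{\pi}}\,\varepsilon_{\mathrm{RM}}$. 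This produces $\beta C^{\pi^*}+\beta^{-1}\varepsilon_{\mathrm{RM}}^2+\sqrt{C^{\pi^*}}\varepsilon_{\mathrm{RM}}$, where the $\beta^{-1}\varepsilon_{\mathrm{RM}}^2$ term arises from bounding $C^{\pi_\beta}$ via AM--GM. We will assume this step rather than rederive it.

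\textbf{Step 2: smoothed-weight policy comparison.} This is the key step and the main obstacle. We will prove a lemma: if two policies have the form $\pi_0 w/\E_{\pi_0}[w]$ and $\pi_0 w'/\E_{\pi_0}[w']$ with $\E_{\pi_0}[w]=1$, then
\[
|J(\pi_w)-J(\pi_{w'})|\lesssim R_{\max}\,\|w-w'\|_\infty .
\]
To see this, write $J(\pi_w)-J(\pi_{w'})=\E_{\pi_0}[r^*(w-w')]$ plus a normalization correction. Both terms are bounded by $R_{\max}\|w-w'\|_\infty$, using $r^*\le R_{\max}$ and $|\E_{\pi_0}[w']-1|\le\|w-w'\|_\infty$.

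We apply the lemma with $w=\mathrm{relu}(\beta^{-1}(\hat r-\lambda^\star))$ and $w'=\bar w$. Since relu is $1$-Lipschitz, Jensen gives
\[
\|w-\bar w\|_\infty\le\beta^{-1}\bigl(|\tilde\lambda-\lambda^\star|+\E|Z|\bigr).
\]
Here $\E|Z|\lesssim\sigma_Z$. The difference $|\tilde\lambda-\lambda^\star|$ splits into $|\lambda-\lambda^\star|$ and $\sigma_X$-noise. The term $|\lambda-\lambda^\star|$ comes from finite-sample concentration of the empirical normalization equation; since $\lambda\mapsto\frac1n\sum_i \mathrm{relu}(\beta^{-1}(r_i-\lambda))$ has slope of order $\beta^{-1}$, this gives an $R_{\max}/\sqrt n$-type error. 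The result is a contribution of order $\beta^{-1}R_{\max}(\sigma_X+\sigma_Z)$, which is the stated noise term. The condition $\beta\ge2\sigma_X$ keeps $\tilde\lambda$ within $O(\beta)$ of $\lambda^\star$ with high probability. This ensures $\E_{\pi_0}[\bar w]$ stays bounded away from $0$ and $M=O(R_{\max}/\beta)$, so the normalization in the lemma is controlled.

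The delicate point is that a crude $\ell_\infty$ bound would not exploit pessimism. We therefore need the comparison to hold relative to $R_{\max}$ rather than to $C^{\pi^*}$, which is precisely why the noise appears as $R_{\max}\sigma/\beta$ and not multiplied by coverage. If a direct $\ell_\infty$ argument loses a factor $M$ through normalization, we will fall back to bounding the normalization in $L_1(\pi_0)$ instead.

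\textbf{Step 3: rejection-sampling error.} Each candidate is accepted with probability $\E_{\pi_0}[\bar w]/M\gtrsim\beta/(R_{\max}+\sigma_Z L)$. Hence all $n$ candidates are rejected with probability at most $\exp(-n\beta/(R_{\max}+\sigma_Z L))$, and each such failure costs at most $R_{\max}$. Taking $L=O(\log n)$ makes the Gaussian truncation-failure probability $\mathrm{poly}(1/n)$, also at cost $R_{\max}$. The stated lower bounds on $n$ make both failure contributions $\lesssim \varepsilon_{\mathrm{RM}}+R_{\max}/\sqrt n$, where the $\log(R_{\max}/(\beta\varepsilon_{\mathrm{RM}}))$ factor in the condition on $n$ is what is needed.

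Summing the three brackets gives \eqref{eq:privitp_regret_general}.
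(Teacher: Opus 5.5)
Your decomposition and tools mostly match the paper's proof: the $\ell_\infty$ smoothed-weight lemma, the Lipschitz/Jensen control of the weight shift caused by $\zeta$ and by the Phase~2 Gaussian noise, and the truncation/fallback analysis are the paper's terms (B1)--(B3). One small correction: on the success event the output follows the \emph{truncated} target $\propto\pi_0\,\E_g[W\mathbf{1}\{W\le M\}]$, not $\tilde\pi$. The difference is $O(\delta/n)$ in $\ell_\infty$, and your lemma absorbs it.

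The genuine gap is your choice of anchor. Because you compare against the \emph{population}-threshold policy $\pi_\beta$, you must pay for $|\lambda-\lambda^\star|$, where $\lambda$ is the Phase~1 empirical root. Your handling of this term does not give $R_{\max}/\sqrt n$, for three reasons:
\begin{itemize}
\item The slope of $\lambda\mapsto\E_{\pi_0}[\mathrm{relu}(\beta^{-1}(\hat r_D-\lambda))]$ is $-\beta^{-1}\Pr_{\pi_0}[\hat r_D>\lambda]$. Normalization only forces $\Pr_{\pi_0}[\hat r_D>\lambda^\star]\ge\beta/(R_{\max}+\beta)$, so the slope can be as small as $1/(R_{\max}+\beta)$, not $\beta^{-1}$.
\item The empirical normalization fluctuates by $\sqrt{\Var_{\pi_0}(w)/n}$, and $\Var_{\pi_0}(w)$ can be of order $R_{\max}/\beta$.
\item The resulting threshold error is divided by $\beta$ in $\|w-\bar w\|_\infty$ and then multiplied by $R_{\max}$ in the lemma.
\end{itemize}
Together these give a finite-sample term as large as $R_{\max}(R_{\max}/\beta)^{3/2}/\sqrt n$. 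Even granting your own estimate $|\lambda-\lambda^\star|\lesssim R_{\max}/\sqrt n$, the induced term $R_{\max}|\lambda-\lambda^\star|/\beta\approx R_{\max}^2/(\beta\sqrt n)$ is missing from your final tally.

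This is not an artifact of the $\ell_\infty$ (or $L_1$) comparison. Take $\hat r_D=R_{\max}\mathbf{1}_A$ with $\pi_0(A)=\beta/(2R_{\max})$, and let $r^*$ equal $R_{\max}$ on $A$ and a small constant off $A$. Then $\lambda^\star=-\beta/2$, $\pi_\beta(A)\approx1/2$, $\Var_{\pi_0}(w)\asymp R_{\max}/\beta$, and $\partial J/\partial\lambda\asymp R_{\max}/\beta$ at $\lambda^\star$. So the typical realized value gap between the population- and empirical-threshold policies is of order $R_{\max}\sqrt{R_{\max}/(\beta n)}$. At the theorem's minimal $n\asymp(R_{\max}/\beta)\log(\cdot)$, that is a constant fraction of $R_{\max}$ up to a logarithm. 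The same issue undercuts your claim that $\tilde\lambda$ stays within $O(\beta)$ of $\lambda^\star$, which is what your lower bound $\E_{\pi_0}[\bar w]\gtrsim1$ on the acceptance probability in Step~3 rests on.

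The paper avoids all of this by anchoring at the \emph{empirical} threshold. Its reference policy $\pi_\chi^\beta$ is built from the Phase~1 root itself, and term (A) is imported from the finite-sample Theorem~4.1 of \citet{huang2025bon}, which is the source of the condition $n=\Omega(\frac{R_{\max}}{\beta}\log\frac{R_{\max}}{\beta\varepsilon_{\mathrm{RM}}})$. Only $\zeta$, the Phase~2 smoothing, and truncation/fallback are then treated as $\ell_\infty$ perturbations---exactly your Steps~2--3 without the $|\lambda-\lambda^\star|$ term.

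To make your version self-contained, use positive homogeneity of relu. For any $\lambda$, $\pi_0\,\mathrm{relu}(\beta^{-1}(\hat r_D-\lambda))/Z_\lambda=\pi_0\,\mathrm{relu}((\beta Z_\lambda)^{-1}(\hat r_D-\lambda))$ with $Z_\lambda=\E_{\pi_0}[\mathrm{relu}(\beta^{-1}(\hat r_D-\lambda))]$. So the empirical-threshold policy is the exact $\chi^2$-regularized optimum at regularization $\beta Z_\lambda$. You then need only $Z_\lambda\in[1/2,2]$ with high probability. This follows for $n\gtrsim(R_{\max}/\beta)\log(\cdot)$ from multiplicative concentration of the empirical normalization at the two points where the population normalization equals $2$ and $1/2$. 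Your Step~1 bound then holds at constant-factor cost. Moreover, $Z_\lambda\ge1/2$ is exactly what the comparison lemma and the acceptance-probability bound need, with no control of $|\lambda-\lambda^\star|$ at all.
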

As with PrivBoN, the bound in~\eqref{eq:privitp_regret_general} is non-increasing in $n$ for $n \geq n_0(\beta, \sigma_Z, R_{\max}, \epsilon_{\mathrm{RM}})$, ruling out the reward-hacking regime where adding candidates provably degrades performance. 

\textbf{Optimal regularization.} The first two terms exhibit a bias-variance tradeoff in $\beta$: a large $\sigma$ (strong regularization) incurs bias from under-exploiting the reward signal, whereas a small $\beta$ (weak regularization) incurs overoptimization from exploiting reward model errors.
Balancing the first two terms yields $\beta^*(x) = \sqrt{(\varepsilon_{\mathrm{RM}}^2(x)+R_{\max}\sigma)/C^{\pi^*}(x)}$, at which the regret,
\begin{equation*}
\label{eq:privitp_regret_opt}
\mathrm{Reg}(\pi_{\mathrm{PrivITP}};x) \leq 3\sqrt{C^{\pi^*}(x)\big(\varepsilon_{\mathrm{RM}}^2(x) + R_{\max}\sigma\big)}~,
\end{equation*}
matches the information-theoretic skyline (proposition~\ref{prop:skyline}) in the large-$n$ limit (up to the noise inflation due to privacy). Proofs are deferred to Appendix~\ref{app:proofs_itp}.
\begin{remark}[Compatibility of $\beta^* \geq 2\sigma_X$]
\label{rem:beta-sigma-compat}
The condition $\beta \geq 2\sigma_X$ in Theorem~\ref{thm:privitp_regret} is satisfied by $\beta^*(x)$ whenever $\sigma_X \leq \tfrac{1}{2}\sqrt{(\epsilon_{\mathrm{RM}}^2 + R_{\max}\sigma)/C^{\pi^*}(x)}$, a mild bound translating to a lower bound on $\epsilon_1$ in the privacy-calibrated regime. When violated, setting $\beta = 2\sigma_X \geq \beta^*(x)$ recovers the bound with an additive $2\sigma_X C^{\pi^*}(x)$ penalty in the $\chi^2$-bias term.
\end{remark}
\textbf{Privacy-mandated noise.} The Phase~2 query noise $\sigma_Z$ governs the pure ex-post pure-DP guarantee of \Cref{thm:privitp_priv}: smaller $\sigma_Z$ yields larger per-query privacy cost $\varepsilon_2^{\mathrm{post}}(t)$. To quantify the trade-off, we use the order-of-magnitude relation obtained by linearizing~\eqref{eq:privitp_expost} in the small-$\Delta_r/\sigma_Z$ regime: the per-round log-ratio is of order $\Delta_r/\sigma_Z$, giving $\varepsilon_2^{\mathrm{post}}(t) \asymp t\cdot\Delta_r/\sigma_Z$, or equivalently
$\sigma_Z \asymp t\cdot\Delta_r/\varepsilon_2^{\mathrm{post}}(t)$.
For Phase 1 per-query privacy cost $(\varepsilon_1,\delta)$, we set noise $\sigma_X \asymp \Delta_r\sqrt{\ln(1/\delta)}/\varepsilon_1$.
Substituting into~\eqref{eq:privitp_regret_opt} yields the privacy-mandated regret bound
\begin{equation}
\label{eq:privitp_regret_priv}
\mathrm{Reg}(\pi_{\mathrm{PrivITP}};x) \leq 3\sqrt{C^{\pi^*}(x)\!\left(\varepsilon_{\mathrm{RM}}^2(x) + \frac{R_{\max}\cdot t\Delta_r}{\varepsilon_2^{\mathrm{post}}(t)}+ \frac{R_{\max}\Delta_r\sqrt{\ln(1/\delta)}}{\epsilon_1}\right)}.
\end{equation}
\textbf{On the relationship to prior work.}
PrivITP combines the $\chi^2$-regularized rejection sampling of \citet{huang2025bon} with the Gaussian above-threshold mechanism of \citet{lebensold2024privacy}. The technical novelty lies in the analysis of privacy and regret: (i) a reformulation lemma mapping PrivITP's ReLU acceptance weights onto a randomized-threshold mechanism, enabling an $n$-independent ex-post DP bound (Lemma~\ref{lem:reformulation}); (ii) a smoothed-weight policy comparison lemma unifying regret analysis across threshold noise, query noise, and finite-sample effects (Lemma~\ref{lem:smooth_weight}); and (iii) a regret-optimality proof (Theorem~\ref{thm:privitp_regret}) showing PrivITP matches \citet{huang2025bon} skyline up to an $R_{\max}\sigma$ noise-inflation term that cleanly separates alignment error from privacy cost. 

\subsection{Multi-Query Deployment}
\label{sec:composition}

In deployment, an aligned language model answers a stream of $T$ prompts and produces a transcript of responses $(y^{*(1)}, \ldots, y^{*(T)})$. Each query consumes some privacy budget; the question is how to allocate this budget across queries to maximize alignment quality over the lifetime of the deployment. PrivITP's ex-post privacy guarantee (\Cref{thm:privitp_priv}) provides a structural advantage: the privacy cost of each query depends on the actual halting time $t$, not on the worst-case budget $n$. When rejection sampling accepts early -- the typical case for informative reward models, where $\E[t] = O(1)$ --  the per-query privacy cost is much smaller than the worst-case bound. The FSRC framework of~\citet[Alg.~2]{lebensold2024privacy} is designed to exploit this structure: it composes mechanisms with ex-post privacy guarantees and halts when the cumulative privacy cost exceeds the total budget.

\begin{proposition}[PrivITP composition via FSRC]
\label{thm:privitp_fsrc_comp}
\Cref{alg:privitp_fsrc} satisfies $(\varepsilon_{\mathrm{total}}, \delta)$-DP under fully-adaptive query selection. The expected number of queries answered before halting is $\E[T^*] \geq \frac{\varepsilon_{\mathrm{total}}}{\E[\varepsilon_\tau]}$,
where $\E[\varepsilon_\tau]$ is the expected per-query privacy cost under the distribution of halting times $t_\tau$ induced by the noisy reward distribution. 
\end{proposition}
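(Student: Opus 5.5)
The proof has two independent parts. The privacy claim reduces to the FSRC composition theorem of \citet[Alg.~2]{lebensold2024privacy}. The expected-query bound is a stopping-time argument via Wald's identity.

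\textbf{Privacy.} We view \Cref{alg:privitp_fsrc} as FSRC run over a stream of PrivITP invocations whose prompts are chosen adaptively.

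1. For each query $\tau$, \Cref{thm:privitp_priv} gives a pure ex-post Phase~2 loss $\varepsilon_2^{\mathrm{post}}(t_\tau)$. This loss is a deterministic function of the released transcript, namely $\tilde\lambda_\tau$ and the halting time $t_\tau$, which is read off from the output. It can therefore be used as the ex-post privacy filter's loss function.

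2. Phase~1 is a standard $(\varepsilon_1,\delta)$ Gaussian mechanism. I would either charge it as a fixed per-query cost $\varepsilon_1$ inside $\varepsilon_\tau=\varepsilon_1+\varepsilon_2^{\mathrm{post}}(t_\tau)$, or reserve its $\delta$ globally. Following the algorithm, the cleanest route is to account all $\delta$ mass through a single Gaussian-filter term, so that the total failure probability stays $\delta$ rather than $T\delta$. FSRC is stated for mechanisms whose ex-post losses are valid conditional on the past, and it halts once the running sum would exceed $\varepsilon_{\mathrm{total}}$.

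3. Under that hypothesis, FSRC's theorem yields $(\varepsilon_{\mathrm{total}},\delta)$-DP under fully adaptive selection. What must be checked is the conditional validity of each step: the fresh noise in both phases is independent of the history, and the ratio bound of \Cref{thm:privitp_priv} holds pointwise for every fixed $\tilde\lambda$ and every prompt. Hence the per-step privacy-loss random variable is bounded by $\varepsilon_\tau$ conditionally on $\mathcal{F}_{\tau-1}$.

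\textbf{Expected number of queries.} Let $T^*$ be the halting index. Since $\varepsilon_\tau\ge 0$, FSRC answers queries until $\sum_{\tau\le T^*}\varepsilon_\tau$ first reaches $\varepsilon_{\mathrm{total}}$. Up to the overshoot at the last query, this gives $\sum_{\tau=1}^{T^*}\varepsilon_\tau \ge \varepsilon_{\mathrm{total}}$.

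$T^*$ is a stopping time with respect to the filtration of transcripts. The costs $\varepsilon_\tau$ have common conditional mean $\E[\varepsilon_\tau]$, and I would assume i.i.d.\ or stationary halting-time distributions, as in the statement. Wald's identity then gives $\E[T^*]\,\E[\varepsilon_\tau]=\E\bigl[\sum_{\tau\le T^*}\varepsilon_\tau\bigr]\ge\varepsilon_{\mathrm{total}}$, which rearranges to the claim. Wald applies because $T^*$ is integrable, which follows from $\E[\varepsilon_\tau]>0$ ($\varepsilon_1>0$ is a floor).

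\textbf{Main obstacle.} The delicate step is the direction and precise form of the inequality. FSRC halts \emph{before} exceeding the budget, which points to $\sum_{\tau<T^*}\varepsilon_\tau\le\varepsilon_{\mathrm{total}}$. Applied through Wald, that yields an upper bound on $\E[T^*]$, not a lower bound.

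To get the stated lower bound, I would define $T^*$ as the index of the first query whose cost would exceed the remaining budget, counted as ``attempted.'' Then $\sum_{\tau\le T^*}\varepsilon_\tau>\varepsilon_{\mathrm{total}}$ holds, and Wald gives exactly $\E[T^*]\ge\varepsilon_{\mathrm{total}}/\E[\varepsilon_\tau]$. If the algorithm counts only answered queries, I would state the bound as $\E[T^*]\ge\varepsilon_{\mathrm{total}}/\E[\varepsilon_\tau]-1$ and note that it matches the claim to leading order.
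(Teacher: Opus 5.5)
Your overall route is the paper's: privacy by appeal to the FSRC theorem of Lebensold et al., and the query count by Wald's identity. The gap is in the Wald step, where you misread the stopping rule. \Cref{alg:privitp_fsrc} does not run until the realized cumulative cost reaches $\varepsilon_{\mathrm{total}}$. Before each query it checks $\varepsilon_{\mathrm{spent}}+\varepsilon_{\max}\ge\varepsilon_{\mathrm{total}}$ using the \emph{worst-case} cost, and it never observes the realized cost of a query it declines. Let $S_k=\sum_{\tau\le k}\varepsilon_\tau$, and let $T^*=\min\{k\ge 0: S_k\ge\varepsilon_{\mathrm{total}}-\varepsilon_{\max}\}$ be the number of answered queries, which is a stopping time. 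The rule then gives exactly $\varepsilon_{\mathrm{total}}-\varepsilon_{\max}\le S_{T^*}<\varepsilon_{\mathrm{total}}$. The upper bound holds because $S_{T^*-1}<\varepsilon_{\mathrm{total}}-\varepsilon_{\max}$ and $\varepsilon_{T^*}\le\varepsilon_{\max}$. Wald's identity then yields
\[
\frac{\varepsilon_{\mathrm{total}}-\varepsilon_{\max}}{\E[\varepsilon_\tau]}\;\le\;\E[T^*]\;<\;\frac{\varepsilon_{\mathrm{total}}}{\E[\varepsilon_\tau]}.
\]
The left inequality is exactly what the paper's proof obtains, after which it explicitly drops the boundary term $\varepsilon_{\max}/\E[\varepsilon_\tau]$. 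The right inequality shows that the displayed bound in the statement can hold only modulo that term.

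So your suspicion about the direction is justified, but neither of your repairs works. Counting the declined query as ``attempted'' adds only $1$ to $T^*$. Your fallback $\E[T^*]\ge\varepsilon_{\mathrm{total}}/\E[\varepsilon_\tau]-1$ is false: if every realized cost equals $1$, $\varepsilon_{\max}=10$ and $\varepsilon_{\mathrm{total}}=50$, the algorithm answers exactly $40$ queries, not at least $49$. The correct correction is $\varepsilon_{\max}/\E[\varepsilon_\tau]$. That is precisely the ratio the paper argues is large in the favorable regime, so it cannot be absorbed as an $O(1)$ overshoot.

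On privacy, you and the paper both defer to FSRC, so the routes agree. Your step~3 claim, however, that the per-query privacy loss is bounded by $\varepsilon_\tau$ conditionally on the past, holds only for Phase~2. Phase~1 is a Gaussian mechanism whose privacy loss is unbounded; it is at most $\varepsilon_1$ only outside an event of probability about $\delta$. Without an extra ingredient, the slack accumulates to $T^*\delta$ rather than $\delta$. One such ingredient would be a single R\'enyi/zCDP filter for the $\tilde\lambda$ releases, converted to $(\varepsilon,\delta)$ once at the end. Your step~2 names this issue but does not resolve it; the paper's one-line treatment (``the pDP bound handles the tail risk'') is no more detailed.
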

In the favorable regime where $\E[\varepsilon_\tau] \ll \varepsilon_{\max}$, where
$\epsilon_{\max} := \epsilon_1 + \epsilon_2^{\mathrm{post}}(n)$ is the worst-case
per-query cost, FSRC-composed PrivITP answers $\Theta(\varepsilon_{\max}/\E[\varepsilon_\tau])$ times more queries than standard composition for the same total budget. FSRC strictly dominates privacy filters~\citep{rogers2023adaptive}, which must charge the worst-case per-query cost rather than the realized ex-post cost (see Remark~\ref{rem:fsrc_vs_filters})

\section{Experiments}
\label{sec:experiments}

\begin{figure}
    \centering
    \begin{subfigure}[t]{\textwidth}
        \centering
        \includegraphics[width=\textwidth]{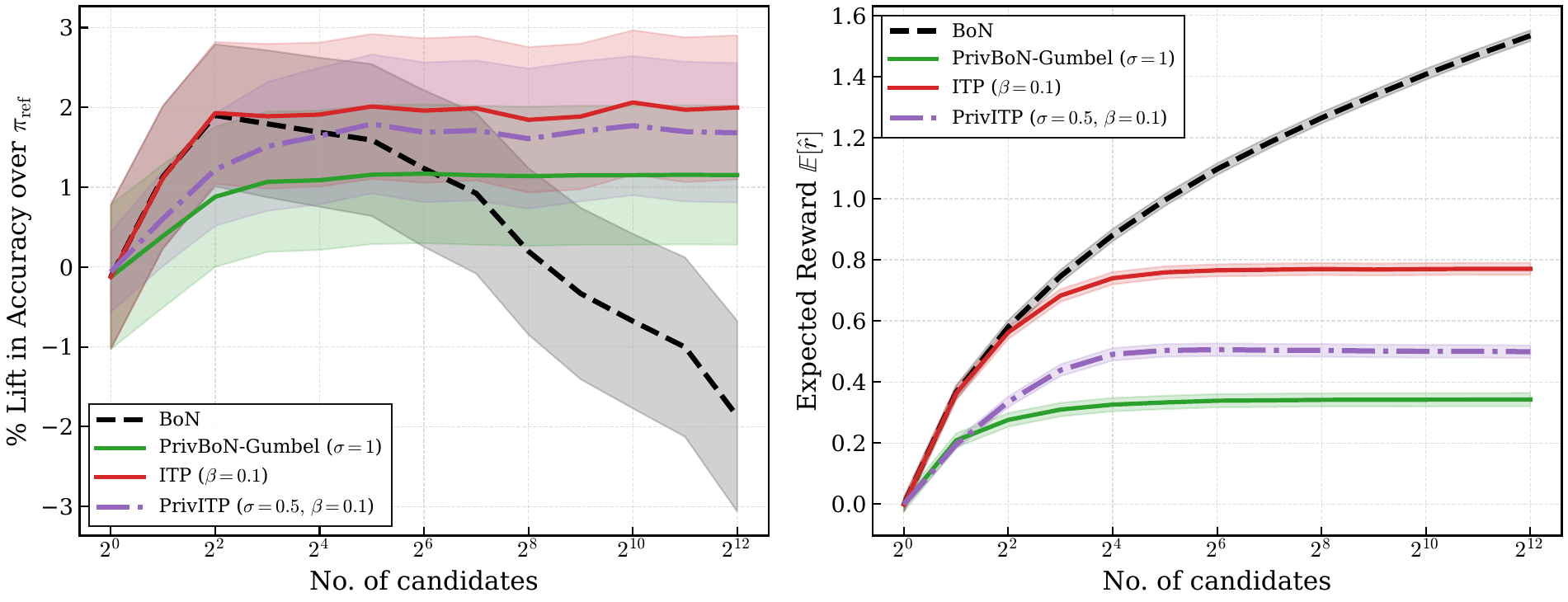}
    \end{subfigure}
    \vskip -0.5mm
    \begin{subfigure}[t]{0.49\textwidth}
        \centering
        \includegraphics[width=\textwidth]{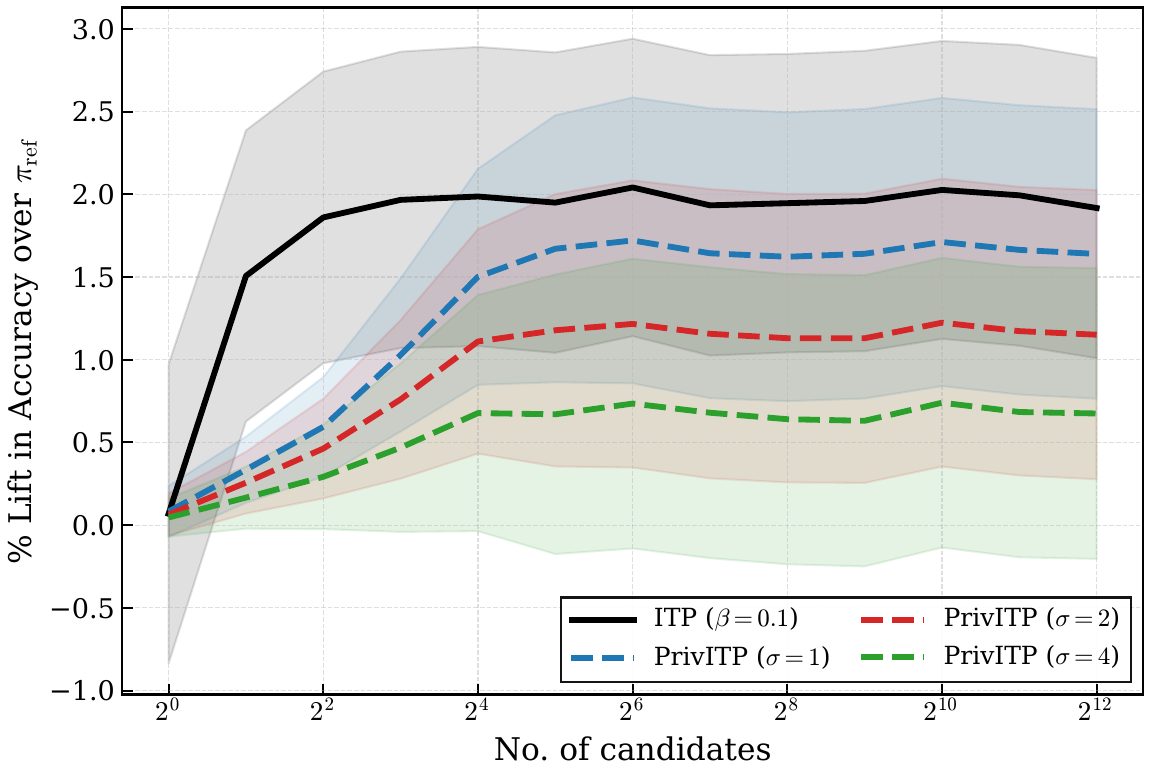}
    \end{subfigure}
     \hfill
    \begin{subfigure}[t]{0.49\textwidth}
        \centering
\includegraphics[width=\textwidth,height=4.4cm]{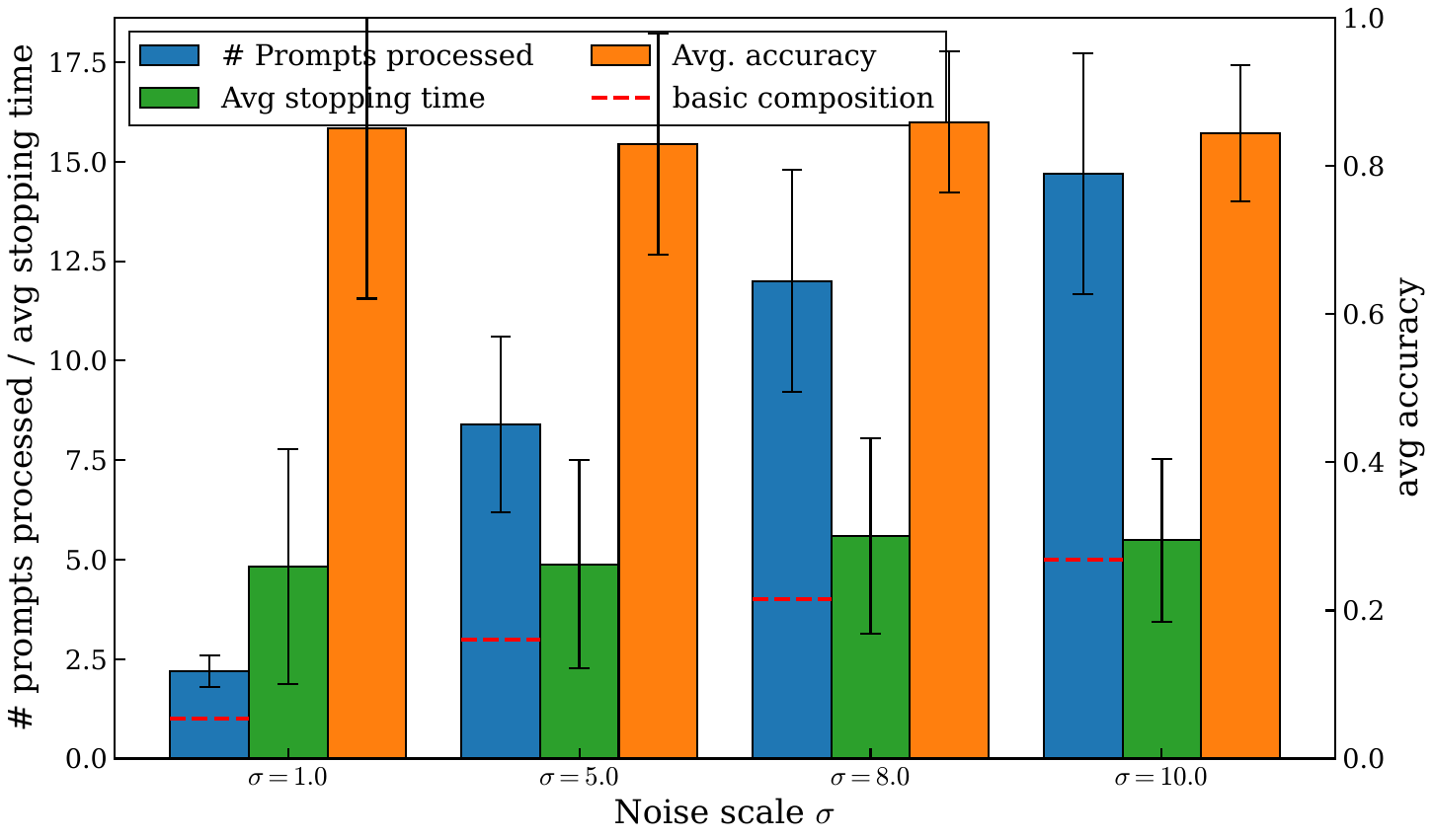}
    \end{subfigure}
    \vskip -1mm
    \caption{\footnotesize{Experiments for Phi-3-Mini-Instruct policy and Gemma-RM on the GSM8K test split. \emph{Top row} demonstrates reward hacking in BoN and its mitigation by PrivBoN, ITP, and PrivITP. \emph{Bottom left} shows privacy vs. utility trade-off in ITP: low noise (low privacy) leads to better accuracy, and vice versa. \emph{Bottom right} demonstrates the advantage of FSRC composition in PrivITP's ex-post privacy bound compared to the standard composition in ex-ante bound.}}
    \label{fig:main_all}
\end{figure}

\begin{table}[htbp]
\centering
\caption{Percentage lift in accuracy over base policy $\text{Phi-3-Mini-Instruct}$ at $n = 2^{12}$
candidates. Values are mean $\pm$ standard
error over all prompts across the GSM8K, MMLU, and MATH test splits. Same (RM, dataset) configuration uses the same noise $\sigma$ for private variants and the same regularization $\beta$ for ITP variants. 
}
\label{tab:rm_dataset_sweep_new}
\begin{tabular}{@{}llcccc@{}}
\toprule
\textbf{Dataset} & \textbf{Algorithm} & \textbf{Oasst-RM} & \textbf{Gemma-RM} & \textbf{Llama-RM} & \textbf{Armo-RM} \\
\midrule
\multirow{4}{*}{GSM8K}
 & BoN     & $-5.18 \pm 1.23$ & $-1.87 \pm 1.20$  & $7.64 \pm 0.82$ & $6.47 \pm 0.86 $ \\
 & PrivBoN & $0.32 \pm 0.91$ & $1.15 \pm 0.87$ & $6.90 \pm 0.76$ & $5.78 \pm 0.78 $ \\
 & PrivITP & $0.81 \pm 0.89$ & $1.68 \pm 0.87$ & $7.46 \pm 0.78 $ & $6.63 \pm 0.78$ \\
 \cmidrule(l){2-6}
 & ITP     & $0.85 \pm 0.90$ & $ 2.00 \pm 0.90$ & $7.61 \pm 0.79  $ & $6.81 \pm 0.79 $ \\
\midrule
\multirow{4}{*}{MMLU}
 & BoN     & $2.54 \pm 2.74$& $5.45 \pm 2.94 $& $-12.74 \pm 2.92$& $8.40 \pm 4.37$\\
 & PrivBoN & $2.32 \pm 2.22$& $4.22 \pm 2.25 $& $1.88 \pm 2.24$& $3.37 \pm 3.22$\\
 & PrivITP & $3.80 \pm 2.28 $& $6.09 \pm 2.33 $& $2.76 \pm 2.29$ & $5.59 \pm 3.40$\\
 \cmidrule(l){2-6}
 & ITP     & $5.09 \pm 2.35$&$7.35 \pm 2.46 $ & $4.69 \pm 2.40$& $8.24 \pm 3.74$\\
\midrule
\multirow{4}{*}{MATH}
 & BoN     & $-5.21 \pm 1.60$ & $2.10 \pm 1.79 $ & $10.80 \pm 1.98$ & $6.10 \pm 1.91$ \\
 & PrivBoN & $1.33 \pm 1.55$ & $2.94 \pm 1.61$ & $9.14 \pm 1.73$ & $4.51 \pm 1.63$ \\
 & PrivITP & $1.72 \pm 1.58$ & $3.38 \pm 1.62$ & $10.28 \pm 1.79$ & $7.04 \pm 1.70$ \\
 \cmidrule(l){2-6}
 & ITP     & $1.64 \pm 1.59 $ & $3.45 \pm 1.63$ & $10.68 \pm 1.81$ & $7.80 \pm 1.77$ \\
\bottomrule
\end{tabular}
\end{table}

We validate our theoretical results following the experimental setup
of~\citet{huang2025bon}: four reward models (Oasst (1.4B), Gemma (2B), Llama (3B), and Armo (7B)) and two base LMs (Phi3-Mini-Instruct and Gemma-2-2B-Instruct) evaluated on test splits of GSM8K~\citep{cobbe2021gsm8k}, MATH-500~\citep{hendrycks2021math}, and math, chemistry splits of MMLU~\citep{hendrycks2020mmlu}. For each prompt, we sample $10000$
responses from base policy at temperature~$1$ with zero-shot CoT
prompting~\citep{wei2022cot}, then bootstrap $M = 50$ replicates of each sample to run each of the four algorithms: BoN, PrivBoN, ITP, and PrivITP.
Unless otherwise mentioned, private algorithms use the same noise $\sigma$ (for PrivITP, we set $\sigma_X=\sigma_Z=\sigma/2$), and ITP variants use the same regularization $\beta$ on the same (RM, dataset) configuration. Ablation studies on the noise level $\sigma$ (equivalently, on privacy budget $\epsilon$) and regularization $\beta$ are presented in Appendix~\ref{app:experiments}.

\textbf{Scaling behavior across algorithms
(Theorems~\ref{thm:gumbel_regret},~\ref{thm:privitp_regret}).}
\Cref{fig:main_all}(top row) shows \% percentage
lift in accuracy over base policy Phi3-Mini and expected proxy reward under Gemma-RM for GSM8K as we vary $n$. We observe (i)~BoN's accuracy rises then falls with $n$, exhibiting reward
hacking, while its proxy reward rises monotonically---the signature of overfitting to
$\hat r_\mathcal{D}$; (ii)~PrivBoN and PrivITP accuracies are monotone in $n$, confirming hacking mitigation; (iii)~PrivITP matches or exceeds PrivBoN at the same noise level. \Cref{fig:main_all}(bottom left) shows that PrivITP
approaches the non-private ITP skyline as noise (and hence, privacy) decreases. Figures for other RMs are in Appendix~\ref{app:experiments}.

\textbf{Generalization across RMs and datasets.}
At $n = 2^{12}$, we report the percentage lift in accuracy over Phi3-Mini-Instruct base policy for each (algorithm, RM, dataset) configuration in
\Cref{tab:rm_dataset_sweep}. PrivITP dominates PrivBoN on every
(RM, dataset) configuration, and on GSM8K it recovers between 84\% and 98\%
of the non-private ITP skyline across all four reward models. On MMLU and
MATH, PrivITP again closely tracks ITP within standard error. The reward-hacking
signature in BoN is most pronounced for the weaker RMs (Oasst, Gemma), while PrivITP recovers
near-skyline performance in each case. For stronger RMs
(Llama, Armo) on GSM8K and MATH, BoN remains competitive at $n = 2^{12}$, which indicates that reward-hacking pressure is
delayed when the base policy has lower accuracy. Results for Gemma2-2B-Instruct base model are in Appendix~\ref{app:experiments}.



\textbf{Multi-query deployment via FSRC
(Proposition~\ref{thm:privitp_fsrc_comp}).}
To validate the composition advantage of PrivITP's ex-post structure, we compare the number of GSM8K queries answered before budget exhaustion under
(a)~FSRC-composed PrivITP, (b)~PrivITP basic composition (worst-case per-query
cost $\varepsilon_{\max} = \varepsilon_1 + \varepsilon_2^{\mathrm{post}}(n)$) sweeping over 
noise $\sigma$. We fix $n=16$, $\beta=0.05$ and total privacy budget $\varepsilon_{\text{total}} = 50$, 
Under Gemma-RM and across 10 random seeds, \Cref{fig:main_all}(bottom right) shows that FSRC answers $\sim\!3\times$ more
queries, with empirical halting time $t_\tau \ll n$,
confirming Remark~\ref{rem:ex-post-vs-ex-ante}. Similar behavior is observed for other RMs, and plots are in Appendix~\ref{app:experiments}.




\textbf{Conclusion.} We have shown that differential privacy and reward-hacking mitigation are two instances of
the same intervention: softening the hard argmax in Best-of-N selection. For PrivBoN, the privacy-mandated scale coincides with the regret-optimal KL temperature above a threshold
$\varepsilon^*$, making privacy ``free''; PrivITP extends this to $\chi^2$-regularized rejection
sampling, decoupling the privacy and regularization parameters, and yielding ex-post DP with
$n$-independent composition. 
Open directions include extending to sequential test-time alignment~\citep{yu2025limits} and
connecting inference-time privacy to training-time private alignment~\citep{yu2021differentially}
for end-to-end guarantees.

\bibliographystyle{plainnat}
\bibliography{references}

\appendix
\section{Related work}

\emph{Best-of-$N$ theory.} Beirami et al.~\citep{beirami2024theoretical} and Yang et al.~\citep{yang2024asymptotics} analyze BoN's KL-reward tradeoff. Verdun et al.~\citep{verdun2025soft} introduce Soft BoN (SBoN), showing $O(1/n)$ convergence to the KL-regularized optimum. Aminian et al.~\citep{aminian2025smoothing} derive KL and regret bounds for SBoN under proxy reward error. Our MBON mechanisms are algorithmically equivalent to SBoN; the novelty lies in the privacy derivation of $\sigma$, the regret optimality proof, and the PrivITP extension. Khalaf et al.~\citep{khalaf2025inference} characterize inference-time reward hacking; our reward hacking bounds (\Cref{sec:hacking}) complement their results by quantifying how noise injection caps the hacking gap. Yu et al.~\citep{yu2025limits} show that sequential test-time compute dominates parallel methods such as BoN; extending PrivITP to the sequential setting is an open direction. Huang et al.~\citep{huang2025bon} establish the regret skyline $\Omega(\sqrt{C^{\pi^*}\varepsilon_{\mathrm{RM}}^2})$ and introduce InferenceTimePessimism, a $\chi^2$-regularized rejection sampling algorithm that achieves this bound.

\emph{Differential privacy in LLM alignment.} All prior work on private alignment operates at \emph{training time}. Zhang et al.~\citep{zhang2025kl} prove that KL-regularization itself provides DP for the trained policy. Zhou et al.~\citep{zhou2025private} study private and robust offline RLHF under label DP. Wang et al.~\citep{wang2025private} propose DP-AdamW for private DPO fine-tuning. \citet{zhou2025square} gives private and robust algorithms for training-based alignment. Prior work on private inference has studied differentially private prediction~\citep{dwork2018privacy}, private top-$k$ selection~\citep{durfee2019practical}, but the specific interaction between privacy noise and alignment quality in inference-time methods has not been analyzed. Our work is the first to provide formal privacy guarantees at \emph{inference time}, protecting the reward model's training data during deployment rather than during training.

\section{Proofs for Private Best-of-N}
\label{app:proofs_bon}

We provide detailed proofs for every result stated in the main text.

\subsection{Proof of \Cref{thm:gumbel_priv}: PrivBoN Privacy}

\begin{proof}
Condition on the candidate set $y_{1:n}$, drawn i.i.d.\ from $\pi_0(\cdot|x)$ independently of $D$. The Gumbel-max trick states that if $g_1, \ldots, g_n \overset{\mathrm{iid}}{\sim} \mathrm{Gumbel}(0, \sigma)$, then:
\[
\Pr\!\left(\argmax_i (\hat r_D(x,y_i) + g_i) = k\right) = \frac{e^{\hat r_D(x,y_k)/\sigma}}{\sum_{j=1}^n e^{\hat r_D(x,y_j)/\sigma}}.
\]
This is precisely the exponential mechanism with score function $q(k; D) = \hat r_D(x, y_k)$ and privacy parameter $\varepsilon/(2\Delta_r) = 1/\sigma$.

For adjacent $D \sim D'$, any candidate $y_k$, and the fixed set $y_{1:n}$:
\begin{align}
\frac{\Pr(i^* = k \mid D)}{\Pr(i^* = k \mid D')} &= \frac{e^{\hat r_D(x,y_k)/\sigma}}{e^{\hat r_{D'}(x,y_k)/\sigma}} \cdot \frac{\sum_j e^{\hat r_{D'}(x,y_j)/\sigma}}{\sum_j e^{\hat r_D(x,y_j)/\sigma}}.
\end{align}
The first factor is at most $e^{\Delta_r/\sigma} = e^{\varepsilon/2}$ (since $|\hat r_D - \hat r_{D'}| \leq \Delta_r$). For the second factor, each term in the numerator sum is at most $e^{\Delta_r/\sigma}$ times the corresponding denominator term, so the ratio is at most $e^{\Delta_r/\sigma} = e^{\varepsilon/2}$. Combined: the likelihood ratio is at most $e^\varepsilon$.

Since the output $y_{i^*}$ is a deterministic function of $i^*$ and $y_{1:n}$ (post-processing), and $y_{1:n}$ is independent of $D$, the overall mechanism is $\varepsilon$-DP.
\end{proof}

\subsection{Proof of Proposition \ref{prop:gumbel_policy}: PrivBoN Closed-Form Policy}

\begin{proof}
For brevity, we denote $r(x,y):=\hat r_D(x,y)$.
Note that the event $\{y^* = y\}$ decomposes as $\bigcup_{i=1}^n \{i^* = i, y_i = y\}$. These are disjoint (only one index wins), so:
\[
\pi_\mathrm{PrivBoN}(y|x) = \E_{y_{1:n}}\left[\sum_{i=1}^n \Pr(i^* = i \mid y_{1:n})\,\ind\{y_i = y\}\right].
\]
By the i.i.d.\ structure of $y_1, \ldots, y_n$ and symmetry, all $n$ terms are identically distributed. Hence:
\[
= n\,\E_{y_{1:n}}\left[\Pr(i^* = 1 \mid y_{1:n})\,\ind\{y_1 = y\}\right].
\]
Apply the tower property, conditioning on $y_1$:
\[
= n\,\E_{y_1}\left[\ind\{y_1 = y\}\,\E_{y_{2:n}}\!\left[\frac{e^{r(x,y_1)/\sigma}}{\sum_{j=1}^{n} e^{r(x,y_j)/\sigma}}\right]\right]~,
\]
where inside the inner expectation, we have used the probability of selecting candidate 1 using the Gumbel-max trick. 

Now, since $y_1 \sim \pi_0(\cdot|x)$, we have $\E_{y_1}[\ind\{y_1=y\}f(y_1)] = \pi_0(y|x)\,f(y)$. Applying this identity yields the stated formula.

\paragraph{BoN Recovery ($\sigma \to 0$)}
Note that $\pi_\mathrm{PrivBoN}(y|x) = n\,\pi_0(y|x)\, \E_{y_{2:n}}\left[\frac{e^{r_1/\sigma}}{e^{r_1/\sigma} + \sum_{j=2}^n e^{r_j/\sigma}}\right]$, where $r_1=r(x,y)$ and $r_j = r(x,y_j)$ for $j \ge 2$. Inside the expectation, consider the softmax share for candidate 1:
\[
\psi(r_1)=\frac{e^{r_1/\sigma}}{e^{r_1/\sigma} + \sum_{j=2}^n e^{r_j/\sigma}} = \frac{1}{1 + \sum_{j=2}^n e^{(r_j - r_1)/\sigma}}.
\]
Let $M = \max_{j \geq 2} r_j$. Factor out $e^{(M-r_1)/\sigma}$ to get
\[
\psi(r_1)= \frac{1}{1 + e^{(M-r_1)/\sigma}\sum_{j=2}^n e^{(r_j - M)/\sigma}}.
\]
If $r_1 > M$ (i.e., $r_1$ is the unique maximum), then $(M-r_1)/\sigma \to -\infty$ when $\sigma \to 0$, so the softmax share $\psi(r_1)=\to 1$.
If $r_1 < M$, then $(M-r_1)/\sigma \to +\infty$, so the softmax share $\psi(r_1)=\to 0$.
If $r_1 = M$ (tie), then the limit depends on the number of ties, but ties have probability zero under continuous reward distributions.

Therefore, the softmax share satisfies
\[
\lim_{\sigma \to 0} \psi(r_1)=\lim_{\sigma \to 0} \frac{e^{r_1/\sigma}}{\sum_{j=1}^n e^{r_j/\sigma}} = \ind\{r_1 \geq \max_{j \geq 2} r_j\} 
\]
Taking expectation, we get
\[
\E_{y_{2:n}}[\ind\{r(x,y) \geq \max_{j\geq 2} r(x,y_j)\}] = \prod_{j=2}^n \Pr(r(x,y) \geq r(x,y_j)) = F_{\pi_0}(y|x)^{n-1},
\]
where $F_{\pi_0}(\cdot|x)$ is the cumulative distribution function of $r(x,Y)$ where $Y \sim \pi_0(\cdot|x)$.
Hence $\lim_{\sigma\to 0}\pi_\mathrm{PrivBoN}(y|x) = n\,\pi_0(y|x)\,F_{\pi_0}(y|x)^{n-1} = \pi_\BON(y|x)$.

\paragraph{PrivBoN Policy Limit ($n \to \infty$)}
Again, consider the softmax share for candidate 1 inside the expectation:
\[
\psi(r_1)=\frac{e^{r_1/\sigma}}{e^{r_1/\sigma} + \sum_{j=2}^n e^{r_j/\sigma}} = \frac{1}{1 + \sum_{j=2}^n e^{(r_j - r_1)/\sigma}}.
\]
Define $S_n = \frac{1}{n-1}\sum_{j=2}^n e^{r_j/\sigma}$. By the strong law of large numbers, $S_n \xrightarrow{\mathrm{a.s.}} M(\sigma) = \E_{Y\sim\pi_0(\cdot|x)}[e^{r(x,Y)/\sigma}]$. Hence
\[
\psi(r_1) \to \frac{1}{1 + (n-1)M(\sigma) e^{-r_1/\sigma}}  \implies n\,\psi(r_1) \to n \cdot e^{r_1/\sigma}/((n-1)M(\sigma) + e^{r_1/\sigma}).
\]
Hence $n\,\psi(r_1) \to e^{r_1/\sigma}/M(\sigma)$ as $n \to \infty$. Therefore $\pi_\mathrm{PrivBoN}(y|x) \to \pi_0(y|x) \cdot e^{r(x,y)/\sigma}/M(\sigma)$. 
\end{proof}


\subsection{Proof of \Cref{thm:gumbel_regret}: PrivBoN Regret}

\begin{proof}
Define $r(x,y):=\hat r_D(x,y)$.
We decompose the regret via the limiting tilted policy $\pi_\infty(y|x) = \pi_0(y|x)e^{r(x,y)/\sigma}/M(\sigma)$, where $M(\sigma) = \E_{\pi_0}[e^{r(x,y)/\sigma}]$:
\begin{equation}
\label{eq:regret_decomp}
\mathrm{Reg}(\pi_{\mathrm{PrivBoN}};x) = \underbrace{J(\pi^*;x) - J(\pi_\infty;x)}_{\mathrm{(A)}} + \underbrace{J(\pi_\infty;x) - J(\pi_{\mathrm{PrivBoN}};x)}_{\mathrm{(B)}}.
\end{equation}

\medskip
\noindent\textbf{Step 1: Bounding (A). Regret of PrivBoN under infinite samples.}

Define the shorthand $\E_\pi[f]:= \E_{y \sim \pi(\cdot|x)}[f(x,y)]$ for any function $f:\mathcal{X} \times \mathcal{Y} \to \mathbb{R}$ and $\KL(\pi\|\pi'):=\KL(\pi(\cdot|x)\|\pi'(\cdot|x))$.
Now, write the true reward in terms of the proxy: $r^*(x,y) = r(x,y) - \epsilon(x,y)$. Then
\begin{align}
\mathrm{(A)} &= \E_{\pi^*}[r^*] - \E_{\pi_\infty}[r^*] \nonumber\\
&= \big(\E_{\pi^*}[r] - \E_{\pi_\infty}[r]\big) - \big(\E_{\pi^*}[\epsilon] - \E_{\pi_\infty}[\epsilon]\big). \label{eq:A_expand}
\end{align}
\textit{Bounding the reward gap.} The tilted policy $\pi_\infty$ solves $\max_\pi\{\E_\pi[r] - \sigma\KL(\pi\|\pi_{\mathrm{ref}})\}$. By the optimality condition, for any policy $\pi$,
\begin{equation*}
\label{eq:kl_reg_gap}
\E_\pi[r] - \E_{\pi_\infty}[r] \leq \sigma\big(\KL(\pi\|\pi_0) - \KL(\pi_\infty\|\pi_0)\big) \leq \sigma\,\KL(\pi\|\pi_0).
\end{equation*}
Applying this with $\pi = \pi^*$, we get
\begin{equation*}
\label{eq:proxy_gap}
\E_{\pi^*}[r] - \E_{\pi_\infty}[r] \leq \sigma\,\KL(\pi^*\|\pi_{\mathrm{ref}}).
\end{equation*}
Since $\KL(\pi^*\|\pi_{\mathrm{ref}}) \leq \chi^2(\pi^*\|\pi_{\mathrm{ref}}) = \frac{1}{2}(C^{\pi^*}(x) - 1)$ (by the standard inequality $\KL \leq \chi^2$), we obtain
\begin{equation}
\label{eq:proxy_gap_final}
\E_{\pi^*}[r] - \E_{\pi_\infty}[r] \leq \frac{\sigma}{2}(C^{\pi^*}(x) - 1).
\end{equation}

\textit{Bounding the error terms.} For any policy $\pi$ with coverage $C^\pi(x)$, the Cauchy--Schwarz inequality gives:
\begin{align*}
|\E_\pi[\epsilon]| = \left|\E_{\pi_0}\!\left[\frac{\pi(y|x)}{\pi_0(y|x)}\epsilon(x,y)\right]\right| &\leq \sqrt{\E_{\pi_0}\!\left[\!\left(\frac{\pi(x,y)}{\pi_0(x,y)}\right)^{\!2}\right]\E_{\pi_0}[\epsilon^2(x,y)]}
\\ &= \sqrt{\E_{\pi}\!\left[\!\frac{\pi(x,y)}{\pi_0(x,y)}\right]\E_{\pi_0}[\epsilon^2(x,y)]}
= \sqrt{C^\pi(x)}\,\varepsilon_{\mathrm{RM}}(x).
\end{align*}
Applying this to $\pi^*$ directly, we get
\begin{equation}
\label{eq:pi_star_error}
|\E_{\pi^*}[\epsilon]| \leq \sqrt{C^{\pi^*}(x)}\,\varepsilon_{\mathrm{RM}}(x).
\end{equation}

Now $\epsilon(x,y) \sim \mathcal{N}(0,\sigma_r^2(x))$ under $\pi_0$. So, the tilted distribution of $\epsilon(x,y)$ under $\pi_\infty$ has density $\mathcal{N}(\sigma_r^2(x)/\sigma, \sigma_r^2(x))$. Therefore we get
\begin{equation}
\label{eq:tilted_error_final}
\E_{y \sim \pi_\infty(\cdot|x)}[\epsilon(x,y)] = \frac{\sigma_r^2(x)}{\sigma}.
\end{equation}

Combining~\eqref{eq:A_expand},~\eqref{eq:proxy_gap_final},~\eqref{eq:pi_star_error}, and~\eqref{eq:tilted_error_final}:
\begin{equation}
\label{eq:A_final}
\mathrm{(A)} \leq \frac{\sigma}{2}(C^{\pi^*}-1) + \sqrt{C^{\pi^*}}\,\varepsilon_{\mathrm{RM}} + \frac{\varepsilon_{\mathrm{RM}}^2}{\sigma}~.
\end{equation}

\paragraph{Bounding (B): Finite-sample gap.}

Let $w_k = e^{r(x, y_k)/\sigma}$ and $f_k = r^*(x, y_k)$ for candidates $y_1, \ldots, y_n \sim \pi_0$. Assume that $r(x,y) \in [a, b]$ with range $c = b - a$, we have $w_k \in [e^{a/\sigma}, e^{b/\sigma}]$. Since $r^* \in [0, R_{\max}]$, we have $f_k \in [0, R_{\max}]$. By the Gumbel-max trick, the selected candidate $y_{i^*}$ is drawn from the softmax distribution $\Pr[i^* = k \mid y_{1:n}] = w_k/\sum_j w_j$, so
\[
J(\hat{\pi}_\mathrm{PrivBoN};x) = \E_{\pi_\mathrm{PrivBoN}}[r^*]=\E_{y_{1:n}}\!\left[\frac{\sum_k w_k f_k}{\sum_k w_k}\right] = \E\!\left[\frac{\hat{Z}_n^{(r)}}{\hat{Z}_n}\right],
\]
where $\hat{Z}_n = \frac{1}{n}\sum_k w_k$ and $\hat{Z}_n^{(r)} = \frac{1}{n}\sum_k w_k f_k$ are empirical means, with population counterparts $Z = \E_{\pi_0}[w] \geq e^{a/\sigma}$ and $Z^{(r)} = \E_{\pi_0}[wf]$, where $w = e^{r(x, y)/\sigma}$ and $f= r^*(x, y)$, so that $J(\pi_\infty;x) =Z^{(r)}/Z$.

Let $A = \hat{Z}_n^{(r)} - Z^{(r)}$ and $B = \hat{Z}_n - Z$. Then:
\[
\frac{\hat{Z}_n^{(r)}}{\hat{Z}_n} - \frac{Z^{(r)}}{Z} = \frac{\hat{Z}_n^{(r)} Z - Z^{(r)} \hat{Z}_n}{Z \hat{Z}_n} = \frac{AZ - Z^{(r)}B}{Z\hat{Z}_n}.
\]
By the triangle inequality and $Z^{(r)} = \E_{\pi_0}[wr^*] \leq R_{\max}\E_{\pi_0}[w] = R_{\max}Z$ (since $r^* \in [0, R_{\max}]$ and $w > 0$), we have
\[
|AZ - Z^{(r)}B| \leq Z|A| + Z^{(r)}|B| \leq Z(|A| + R_{\max}|B|),
\]
giving $|\hat{Z}_n^{(r)}/\hat{Z}_n - Z^{(r)}/Z| \leq (|A| + R_{\max}|B|)/\hat{Z}_n$.

Define $\mathcal{E} = \{\hat{Z}_n \geq Z/2\}$. Since $w_k \in [e^{a/\sigma}, e^{b/\sigma}]$ are i.i.d.\ with mean $Z$, Hoeffding's inequality gives
\[
\Pr[\mathcal{E}^c] = \Pr[\hat{Z}_n - Z < -Z/2] \leq \exp\!\left(-\frac{2n(Z/2)^2}{(e^{b/\sigma} - e^{a/\sigma})^2}\right) \leq \exp\!\left(-\frac{nZ^2}{2e^{2b/\sigma}}\right),
\]
where the last inequality uses $e^{b/\sigma} - e^{a/\sigma} \leq e^{b/\sigma}$.

\emph{Bound on the good event.} On $\mathcal{E}$, $\hat{Z}_n \geq Z/2$, so $(|A| + R_{\max}|B|)/\hat{Z}_n \leq 2(|A| + R_{\max}|B|)/Z$. Now, via Jensen inequality ($\E|X| \leq \sqrt{\E[X^2]}$), we get
\[
\E[|B|] \leq \sqrt{\E[(\hat Z_n -Z)^2]}= \sqrt{\Var(\hat{Z}_n)} = \frac{\sqrt{\Var_{\pi_0}(w)}}{\sqrt{n}} \leq \frac{\sqrt{\E_{\pi_0}[w^2]}}{\sqrt{n}} \leq \frac{e^{b/\sigma}}{\sqrt{n}},
\]
using $\Var(w) \leq \E[w^2]$ and $w \leq e^{b/\sigma}$. Similarly
\[
\E[|A|] \leq \sqrt{\Var(\hat{Z}_n^{(r)})} = \frac{\sqrt{\Var_{\pi_0}(wr^*)}}{\sqrt{n}} \leq \frac{\sqrt{\E_{\pi_0}[w^2(r^*)^2]}}{\sqrt{n}} \leq \frac{R_{\max}e^{b/\sigma}}{\sqrt{n}},
\]
using $(r^*)^2 \leq R_{\max}^2$. Therefore, we get
\[
\E\!\left[\left|\frac{\hat{Z}_n^{(r)}}{\hat{Z}_n} - \frac{Z^{(r)}}{Z}\right| \cdot \mathbf{1}_{\mathcal{E}}\right] \leq \frac{2}{Z}\!\left(\frac{R_{\max}e^{b/\sigma}}{\sqrt{n}} + \frac{R_{\max}e^{b/\sigma}}{\sqrt{n}}\right) = \frac{4R_{\max}e^{b/\sigma}}{Z\sqrt{n}}.
\]

\emph{Bound on the bad event.} Both $\hat{Z}_n^{(r)}/\hat{Z}_n$ and $Z^{(r)}/Z$ are weighted averages of values in $[0, R_{\max}]$ (the former is a convex combination of $f_k \in [0, R_{\max}]$; the latter is $\E[wf]/\E[w]$ with $f \in [0, R_{\max}]$). Hence, their difference is bounded by $R_{\max}$. Thus
\[
\E\!\left[\left|\frac{\hat{Z}_n^{(r)}}{\hat{Z}_n} - \frac{Z^{(r)}}{Z}\right| \cdot \mathbf{1}_{\mathcal{E}^c}\right] \leq R_{\max} \cdot \Pr[\mathcal{E}^c] \leq R_{\max}\exp\!\left(-\frac{nZ^2}{2e^{2b/\sigma}}\right).
\]

By the triangle inequality for expectations, we obtain
\[
|J(\pi_\infty;x) - J(\pi_\mathrm{PrivBoN};x)| \leq \E\!\left[\left|\frac{\hat{Z}_n^{(r)}}{\hat{Z}_n} - \frac{Z^{(r)}}{Z}\right|\right] \leq \frac{4R_{\max}e^{b/\sigma}}{Z\sqrt{n}} + R_{\max}\exp\!\left(-\frac{nZ^2}{2e^{2b/\sigma}}\right),
\]
where $Z = \E_{\pi_0}[e^{r/\sigma}] \geq e^{a/\sigma}$. Using $Z \geq e^{a/\sigma}$, this simplifies to:
\begin{align}\label{eq:B_final}
  |J(\pi_\infty) - J(\hat{\pi}_{\MBON})| \leq \frac{4R_{\max}e^{c/\sigma}}{\sqrt{n}} + R_{\max}\exp\!\left(-\frac{n}{2e^{2c/\sigma}}\right) \leq \frac{4R_{\max}e^{c/\sigma}}{\sqrt{n}} + \varepsilon_{\mathrm{RM}} 
\end{align}
for $n \geq  2e^{2c/\sigma}\log(R_{\max}/\varepsilon_{\mathrm{RM}})$, where $c = b - a$ is the reward range.


Adding~\eqref{eq:A_final} and~\eqref{eq:B_final}, we get
\begin{equation*}
\mathrm{Reg}^{\pi^*}(x) \leq \sigma(C^{\pi^*}-1) + \frac{\varepsilon_{\mathrm{RM}}^2}{\sigma} + \sqrt{C^{\pi^*}}\,\varepsilon_{\mathrm{RM}} + O\!\left(\frac{R_{\max}e^{c/\sigma}}{\sqrt{n}}\right).
\end{equation*}

The regret bound is modulo the assumption that $r(x,y) \in [a, b]$ with range $c = b - a$. Under the Gaussian error model $r(x,y) = r^*(x,y) + \epsilon(x,y)$ with $\epsilon(x,y) \sim \mathcal{N}(0, \sigma_r^2(x))$, the reward is unbounded. We handle this by conditioning on the high-probability event $\mathcal{F} := \{\max_k |\epsilon(x,y_k)| \leq \sigma_r\sqrt{2\log(n/\Delta_r)}\}$, which has $\Pr[\mathcal{F}] \geq 1 - \Delta_r$ by Gaussian tail and union bound. On $\mathcal{F}$, the rewards lie in $[-\sigma_r\sqrt{2\log(n/\Delta_r)},\, R_{\max} + \sigma_r\sqrt{2\log(n/\Delta_r)}]$, so the bounded analysis applies with effective range $c = R_{\max} + 2\sigma_r\sqrt{2\log(n/\Delta_r)}$. On $\mathcal{F}^c$, the regret is bounded trivially by $R_{\max}$, contributing $R_{\max}\Delta_r$. Setting $\Delta_r = 1/\sqrt{n}$, the effective range becomes $c = O(R_{\max} + \sigma_r\sqrt{\log n})$. Substituting this, we get
\begin{equation*}
\mathrm{Reg}^{\pi^*}(x) \leq \sigma(C^{\pi^*}-1) + \frac{\varepsilon_{\mathrm{RM}}^2}{\sigma} + \sqrt{C^{\pi^*}}\,\varepsilon_{\mathrm{RM}} + O\!\left(\frac{R_{\max}e^{(R_{\max} + \varepsilon_{\mathrm{RM}}\sqrt{\log n})/\sigma}}{\sqrt{n}}\right)
\end{equation*}
for $n = \Omega \left( e^{(R_{\max} + \varepsilon_{\mathrm{RM}}\sqrt{\log n})/\sigma}\log(R_{\max}/\varepsilon_{\mathrm{RM}}\right)$.
\end{proof}

\subsection{On the Gaussian-error assumption in Theorem~\ref{thm:gumbel_regret}}
\label{app:gaussian-assumption}

The Gaussian-error assumption in Theorem~\ref{thm:gumbel_regret} gives the closed form $\E_{\pi_\infty}[\epsilon(x,y)] = \sigma_r^2(x)/\sigma$, which we use to derive the exact $\epsilon_{\mathrm{RM}}^2/\sigma$ overoptimization term. Under a general mean-squared-error assumption $\E_{\pi_0}[\epsilon(x,y)^2] = \epsilon_{\mathrm{RM}}^2(x)$, an application Cauchy--Schwarz inequality yields only $|\E_{\pi_\infty}[\epsilon]| \leq \sqrt{C^{\pi_\infty}(x)}\, \epsilon_{\mathrm{RM}}(x)$, where $C^{\pi_\infty}$ is the coverage of the tilted policy. Since $C^{\pi_\infty}$ scales as $\exp(R_{\max}/\sigma)$ in the worst case, the overoptimization term inflates, weakening the ``privacy is free'' threshold.

The finite-sample analysis conditions on a high-probability boundedness event via a Gaussian tail bound (the union bound $\Pr[\max_k |\epsilon(x, y_k)| \leq \sigma_r\sqrt{2\log(n/\delta)}] \geq 1 - \delta$). For sub-Gaussian errors with parameter $\sigma_r$ the analysis goes through unchanged up to constants. For bounded rewards $|\hat r_D| \leq B$, the tail step becomes trivial, but $B$ replaces $\sigma_r\sqrt{\log n}$ in the effective reward range, tightening the finite-sample term when $B \ll \sigma_r\sqrt{\log n}$ and loosening it otherwise.

The qualitative conclusions---bias-variance trade-off in $\sigma$, existence of an $\epsilon^*$ threshold above which privacy is free, and regret going down with $n$---hold under the general mean-squared-error assumption. The clean form of the threshold $\epsilon^*(x) = 2\Delta_r\sqrt{C^{\pi^*}(x)}/\epsilon_{\mathrm{RM}}(x)$, however, is specific to the Gaussian case.

\subsection{Proof of Proposition \ref{prop:hacking} (Win-Rate Bound and Reward hacking)}

\begin{proof}

\textbf{(a) Win-rate.} For brevity, we denote $r(x,y):=\hat r_D(x,y)$. Define $r_i = r(x,y_i)$ for $i \in [n]$ and $r_0 = r(x,y_0)$. Let $p_i = \Pr(r_i \geq r_0 \mid y_i)$ where $y_0,y_i \sim \pi_0(\cdot|x)$ independently.

We know that as $n \to \infty$, $\pi_\mathrm{PrivBoN} \to \pi_\infty$ where $\pi_\infty(y|x)\propto \pi_0(y|x)e^{r(x,y)/\sigma}$.

For Gaussian rewards $r(x,y) \sim \mathcal{N}(r^*(x,y),\sigma_r^2(x))$ under $\pi_0$. Now, the tilted distribution of $r(x,y)$ under $\pi_\infty$ has density $\mathcal{N}(r^*(x,y)+\sigma_r^2(x)/\sigma, \sigma_r^2(x))$. Hence $Y \sim \pi_\infty(\cdot|x)$ has reward $r_Y \sim \mathcal{N}(r^*(x,y)+\sigma_r^2(x)/\sigma, \sigma_r^2(x))$ and $Z \sim \pi_0(\cdot|x)$ has reward $r_Z \sim \mathcal{N}(r^*(x,y),\sigma_r^2(x))$. Then:
\[
r_Y - r_Z \sim \mathcal{N}(\sigma_r^2(x)/\sigma, 2\sigma_r^2(x)).
\]
So $\mathrm{WinRate}^{\mathrm{PrivBoN}}_\infty = \Pr(r_Y \geq r_Z) = \Phi(\sigma_r^2(x)/(\sigma\sqrt{2}\sigma_r(x))) = \Phi(\sigma_r(x)/(\sigma\sqrt{2})) < 1$.


\textbf{(b) Reward hacking.} Denote $r(x,y) :=\hat r_D(x,y)$,  $r_i:=r(x,y_i)$ and $\epsilon_i := \epsilon(x,y_i)$. For any fixed candidate set $y_{1:n}$, BoN selects $i^*=\argmax_i r_i$. This maximizes $\E[r_{i^*}]$ over all (possibly randomized) selection rules, since the deterministic $\argmax$ is optimal. PrivBoN's softmax selection is a specific randomized rule, hence suboptimal: $\E_{y \sim \pi_\mathrm{PrivBoN}(\cdot|x)}[r(x,y)] \leq \E_{\pi_\BON(\cdot|x)}[r(x,y)]$.

For BoN, the hacking gap is $\E_{y\sim\pi_{\mathrm{BoN}}(\cdot|x)}[\epsilon(x,y)]=\E_{y_{1:n}}[\sum_{i=1}^n \mathbf{1}\{i^* =i\} \epsilon_i] \le \E_{y_{1:n}}[\max_{i=1}^n  \epsilon_i] $. Since each $\epsilon_i \sim \mathcal{N}(0,\sigma_r^2(x))$ under $\pi_0$ and independent, we have
$\E[\max_i \epsilon_i] = O(\sigma_r(x)\sqrt{\log n})$ by standard results on Gaussian maximum.



In the limit $n\to \infty$, we have $\pi_{\mathrm{PrivBoN}} \to \pi_\infty$, where $\pi_\infty(y|x)\propto \pi_0(y|x)e^{r(x,y)/\sigma}$. Now $\epsilon(x,y) \sim \mathcal{N}(0,\sigma_r^2(x))$ under $\pi_0$. So, the tilted distribution of $\epsilon(x,y)$ under $\pi_\infty$ has density $\mathcal{N}(\sigma_r^2(x)/\sigma, \sigma_r^2(x))$. Therefore, the hacking gap of PrivBoN becomes $\E_{y \sim \pi_\infty(\cdot|x)}[\epsilon(x,y)] = \frac{\sigma_r^2(x)}{\sigma}$ in the limit $n \to \infty$.
\end{proof}

\section{Proofs for Private Inference-Time Pessimism}
\label{app:proofs_itp}


\subsection{Proof of \Cref{thm:privitp_priv} (PrivITP Privacy)}

\begin{proof}
We analyze the two phases separately and combine them via basic composition.

\textbf{Phase~1.} Define $\lambda^*(D)$ as the unique solution to
\[
F_D(\lambda) := \frac{1}{n}\sum_{i=1}^n \mathrm{relu}\!\big(\beta^{-1}(\hat{r}_D(x, y_i) - \lambda)\big) = 1,
\]
where the candidates $y_1, \ldots, y_n \sim \pi_0(\cdot|x)$ are drawn independently of $D$. The function $F_D$ is continuous and strictly decreasing in $\lambda$, with $F_D(\lambda) \to \infty$ as $\lambda \to -\infty$ and $F_D(\lambda) = 0$ for $\lambda \geq \max_i \hat{r}_D(x, y_i)$, so the solution exists and is unique.

\emph{Sensitivity of $\lambda^*$.} For adjacent datasets $D \sim D'$ and each candidate $y_i$, we have $|\hat{r}_D(x, y_i) - \hat{r}_{D'}(x, y_i)| \leq \Delta_r$. Since $\mathrm{relu}(\beta^{-1}(z - \lambda))$ is monotone increasing in $z$, this gives pointwise
\[
\mathrm{relu}\!\big(\beta^{-1}(\hat{r}_D(x, y_i) - (\lambda + \Delta_r))\big) \leq \mathrm{relu}\!\big(\beta^{-1}(\hat{r}_{D'}(x, y_i) - \lambda)\big) \leq \mathrm{relu}\!\big(\beta^{-1}(\hat{r}_D(x, y_i) - (\lambda - \Delta_r))\big).
\]
Averaging over $i$, we get
\[
F_D(\lambda + \Delta_r) \leq F_{D'}(\lambda) \leq F_D(\lambda - \Delta_r).
\]

Setting $\lambda = \lambda^*(D')$, so $F_{D'}(\lambda^*(D')) = 1 = F_D(\lambda^*(D))$, we obtain
\begin{itemize}[leftmargin=*,itemsep=0pt]
\item Upper bound: $1 \leq F_D(\lambda^*(D') - \Delta_r)$, so by monotonicity of $F_D$, $\lambda^*(D') - \Delta_r \leq \lambda^*(D)$, i.e., $\lambda^*(D') \leq \lambda^*(D) + \Delta_r$.
\item Lower bound: $1 \geq F_D(\lambda^*(D') + \Delta_r)$, so $\lambda^*(D') + \Delta_r \geq \lambda^*(D)$, i.e., $\lambda^*(D') \geq \lambda^*(D) - \Delta_r$.
\end{itemize}
Combining both, we get $|\lambda^*(D) - \lambda^*(D')| \leq \Delta_r$.

\emph{Gaussian mechanism.} Since $\lambda^* : \mathcal{D} \to \mathbb{R}$ is a scalar deterministic function with global sensitivity at most $\Delta_r$, releasing $\tilde{\lambda} = \lambda^*(D) + \zeta$ with $\zeta \sim \mathcal{N}(0, \sigma_X^2)$ is the scalar Gaussian mechanism~\citep{dwork2014algorithmic}, yielding $(\varepsilon_1, \delta)$-DP with $\varepsilon_1 = \Delta_r\sqrt{2\ln(1.25/\delta)}/\sigma_X$. 

\textbf{Phase~2.} Fix $\lambda$ (released privately from Phase~1, treated as public for the ex-post analysis) and fresh candidates $y'_1, \ldots, y'_n \sim \pi_0(\cdot|x)$ (independent of $D$). Let $q_i := \hat{r}_D(x, y'_i) \in [0, R_{\max}]$. At each round, Phase~2 computes $\tilde r_i = q_i + g'_i$ with $g_i \sim \mathcal{N}(0, \sigma_Z^2)$, weight $w_i = \mathrm{relu}(\beta^{-1}(\tilde{r}_i - \tilde{\lambda}))$, and acceptance $\xi_i \sim \mathrm{Bernoulli}(\min(w_i/M, 1))$. The halting time is $t^* = \min\{i : \xi_i = 1\}$.

\emph{Reformulation as randomized threshold.} We show rejection sampling is equivalent in distribution to a threshold check with a randomized effective threshold. Let $u_i \sim \mathrm{Unif}[0,1]$ be fresh independent uniform noise.

\begin{lemma}[Reformulation]\label{lem:reformulation}
$\Pr[\xi_i = 1 \mid \tilde{r}_i] = \Pr\!\big[\tilde{r}_i \geq \tilde \lambda + \beta M u_i \mid \tilde{r}_i\big]$.
\end{lemma}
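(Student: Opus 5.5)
The lemma is the standard identity behind any Bernoulli acceptance step: a coin with success probability $p\in[0,1]$ can be realized as $\ind\{u \le p\}$ with $u\sim\mathrm{Unif}[0,1]$. The plan is to condition on $\tilde r_i$ (and on the already-released $\tilde\lambda$, hence on $M$). We then write the acceptance probability $\min(w_i/M,1)$ as $\Pr[u_i \le w_i/M]$ and unfold the ReLU to turn this into a threshold comparison on $\tilde r_i$. We assume $M>0$, which holds when $\tilde\lambda < R_{\max}+\sigma_Z L$. This is the regime in which Algorithm~\ref{alg:privitp} is well defined; otherwise no candidate can receive positive weight relative to the envelope, and we would treat that degenerate case separately.

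\textbf{Step 1.} Since $u_i$ is independent of $\tilde r_i$ and uniform on $[0,1]$, for any fixed $p\ge 0$ we have $\Pr[u_i \le p] = \min(p,1)$. Hence
\begin{equation*}
\Pr[\xi_i=1\mid \tilde r_i] = \min(w_i/M,1) = \Pr[u_i \le w_i/M \mid \tilde r_i].
\end{equation*}

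\textbf{Step 2.} We rewrite the event $\{u_i \le w_i/M\}$ as $\{\beta M u_i \le \beta w_i\}$, using $\beta M>0$. Now $\beta w_i = \max(\tilde r_i - \tilde\lambda, 0)$, and we split on the sign of $\tilde r_i - \tilde\lambda$.
\begin{itemize}[leftmargin=*,itemsep=0pt]
\item If $\tilde r_i \ge \tilde\lambda$, the event is exactly $\{\tilde\lambda + \beta M u_i \le \tilde r_i\}$.
\item If $\tilde r_i < \tilde\lambda$, then $w_i=0$, and the event reduces to $\{u_i \le 0\}$, which has probability zero. The event $\{\tilde r_i \ge \tilde\lambda+\beta M u_i\}$ is also impossible here, because $\beta M u_i \ge 0 > \tilde r_i-\tilde\lambda$.
\end{itemize}
The two events therefore coincide up to a null set, which gives the lemma.

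\textbf{Main obstacle.} The only delicate point is bookkeeping of what is conditioned on. Both $M$ and $\tilde\lambda$ are functions of the Phase~1 output, so the identity must be read conditionally on $\tilde\lambda$ as well, as the Phase~2 analysis already does by treating $\tilde\lambda$ as public. The sign of $M$ must also be handled as described above.

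A secondary remark, relevant to the downstream use: the truncation $\min(\cdot,1)$ is automatically encoded. When $w_i/M>1$, the threshold $\tilde\lambda+\beta M u_i$ never exceeds $\tilde r_i$, so acceptance is certain in both descriptions. This is what lets the rest of the Phase~2 proof treat each round as a Gaussian above-threshold query with threshold $\tilde\lambda+\beta M u$ for $u\sim\mathrm{Unif}[0,1]$. Integrating over $u$ then produces the $\E_u[\Phi(\cdot)]$ expressions in Theorem~\ref{thm:privitp_priv}.
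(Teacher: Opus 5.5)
Your proof is correct and is essentially the paper's argument: the paper splits into the three cases $\tilde r_i<\tilde\lambda$, $\tilde\lambda\le\tilde r_i\le\tilde\lambda+\beta M$, and $\tilde r_i>\tilde\lambda+\beta M$, whereas you merge the last two cases through the identity $\Pr[u_i\le p]=\min(p,1)$. The paper leaves implicit both your conditioning on $\tilde\lambda$ and your assumption $M>0$; the latter is what makes $\min(w_i/M,1)$ a valid probability, so stating it is a worthwhile addition.
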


\noindent Indeed: (a) if $\tilde{r}_i < \tilde{\lambda}$, both sides are zero; (b) if $\tilde{\lambda} \leq \tilde{r}_i \leq \tilde{\lambda} + \beta M$, the LHS equals $w_i/M = (\tilde{r}_i - \lambda)/(\beta M)$, and the RHS equals $\Pr[u_i \leq (\tilde{r}_i - \tilde{\lambda})/(\beta M)] = (\tilde{r}_i - \lambda)/(\beta M)$; (c) if $\tilde{r}_i > \tilde{\lambda} + \beta M$, both sides are one. 

With this reformulation, Phase~2 accepts candidate $i$ iff $g_i \geq \tilde{\lambda} + \beta M u_i - q_i$, where $(g_i, u_i)$ are independent fresh noise, independent across rounds.

\emph{Halting probability.} Since the noise is independent across rounds, the probability of halting at time $t$ factorizes:
\[
\Pr[t^* = t \mid D] = \prod_{i=1}^{t-1}(1 - p_i(D)) \cdot p_t(D),
\]
where the per-round acceptance probability (marginalizing over both $g_i$ and $u_i$) is
\[
p_i(D) = \Pr[g_i \geq \tilde \lambda + \beta M u_i - q_i] = \E_{u_i}\!\left[\Phi\!\left(\frac{q_i - \tilde{\lambda} - \beta M u_i}{\sigma_Z}\right)\right],
\]
using $\Pr[g_i \geq z] = \Phi(-z/\sigma_Z)$. Correspondingly,
\[
1 - p_i(D) = \E_{u_i}\!\left[\Phi\!\left(\frac{\tilde{\lambda} + \beta M u_i - q_i}{\sigma_Z}\right)\right].
\]

\emph{Ratio under coupling.} Under the coupling where all noise variables $(g_i, u_i)$ are shared between $D$ and $D'$, let $q'_i := \hat{r}_{D'}(x, y'_i)$ and $\delta_i := q_i - q'_i \in [-\Delta_r, \Delta_r]$. Define
\[
A_i(q, \delta) := \E_{u}\!\left[\Phi\!\left(\frac{\tilde{\lambda} + \beta M u - q - \delta}{\sigma_Z}\right)\right], \qquad B_i(q, \delta) := \E_{u}\!\left[\Phi\!\left(\frac{q + \delta - \tilde{\lambda} - \beta M u}{\sigma_Z}\right)\right].
\]
Then $1 - p_i(D) = A_i(q'_i, \delta_i)$ and $p_i(D) = B_i(q'_i, \delta_i)$, and similarly $1 - p_i(D') = A_i(q'_i, 0)$, $p_i(D') = B_i(q'_i, 0)$. The ratio of halting probabilities is
\[
\frac{\Pr[t^* = t \mid D]}{\Pr[t^* = t \mid D']} = \prod_{i=1}^{t-1}\frac{A_i(q'_i, \delta_i)}{A_i(q'_i, 0)} \cdot \frac{B_t(q'_t, \delta_t)}{B_t(q'_t, 0)}.
\]

\emph{Worst-case shifts.} The function $A_i(q'_i, \delta)$ is decreasing in $\delta$ (larger $\delta$ shrinks $\Phi$'s argument), so $A_i(q'_i, \delta_i)/A_i(q'_i, 0)$ is maximized at $\delta_i = -\Delta_r$ for $i < t$. The function $B_t(q'_t, \delta)$ is increasing in $\delta$, so $B_t(q'_t, \delta_t)/B_t(q'_t, 0)$ is maximized at $\delta_t = +\Delta_r$.

\emph{Worst-case query values.} After fixing the shifts, each ratio factor depends only on $q'_i$ through the arguments of $\Phi$. It holds that (see Lemma~\ref{lem:monotone})
\begin{enumerate}[label=(\roman*),leftmargin=*]
\item For $i < t$: $A_i(q'_i, -\Delta_r)/A_i(q'_i, 0)$ is non-decreasing in $q'_i \in [0, R_{\max}]$, maximized at $q'_i = R_{\max}$.
\item For $i = t$: $B_t(q'_t, \Delta_r)/B_t(q'_t, 0)$ is non-increasing in $q'_t$, maximized at $q'_t = 0$.
\end{enumerate}

\emph{Final bound.} Substituting $q'_i = R_{\max}$ for $i < t$ (with shift $\delta_i = -\Delta_r$) and $q'_t = 0$ (with shift $\delta_t = +\Delta_r$):
\begin{align*}
\frac{A_i(R_{\max}, -\Delta_r)}{A_i(R_{\max}, 0)} &= \frac{\E_u[\Phi((\tilde{\lambda} + \beta M u - R_{\max} + \Delta_r)/\sigma_Z)]}{\E_u[\Phi((\tilde{\lambda} + \beta M u - R_{\max})/\sigma_Z)]}, \\
\frac{B_t(0, \Delta_r)}{B_t(0, 0)} &= \frac{\E_u[\Phi((\Delta_r - \tilde{\lambda} - \beta M u)/\sigma_Z)]}{\E_u[\Phi((-\tilde{\lambda} - \beta M u)/\sigma_Z)]}.
\end{align*}
The factors for $i = 1, \ldots, t-1$ are identical (since the expression depends on $i$ only through $u_i$, and the $u_i$'s are i.i.d.), so their product is the first factor raised to the power $t-1$. Taking logarithms yields~\eqref{eq:privitp_expost}.

\textbf{Composition.} By basic composition of DP mechanisms, PrivITP satisfies $(\varepsilon_1 + \varepsilon_2^{\mathrm{post}}(t), \delta)$-DP ex-post.
\end{proof}

\begin{lemma}[Monotonicity of Gaussian CDF ratios under uniform averaging]\label{lem:monotone}
Let $\tau \sim \mathrm{Unif}[Q,R]$ with $Q<R$, and let $\Delta>0$. Define, for $x\in\mathbb{R}$,
\[
R_1(x) := \frac{\mathbb{E}\big[\Phi(\tau - x + \Delta)\big]}{\mathbb{E}\big[\Phi(\tau - x)\big]},
\qquad
R_2(x) := \frac{\mathbb{E}\big[\Phi(x - \tau + \Delta)\big]}{\mathbb{E}\big[\Phi(x - \tau)\big]}.
\]
Then $R_1(x)$ is non-decreasing in $x$ and $R_2(x)$ is non-increasing in $x$.  
\end{lemma}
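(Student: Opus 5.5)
The plan is to reduce both ratios to a single function of one variable and then use log-concavity. Write $L := R-Q > 0$ and define
\[
H(y) := \int_{y}^{y+L} \Phi(s)\,ds = \int_{\mathbb{R}} \Phi(s)\,\mathbf{1}\{0 \le s-y \le L\}\,ds .
\]
This is strictly positive and finite for every $y$, since $0<\Phi<1$. Because $\tau$ is uniform on $[Q,R]$, a change of variables gives
\[
\mathbb{E}[\Phi(\tau - x + c)] = \frac{1}{L}H(Q - x + c), \qquad \mathbb{E}[\Phi(x - \tau + c)] = \frac{1}{L}H(x - R + c)
\]
for any shift $c$. Hence
\[
R_1(x) = \frac{H(Q-x+\Delta)}{H(Q-x)}, \qquad R_2(x) = \frac{H(x-R+\Delta)}{H(x-R)} .
\]
Both claims then follow from one statement: the map $y \mapsto H(y+\Delta)/H(y)$ is non-increasing in $y$. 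Given that statement, $R_2$ is that map evaluated at $y = x-R$, which increases with $x$, so $R_2$ is non-increasing. $R_1$ is the map at $y = Q-x$, which decreases with $x$, so $R_1$ is non-decreasing.

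The core step is to show that $\log H$ is concave. Two standard facts give this. First, $\Phi$ is log-concave, because the Gaussian density is log-concave and integrals (CDFs) of log-concave densities are log-concave. Second, the indicator $\mathbf{1}_{[0,L]}$ is log-concave. Now $H(y) = \int \Phi(s)\,\mathbf{1}_{[0,L]}(s-y)\,ds$. The integrand $(s,y) \mapsto \Phi(s)\mathbf{1}_{[0,L]}(s-y)$ is jointly log-concave in $(s,y)$, since it is a product of log-concave functions of linear maps. By the Prékopa--Leindler theorem (marginals of log-concave functions are log-concave), $H$ is log-concave. If one prefers a self-contained check, one can use the closed form $H(y) = G(y+L) - G(y)$ with $G(s) = s\Phi(s) + \varphi(s)$ and verify $H H'' \le (H')^2$ directly. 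However, I expect this computation to be messier than citing Prékopa.

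Concavity of $\ell := \log H$ gives the monotonicity directly. For fixed $\Delta>0$, the increment $\ell(y+\Delta) - \ell(y)$ is non-increasing in $y$, which is the standard slopes-of-chords property of concave functions. Exponentiating shows $H(y+\Delta)/H(y)$ is non-increasing, which completes both parts.

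The main obstacle is the log-concavity of $H$. Everything else is a change of variables plus bookkeeping of signs. So I would state the Prékopa--Leindler marginal property explicitly, and note that positivity of $H$ ensures $\log H$ is finite everywhere, so there are no degenerate points. A small point to check is that the sign conventions line up. In $R_1$ the argument of $\Phi$ is $\tau - x$ and in $R_2$ it is $x - \tau$. This is what flips the direction of monotonicity between the two cases, and it matches items (i) and (ii) used in the proof of Theorem~\ref{thm:privitp_priv}, where one takes $\tau = (\tilde\lambda + \beta M u)/\sigma_Z$, $x = q'/\sigma_Z$ and $\Delta = \Delta_r/\sigma_Z$.
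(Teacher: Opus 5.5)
Your proof is correct and takes essentially the same route as the paper's. Both reduce monotonicity of the ratio to log-concavity of the window average $t\mapsto\int_a^b\Phi(u+t)\,du$ (your $H$, up to a translation), obtained from log-concavity of $\Phi$ and of an interval indicator via Pr\'{e}kopa--Leindler. The differences are presentational: the paper differentiates in $x$ and compares $(\log F)'$ at $0$ and $\Delta$, while you write both ratios as $y\mapsto H(y+\Delta)/H(y)$ and use the chord property of concave $\log H$. Your version also avoids the flipped inequality sign that appears in the paper's displayed condition for $R_2$.
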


\begin{proof}

The sign condition for both ratios reduces to log-concavity of a single
auxiliary function, which is then established via the
Pr\'{e}kopa--Leindler inequality.

Write $N(x) = \int_Q^R \Phi(\tau - x + \Delta)\,d\tau$ and
$D(x) = \int_Q^R \Phi(\tau - x)\,d\tau$, so that $R_1(x) = N(x)/D(x)$.
Since $\frac{d}{dx}\Phi(\tau - x + \Delta) = -\phi(\tau - x + \Delta)$,
\[
  R_1'(x)
  = \frac{N'(x)\,D(x) - N(x)\,D'(x)}{D(x)^2}
  = \frac{-\!\int_Q^R \phi(\tau\!-\!x\!+\!\Delta)\,d\tau \cdot D(x)
         \;+\; N(x)\cdot\!\int_Q^R \phi(\tau\!-\!x)\,d\tau}
        {D(x)^2}.
\]
Hence $R_1'(x) \ge 0$ if and only if
\begin{equation}\label{eq:sign}
  N(x)\int_Q^R \phi(\tau - x)\,d\tau
  \;\ge\;
  D(x)\int_Q^R \phi(\tau - x + \Delta)\,d\tau.
\end{equation}
Substitute $u = \tau - x$ and set $a = Q - x$, $b = R - x$, $L = R - Q$.
Define
\[
  F(t) \;:=\; \int_a^{b} \Phi(u + t)\,du,
  \qquad t \in \mathbb{R}.
\]
Then
\[
  N(x) = F(\Delta), \qquad
  D(x) = F(0), \qquad
  F'(t) = \int_a^{b} \phi(u+t)\,du = \Phi(b+t) - \Phi(a+t),
\]
so $F'(0) = \int_Q^R \phi(\tau - x)\,d\tau$ and
$F'(\Delta) = \int_Q^R \phi(\tau - x + \Delta)\,d\tau$.
Inequality~\eqref{eq:sign} therefore reads
\[
  F(\Delta)\,F'(0) \;\ge\; F(0)\,F'(\Delta),
  \qquad\text{i.e.,}\qquad
  \frac{F'(\Delta)}{F(\Delta)} \;\le\; \frac{F'(0)}{F(0)},
\]
which is equivalent to $(\log F)'$ being non-increasing, i.e.\
\emph{$F$ is log-concave in~$t$}.

Write $F$ as a convolution:
\[
  F(t)
  = \int_a^{b} \Phi(u + t)\,du
  = \int_{-\infty}^{\infty} \Phi(t - v)\,\mathbf{1}_{[-b,-a]}(v)\,dv
  = \bigl(\Phi \ast \mathbf{1}_{[-b,-a]}\bigr)(t).
\]
Both factors are log-concave: $\Phi$ is log-concave and $\mathbf{1}_{[-b,-a]}$ is the
indicator of a convex set, hence log-concave. By the
Pr\'{e}kopa--Leindler inequality, the convolution of log-concave functions
is log-concave. Therefore $F$ is log-concave in~$t$, which establishes
$R_1'(x) \ge 0$.

Similarly, set $\widetilde{N}(x) = \int_Q^R \Phi(x - \tau + \Delta)\,d\tau$ and
$\widetilde{D}(x) = \int_Q^R \Phi(x - \tau)\,d\tau$, so that
$R_2(x) = \widetilde{N}(x)/\widetilde{D}(x)$.
Substituting $v = x - \tau$ gives
\[
  \widetilde{N}(x) = \widetilde{F}(\Delta),
  \qquad
  \widetilde{D}(x) = \widetilde{F}(0),
  \qquad\text{where}\quad
  \widetilde{F}(t) := \int_{x-R}^{x-Q} \Phi(v + t)\,dv.
\]
Since $\frac{d}{dx}\Phi(x - \tau + \Delta) = \phi(x - \tau + \Delta)$,
the same computation yields $R_2'(x) \le 0$ if and only if
\[
  \widetilde{F}'(\Delta)\,\widetilde{F}(0)
  \;\ge\;
  \widetilde{F}(\Delta)\,\widetilde{F}'(0),
  \qquad\text{i.e.,}\qquad
  \frac{\widetilde{F}'(\Delta)}{\widetilde{F}(\Delta)}
  \;\ge\;
  \frac{\widetilde{F}'(0)}{\widetilde{F}(0)}.
\]
But $\widetilde{F}$ is again a convolution of $\Phi$ with an indicator, hence log-concave
in~$t$ by the same argument. Log-concavity gives
$(\log \widetilde{F})' = \widetilde{F}'/\widetilde{F}$ non-increasing,
which is precisely the required inequality. Therefore
$R_2'(x) \le 0$.
\end{proof}

\subsection{Proof of \Cref{thm:privitp_regret} (DP-ITP Regret)}
\label{sec:privitp_regret}

We analyze the regret of PrivITP against an arbitrary comparator policy $\pi^*$ using a unified smoothed-relu framework: both the Phase~1 threshold noise and the Phase~2 query noise induce policies whose weights differ from Huang et al.'s ideal $\chi^2$-policy by a bounded pointwise amount, and the regret gap is controlled uniformly by a single \emph{smoothed-weight lemma}.

\begin{proof}
Define $r(x,y):=\hat r_D(x,y)$ for brevity.
Let $\pi_\chi^\beta$ denote the $\chi^2$-regularized policy with clean rewards $r(x,y)$ and empirical threshold $\lambda^*$, i.e.,
\[
\pi_\chi^\beta(y|x) =\frac{1}{Z^*} \pi_0(y|x) \cdot w^*(x,y), \qquad w^*(y) := \mathrm{relu}\bigl(\beta^{-1}(r(x,y) - \lambda^*\bigr),
\]
where $\lambda^*$ is defined by $\frac{1}{n}\sum_{i=1}^n \mathrm{relu}\!\big(\beta^{-1}(\hat{r}_D(x, y_i) - \lambda)\big) = 1$ \citep[Lemma C.1]{huang2025bon}. Let $\pi_{\mathrm{PrivITP}}$ denote the actual policy induced by \Cref{alg:privitp}, marginalizing over all Phase~1 and Phase~2 noise and over the rejection sampling randomness.

\textbf{Regret Decomposition.} Introduce two intermediate policies:
\begin{align*}
\pi_b(y|x) &= \frac{1}{Z_b}  \pi_0(y|x)\cdot w_b(x,y), & w_b(x,y) &:= \mathrm{relu}\bigl(\beta^{-1}(r(x,y) - \tilde{\lambda})\bigr), \\
\pi_{\bar{w}}(y|x) &= \frac{1}{Z_{\bar w}}  \pi_0(y|x)\cdot \bar{w}(x,y), & \bar{w}(x,y) &:= \E_{g \sim \mathcal{N}(0,\sigma_Z^2)}\!\left[\mathrm{relu}\bigl(\beta^{-1}(r(x,y) + g - \tilde{\lambda})\bigr)\right],
\end{align*}
where $\tilde{\lambda} = \lambda^* + \zeta$ with $\zeta \sim \mathcal{N}(0,\sigma_X^2)$. The policy $\pi_b$ uses the noisy threshold $\tilde{\lambda}$ with clean rewards; $\pi_{\bar{w}}$ additionally marginalizes the Phase~2 Gaussian noise. 

Now, we decompose the (expected) regret of PrivITP as
\[
\mathrm{Reg}(x) = \underbrace{\bigl[J(\pi^*) - J(\pi_\chi^\beta)\bigr]}_{\text{(A): skyline}} + \underbrace{\bigl[J(\pi_\chi^\beta) - J(\pi_b)\bigr]}_{\text{(B1): Phase 1 noise}} + \underbrace{\bigl[J(\pi_b) - J(\pi_{\bar{w}})\bigr]}_{\text{(B2): Phase 2 noise}} + \underbrace{\bigl[J(\pi_{\bar{w}}) - J(\pi_{\mathrm{PrivITP}})\bigr]}_{\text{(B3): finite sample}}.
\]

\textbf{Term (A): Skyline.} Since $\pi_\chi^\beta$ is the $\chi^2$-regularized policy with clean $\hat{r}$ and $\lambda^*$, Theorem~4.1 of \citet{huang2025bon} applies and we get 
\begin{align}\label{eq:A}
 J(\pi^*) - J(\pi_\chi^\beta) \leq \sqrt{C^{\pi^*}(x)\varepsilon_{\mathrm{RM}}^2(x)} + \beta \cdot C^{\pi^*}(x) + \beta^{-1}\varepsilon_{\mathrm{RM}}^2(x)~   
\end{align}
for
$n = \Omega\!\left(\frac{R_{\max}}{\beta}\log\!\left(\frac{R_{\max}}{\beta \,\epsilon_{\mathrm{RM}}}\right)\right)$.

We now establish a lemma that will help us bound the remaining terms.

\begin{lemma}[Smoothed-weight policy comparison]
\label{lem:smooth_weight}
Let $w_1, w_2 : \mathcal{Y} \to \R_{\geq 0}$ be non-negative weight functions with normalizations $Z_i = \E_{\pi_\mathrm{ref}}[w_i] > 0$, and let $\pi_i(y) = \pi_{\mathrm{ref}}(y)w_i(y)/Z_i$. Suppose $\|w_1 - w_2\|_\infty \leq \eta$. Then:
\[
|J(\pi_1) - J(\pi_2)| \leq \frac{2R_{\max}\eta}{\max(Z_1, Z_2)}.
\]
\end{lemma}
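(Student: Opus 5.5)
The plan is a direct ratio-of-expectations computation, in the same spirit as the finite-sample step in the proof of \Cref{thm:gumbel_regret}. Write $f(y) := r^*(x,y) \in [0,R_{\max}]$, so that $J(\pi_i) = \E_{\pi_{\mathrm{ref}}}[w_i f]/Z_i$ for $i=1,2$. The bound we aim for, $2R_{\max}\eta/\max(Z_1,Z_2)$, is the minimum of $2R_{\max}\eta/Z_1$ and $2R_{\max}\eta/Z_2$. So it suffices to prove the bound with $Z_1$ in the denominator; the $Z_2$ version follows by exchanging the roles of $w_1$ and $w_2$, since the hypothesis is symmetric.

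\textbf{Key step.} Use the add-and-subtract decomposition
\[
J(\pi_1) - J(\pi_2) = \frac{\E_{\pi_{\mathrm{ref}}}[(w_1 - w_2) f]}{Z_1} + \E_{\pi_{\mathrm{ref}}}[w_2 f]\Bigl(\frac{1}{Z_1} - \frac{1}{Z_2}\Bigr) = \frac{\E_{\pi_{\mathrm{ref}}}[(w_1 - w_2) f]}{Z_1} + J(\pi_2)\,\frac{Z_2 - Z_1}{Z_1}.
\]
\begin{itemize}
\item For the first term, $\|w_1-w_2\|_\infty\le\eta$ and $0\le f\le R_{\max}$ give $|\E_{\pi_{\mathrm{ref}}}[(w_1-w_2)f]|\le R_{\max}\eta$.
\item For the second term, $|Z_2-Z_1| = |\E_{\pi_{\mathrm{ref}}}[w_2-w_1]|\le\eta$. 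Moreover $J(\pi_2)\in[0,R_{\max}]$, because $\pi_2$ is a probability measure (here we use $w_2\ge 0$ and $Z_2>0$).
\end{itemize}
Hence $|J(\pi_1)-J(\pi_2)|\le 2R_{\max}\eta/Z_1$. Swapping indices gives $2R_{\max}\eta/Z_2$, and taking the smaller of the two bounds yields the claim.

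\textbf{Main obstacle.} There is no real obstacle. The only care needed is bookkeeping: the normalization must be with respect to the same reference $\pi_{\mathrm{ref}}$ (here $\pi_0$) for both policies, and all expectations must be finite. Finiteness holds because $|w_1-w_2|\le\eta$ and $Z_i<\infty$ together imply $\E_{\pi_{\mathrm{ref}}}[w_i f]<\infty$.

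\textbf{Optional refinement.} Replacing $f$ by $f - J(\pi_2)$, which satisfies $|f - J(\pi_2)|\le R_{\max}$, makes the two terms combine into the single term $\E_{\pi_{\mathrm{ref}}}[(w_1-w_2)(f-J(\pi_2))]/Z_1$. This tightens the constant from $2$ to $1$, but the stated factor $2$ suffices for the subsequent uses in terms (B1)--(B3).
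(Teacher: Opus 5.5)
Your proof is correct and is essentially the paper's argument. The paper writes the same add-and-subtract decomposition over the common denominator $Z_1Z_2$, namely $\bigl(Z_2\E[r^*(w_1-w_2)] + (Z_2-Z_1)\E[r^*w_2]\bigr)/(Z_1Z_2)$, which becomes exactly your two terms after dividing through by $Z_2$. It bounds them the same way to get $2R_{\max}\eta/Z_1$, then swaps indices to obtain the $\max(Z_1,Z_2)$ denominator; your optional centering by $J(\pi_2)$, which improves the constant from $2$ to $1$, is also valid.
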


\begin{proof}[Proof of Lemma \ref{lem:smooth_weight}]
Observe that
\begin{align*}
J(\pi_1) - J(\pi_2) &= \frac{\E_{\pi_{\mathrm{ref}}}[r^* w_1]}{Z_1} - \frac{\E_{\pi_{\mathrm{ref}}}[r^* w_2]}{Z_2} \\
&= \frac{Z_2\E[r^* w_1] - Z_1\E[r^* w_2]}{Z_1 Z_2} \\
&= \frac{Z_2\E[r^*(w_1 - w_2)] + (Z_2 - Z_1)\E[r^* w_2]}{Z_1 Z_2}.
\end{align*}
Using $r^* \leq R_{\max}$, $\|w_1 - w_2\|_\infty \leq \eta$, $|Z_1 - Z_2| \leq \eta$, and $\E[r^* w_2] \leq R_{\max}Z_2$:
\[
|J(\pi_1) - J(\pi_2)| \leq \frac{Z_2 R_{\max}\eta + \eta R_{\max}Z_2}{Z_1 Z_2} = \frac{2R_{\max}\eta}{Z_1}.
\]
By symmetry, the same bound holds with $Z_1$ replaced by $Z_2$, giving the stated result with $\max(Z_1, Z_2)$. \qedhere
\end{proof}

\textbf{Term (B1): Phase 1 noise effect.} Fix $\tau$. Since $\mathrm{relu}$ is 1-Lipschitz, we get
\[
\|w^* - w_b\|_\infty = \sup_y\left|\mathrm{relu}\!\left(\frac{r(x,y) - \lambda^*}{\beta}\right) - \mathrm{relu}\!\left(\frac{r(x,y) - \lambda^* - \zeta}{\beta}\right)\right| \leq \frac{|\zeta|}{\beta}.
\]
The normalization $Z^* = 1$ (by definition of $\lambda^*$). Thus $|Z_b - Z^*| \leq \|w_b - w^*\|_\infty \leq |\zeta|/\beta$. Hence, by Lemma \ref{lem:smooth_weight} with $\eta = |\zeta|/\beta$, and $\max\{Z^*,Z_b\} \ge Z^* =1$, we get
\[
|J(\pi_\chi^\beta) - J(\pi_b)| \cdot \mathbf{1}_{\mathcal{G}_1} \leq \frac{2R_{\max}|\zeta|/\beta}{1} = \frac{2R_{\max}|\zeta|}{\beta}.
\]
Taking expectation over $\zeta$, we get
\begin{equation}\label{eq:B1}
|J(\pi_\chi^\beta) - J(\pi_b)| \leq \frac{2R_{\max}\E|\zeta|}{\beta} \leq \frac{2R_{\max}\sigma_X\sqrt{2/\pi}}{\beta}=O(R_{\max}\sigma_X/\beta).
\end{equation}

\textbf{Term (B2): Phase 2 noise effect.} Define the Gaussian-smoothed relu $\varphi_\tau : \R \to \R_{\geq 0}$ by
\[
\varphi_\tau(z) := \E_{g \sim \mathcal{N}(0,\tau^2)}[\mathrm{relu}(z + g)] = z\Phi(z/\tau) + \tau\phi(z/\tau),
\]
where $\Phi, \phi$ are the standard Gaussian CDF and PDF. Setting $\tau = \sigma_Z/\beta$, we have $\bar{w}(x,y) = \varphi_\tau(\beta^{-1}(r(x,y) - \tilde{\lambda}))$.

Note that $\varphi_\tau(z) - \mathrm{relu}(z) \geq 0$ by Jensen inequality applied to the convex $\mathrm{relu}$ function. Moreover, observe that
\[
\sup_z\bigl(\varphi_\tau(z) - \mathrm{relu}(z)\bigr) = \varphi_\tau(0) = \tau\phi(0) = \frac{\tau}{\sqrt{2\pi}},
\]
as maximum holds at $z=0$ by standard calculus on $h(z) = \varphi_\tau(z) - \mathrm{relu}(z)$: $h'(z) = \Phi(z/\tau) > 0$ for $z < 0$ and $h'(z) = \Phi(z/\tau) - 1 < 0$ for $z > 0$. Therefore, we get
\[
\|w_b - \bar{w}\|_\infty \leq \frac{\tau}{\sqrt{2\pi}} = \frac{\sigma_Z}{\beta\sqrt{2\pi}}.
\]
This yields $|Z_b - Z_{\bar{w}}| \leq \|w_b - \bar{w}\|_\infty \leq \sigma_Z/(\beta\sqrt{2\pi})$. 

On the event $\mathcal{G}_1 := \{|\zeta| \leq \beta/2\}$, we have $|Z_b-1|=|Z_b-Z^*|\leq |\zeta|/\beta \leq 1/2$. Hence, on $\mathcal{G}_1$, we have $Z_b \geq 1/2$.
Hence by Lemma~\ref{lem:smooth_weight} with $\eta = \sigma_Z/(\beta\sqrt{2\pi})$ and $\max(Z_b, Z_{\bar{w}}) \ge Z_b \geq 1/2$, we obtain
\[
|J(\pi_b) - J(\pi_{\bar{w}})| \cdot \mathbf{1}_{\mathcal{G}_1} \leq \frac{2R_{\max} \cdot \sigma_Z/(\beta\sqrt{2\pi})}{1/2} = \frac{4R_{\max}\sigma_Z}{\beta\sqrt{2\pi}}.
\]
Now $\Pr[\mathcal{G}_1^c] = \Pr[|\zeta| > \beta/2] \leq 2\exp(-\beta^2/(8\sigma_X^2))$, and the value gap is bounded trivially by $R_{\max}$ on $\mathcal{G}_1^c$. Then we get
\[
|J(\pi_b) - J(\pi_{\bar{w}})| \leq \frac{4R_{\max}\sigma_Z}{\beta\sqrt{2\pi}} + R_{\max}\Pr[\mathcal{G}_1^c] \leq \frac{4R_{\max}\sigma_Z}{\beta\sqrt{2\pi}} + 2R_{\max}\exp\!\left(-\frac{\beta^2}{8\sigma_X^2}\right).
\]
This yields
\begin{equation}\label{eq:B2}
 |J(\pi_b) - J(\pi_{\bar{w}})| \leq O(R_{\max}\sigma_Z/\beta) + 2R_{\max}\exp\!\left(-\frac{\beta^2}{8\sigma_X^2}\right).   
\end{equation}

\textbf{Term (B3): Finite-sample and truncation correction.} We bound the gap between the Phase~2 target policy $\pi_{\bar{w}}$ and the actual algorithm output $\pi_{\mathrm{PrivITP}}$, which arises from the finite number of candidates and the handling of unbounded Gaussian query noise via truncation. The analysis proceeds by conditioning on the Phase~1 noise $\zeta$, applying a rejection sampling argument for each fixed $\zeta$, and then averaging.

\emph{Conditioning on $\zeta$.} 

Recall $\tilde{\lambda} = \lambda^* + \zeta$ with $\zeta \sim \mathcal{N}(0, \sigma_X^2)$. Both $\pi_{\bar{w}}$ and $\pi_{\mathrm{PrivITP}}$ depend on $\zeta$ through $\tilde{\lambda}$. For each fixed $\zeta$, define the conditional target
\[
\pi_{\bar{w}}^{(\zeta)}(y|x) = \frac{\pi_0(y)\, \bar{w}^{(\zeta)}(x,y)}{Z_{\bar{w}}^{(\zeta)}}, \quad \bar{w}^{(\zeta)}(x,y) = \E_{g'}\bigl[\mathrm{relu}\bigl(\beta^{-1}(r(x,y) + g' - \tilde{\lambda})\bigr)\bigr], \quad Z_{\bar{w}}^{(\zeta)} = \E_{\pi_0}[\bar{w}^{(\zeta)}],
\]
and let $\pi_{\mathrm{PrivITP}}^{(\zeta)}$ denote the algorithm's output distribution conditional on $\zeta$ (marginalizing over Phase~2 candidates, Phase~2 noise, and rejection sampling uniform samples). The full distributions satisfy $J(\pi_{\bar{w}}) = \E_\zeta[J(\pi_{\bar{w}}^{(\zeta)})]$ and $J(\pi_{\mathrm{PrivITP}}) = \E_\zeta[J(\pi_{\mathrm{PrivITP}}^{(\zeta)})]$. By the triangle inequality
\begin{equation}
\label{eq:B3_conditioning}
|J(\pi_{\bar{w}}) - J(\pi_{\mathrm{PrivITP}})| \leq \E_\zeta\bigl|J(\pi_{\bar{w}}^{(\zeta)}) - J(\pi_{\mathrm{PrivITP}}^{(\zeta)})\bigr|.
\end{equation}
It suffices to bound the conditional gap for each $\zeta$ in the ``good event'' $\mathcal{G}_1 := \{|\zeta| \leq \beta/2\}$. Under this event, $Z_b^{(\zeta)} \geq 1/2$. Since $\varphi_\tau(z) - \mathrm{relu}(z) \geq 0$ (by Jensen applied to the convex $\mathrm{relu}$), we get $\bar{w}^{(\zeta)} \geq w_b^{(\zeta)}$ and thus $Z_{\bar{w}}^{(\zeta)} \geq 1/2$. The gap under $\mathcal{G}_1^c$ can be bounded by $R_{\max}$.

\textit{Phase 2 event decomposition (fix $\zeta \in \mathcal{G}_1$).}

The Phase~2 algorithm draws $n$ i.i.d.\ triples $(y'_j, g'_j, U_j)_{j=1}^n$ where $y'_j \sim \pi_0(\cdot|x)$, $g'_j \sim \mathcal{N}(0, \sigma_Z^2)$, $U_j \sim \mathrm{Unif}[0, 1]$. Define the per-trial weight
\[
W_j := \mathrm{relu}\bigl(\beta^{-1}(r(x,y'_j) + g'_j - \tilde{\lambda})\bigr),
\]
and truncation level
\[
M := (R_{\max} + \sigma_Z L - \tilde{\lambda})/\beta, \qquad L := \sqrt{2\log(n/\delta)},
\]
for a parameter $\delta \in (0, 1)$ to be chosen. 

Note that $\lambda^* \in  [-\beta,R_{\max} - \beta]$ from \citet[Lemma C.1]{huang2025bon}. On $\mathcal{G}_1 = \{|\zeta| \leq \beta/2\}$ and hence $\tilde{\lambda} \leq R_{\max} - \beta/2$, hence $M \geq (\sigma_Z L + \beta/2)/\beta \geq 1/2 > 0$ on $\mathcal{G}_1$.

Now, the algorithm accepts at step $j$ if $U_j \leq \min(W_j/M, 1)$. The output $Y^*$ is the first accepted candidate (or a fallback $Y^* \sim \pi_0(\cdot|x)$ if no acceptance occurs). Define three events on the Phase~2 randomness (conditional on Phase~1 randomness $\zeta$):
\begin{align*}
\mathcal{T} &:= \{\exists j \in [n] : W_j > M\} \quad\text{(truncation)}, \\
\mathcal{F} &:= \{\forall j \in [n] : U_j > \min(W_j/M, 1)\} \quad\text{(fallback)}, \\
\mathcal{S} &:= \mathcal{T}^c \cap \mathcal{F}^c \quad\text{(success: no truncation and at least one acceptance)}.
\end{align*}

\textit{Bounding $\Pr[\mathcal{T} \mid \zeta]$.}

On $\mathcal{G}_1$, the event $W_j > M$ implies $r(x,y'_j) + g'_j >  R_{\max} + \sigma_Z L$. Since $r(x,y'_j) \leq R_{\max}$, this implies $g'_j > \sigma_Z L$. By the Gaussian tail bound, for any $L > 0$, we have
\[
\Pr[W_j > M] \le \Pr[g'_j > \sigma_Z L] \le e^{-L^2/2}.
\]
Substituting $ L = \sqrt{2\log(n/\delta)}$, we get $\Pr[g'_j > \sigma_Z L]  \leq \delta/n$. A union bound over the $n$ trials gives
\begin{equation}
\label{eq:B3_trunc_bound}
\Pr[\mathcal{T} \mid \zeta] \leq n \cdot \delta/n = \delta.
\end{equation}

\textit{Bounding $\Pr[\mathcal{F} \mid \zeta]$.}

The per-trial acceptance probability is
\[
p_{\mathrm{acc}} := \E_{y'_j, g'_j}[\Pr[U_j \le  \min(W_j/M, 1)]] =\E_{y',g'}[\min(W/M, 1)].
\]
Note that
\[
p_{\mathrm{acc}} \geq \E[(W/M)\mathbf{1}[W \leq M]] = \frac{1}{M}\bigl(Z_{\bar{w}}^{(\zeta)} - \E[W\mathbf{1}[W > M]]\bigr).
\]

We need an upper bound on the second term $\E[W\mathbf{1}[W > M]]$. Since $M > 0$ on the event $\mathcal{G}_1$, the relu argument in $W$ is positive on the event $\{ W > M\}$. Furthermore, since $\{W > M\} \subseteq \{g' > \sigma_Z L\}$, we have
\[
\E[W\mathbf{1}[W > M]] \leq \E[\beta^{-1}(r(x,y') + g' - \tilde{\lambda})\mathbf{1}[g' > \sigma_Z L]].
\]
Using $r(x,y') \leq R_{\max}$, we get
\begin{align*}
  \E[(R_{\max} + g' - \tilde{\lambda})\mathbf{1}[g' > \sigma_Z L]] = (R_{\max} - \tilde{\lambda})\Pr[g' > \sigma_Z L] + \sigma_Z\E\big[ g'/\sigma_Z\cdot\mathbf{1}[g'/\sigma_Z>L]\big]~.
\end{align*}
Using $R_{\max} - \tilde{\lambda} = \beta M - \sigma_Z L \leq \beta M$, $\Pr[g' > \sigma_Z L] \leq \delta/n$ and $\E\big[ g'/\sigma_Z\cdot\mathbf{1}[g'/\sigma_Z>L]\big]=\phi(L) = e^{-L^2/2}/\sqrt{2\pi} = \delta/(n\sqrt{2\pi})$, we obtain 
\begin{align}\label{eq:imp}
\E[W\mathbf{1}[W > M]] \leq \frac{(\beta M + \sigma_Z)\delta}{\beta n} = \frac{(M + \sigma_Z/\beta)\delta}{n}.
\end{align}
Since $\lambda^* \ge -\beta$ and on $\zeta \ge -\beta/2$ on $\mathcal{G}_1$, we have $\tilde \lambda \ge -3\beta/2$ and hence $M \le (R_{\max}+\sigma_Z L+3\beta/2)/\beta$ on $\mathcal{G}_1$. Hence $M+\sigma_z/B \le 2M$.

On $\mathcal{G}_1$, we also have $Z_{\bar{w}}^{(\zeta)} \geq 1/2$ and so $p_{\mathrm{acc}} \geq \frac{1}{M}\left(\frac{1}{2} -\frac{2M\delta}{n}\right)= \frac{1}{2M} -\frac{2\delta}{n}$.

Now, by independence of trials, we obtain
\begin{align}
\label{eq:B3_fallback_bound}
\Pr[\mathcal{F} \mid \zeta] = \prod_{j=1}^n (1 - p_{\mathrm{acc}}) &\leq \left(1 - \left(\frac{1}{2M} -\frac{2\delta}{n}\right)\right)^n \leq \exp\left(-\frac{n}{2M}+2\delta\right)\nonumber\\&\leq \exp\left(-\frac{\beta n}{2R_{\max}+2\sigma_Z L +3 \beta}+2\delta\right) .
\end{align}

\textit{Output distribution on $\mathcal{S}$.}

On $\mathcal{S}$, no trial has $W_j > M$, so the acceptance rule $U_j \leq W_j/M$ is the standard rejection sampling rule with a valid envelope $M$. Define the \emph{truncated} target policy
\[
\pi_{\bar{w}, M}^{(\zeta)}(y|x) := \frac{\pi_0(y|x) \bar{w}_M^{(\zeta)}(x,y)}{Z_{\bar{w}, M}^{(\zeta)}}, \qquad \bar{w}_M^{(\zeta)}(x,y) := \E_{g}[W(y, g)\mathbf{1}[W(y, g) \leq M]],
\]
with $W(y,g) := \mathrm{relu}\bigl(\beta^{-1}(r(x,y) + g_j - \tilde{\lambda})\bigr)$ and $Z_{\bar{w}, M}^{(\zeta)} = \E_{\pi_0}[\bar{w}_M^{(\zeta)}]$.

\begin{lemma}
   Conditional on $\mathcal{S}$ (and $\zeta$), the Phase 2 output $Y^* \sim \pi_{\bar{w}, M}^{(\zeta)}(\cdot|x)$. 
\end{lemma}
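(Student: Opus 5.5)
The plan is a direct rejection-sampling computation. The key point is that, once we restrict to the no-truncation event, every per-trial quantity factors across the $n$ i.i.d.\ triples $(y'_j,g'_j,U_j)$, and the dependence on the output $y$ sits in a single factor $\pi_0(y|x)\,\bar w^{(\zeta)}_M(x,y)/M$. Throughout, $\zeta\in\mathcal{G}_1$ is fixed, so that $M>0$ and $\tilde\lambda$ is a constant. To avoid measurability issues with continuous $\mathcal{Y}$, I would work with a measurable set $A\subseteq\mathcal{Y}$ and show that $\Pr[Y^*\in A,\ \mathcal{S}\mid\zeta]$ is proportional to $\int_A \pi_0(y|x)\,\bar w^{(\zeta)}_M(x,y)\,dy$, with a proportionality constant independent of $A$.

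\textbf{Step 1 (single-trial quantities).} For one triple, with $W=W(y',g')$, define
\[
a:=\Pr[W\le M,\ U>W/M]\quad\text{(clean rejection)},\qquad b:=\Pr[W\le M]\quad\text{(no truncation)}.
\]
I then compute the probability that a trial is accepted without truncation and lands in $A$. On $\{W\le M\}$ we have $\min(W/M,1)=W/M$, so integrating out $U$ and then $g'$ gives
\[
\Pr[y'\in A,\ W\le M,\ U\le W/M]=\E_{y'}\!\Big[\ind\{y'\in A\}\,\E_{g'}\big[\tfrac{W}{M}\ind\{W\le M\}\big]\Big]=\frac1M\int_A \pi_0(y|x)\,\bar w^{(\zeta)}_M(x,y)\,dy .
\]
This is exactly where the truncated weight $\bar w_M^{(\zeta)}$, and not $\bar w^{(\zeta)}$, appears.

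\textbf{Step 2 (decompose by halting index).} On $\mathcal{S}$ the output is $y'_j$ for the first accepted index $j$. The event $\{Y^*\in A\}\cap\mathcal{S}\cap\{\text{first acceptance at }j\}$ is the intersection of three per-trial events:
\begin{itemize}[leftmargin=*,itemsep=0pt]
\item for $i<j$: $W_i\le M$ and trial $i$ rejects;
\item for $i=j$: $y'_j\in A$, $W_j\le M$, and $U_j\le W_j/M$;
\item for $i>j$: $W_i\le M$ only. These trials are never inspected, but $\mathcal{T}^c$ still requires them not to truncate.
\end{itemize}
By independence across trials, this probability equals $a^{\,j-1}\cdot\frac1M\int_A\pi_0\bar w^{(\zeta)}_M\cdot b^{\,n-j}$. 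Summing over $j\in[n]$ gives
\[
\Pr[Y^*\in A,\ \mathcal{S}\mid\zeta]=c\int_A\pi_0(y|x)\,\bar w^{(\zeta)}_M(x,y)\,dy,\qquad c=\frac1M\sum_{j=1}^n a^{j-1}b^{n-j},
\]
and $c$ does not depend on $A$. Taking $A=\mathcal{Y}$ gives $\Pr[\mathcal{S}\mid\zeta]=c\,Z^{(\zeta)}_{\bar w,M}$. Dividing, $\Pr[Y^*\in A\mid\mathcal{S},\zeta]=\int_A\pi^{(\zeta)}_{\bar w,M}(y|x)\,dy$, which is the claim. I also need $Z^{(\zeta)}_{\bar w,M}>0$ so that the conditioning is well defined. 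This follows from \eqref{eq:imp} together with $Z^{(\zeta)}_{\bar w}\ge 1/2$ on $\mathcal{G}_1$, for $\delta$ small relative to $n$.

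\textbf{Main obstacle.} The delicate point is that conditioning on $\mathcal{S}$ also constrains the trials after the halting index, and the argument must show this does not bias the selected $y$. The factorization in Step 2 handles it: the constraint on those later trials contributes only the factor $b^{n-j}$, which is independent of $A$ and is absorbed into $c$. The same reasoning covers the earlier trials, whose contribution is $a^{j-1}$. A secondary detail is keeping the order of integration explicit, first $U$, then $g'$, then $y'$, so that the conditional weight is exactly $\E_g[W\ind\{W\le M\}]=\bar w^{(\zeta)}_M$.
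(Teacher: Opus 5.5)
Your proof is correct and follows the same route as the paper: a single-trial rejection-sampling computation showing that an accepted, untruncated draw has density proportional to $\pi_0(y|x)\,\bar w^{(\zeta)}_M(x,y)$. Your explicit sum over the halting index $j$, with the $A$-independent factors $a^{j-1}$ and $b^{n-j}$, makes rigorous the paper's one-line claim that the first-acceptance stopping time does not bias the output; the factor $b^{n-j}$ accounts for the no-truncation constraint that $\mathcal{S}$ places on the later trials, which are never inspected.
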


\begin{proof}[Proof of claim]
For a single trial, the joint density of $(y'_j, g'_j)$ on the event ``$W_j \leq M$ and accept at $j$'' is
\[
p(y, g) = \pi_0(y|x)\phi_{\sigma_Z}(g)\mathbf{1}[W(y, g) \leq M]\cdot (W(y, g)/M),
\]
with marginal (integrating over $g$):
\[
\int p_j(y, g)\, dg = \frac{\pi_0(y|x)}{M}\int \phi_{\sigma_Z}(g) W(y, g)\mathbf{1}[W(y, g) \leq M]\, dg = \frac{\pi_0(y|x)\bar{w}_M^{(\zeta)}(x,y)}{M}.
\]
Normalizing, we get $p(y \mid \text{accept at step } j, W_j \leq M) = \pi_0(y|x)\bar{w}_M^{(\zeta)}(x,y)/Z_{\bar{w}, M}^{(\zeta)} = \pi_{\bar{w}, M}^{(\zeta)}(y|x)$. Since each step conditional on acceptance and no truncation gives the same distribution, and the first-accepted-step stopping time does not bias this distribution, the output $Y^* \mid \mathcal{S} \sim \pi_{\bar{w}, M}^{(\zeta)}$ exactly.  
\end{proof}

\textit{Bounding the $\bar{w}$ vs $\bar{w}_M$ gap.}

We bound the pointwise difference
\begin{align*}
   |\bar{w}^{(\zeta)}(x,y) - \bar{w}_M^{(\zeta)}(x,y)| &= \E_{g}[W(y, g)\mathbf{1}[W(y, g) > M]] \\&\leq \frac{(M + \sigma_Z/\beta)\delta}{n} \le \frac{(R_{\max} + \sigma_Z(L+1)+3\beta/2)\delta}{\beta n}  
\end{align*}
by the same computation as in \eqref{eq:imp} and using $M \le (R_{\max}+\sigma_Z L+3\beta/2)/\beta$ on $\mathcal{G}_1$. The same bound applies to $|Z_{\bar{w}}^{(\zeta)} - Z_{\bar{w}, M}^{(\zeta)}|$. Applying Lemma~\ref{lem:smooth_weight} with $\eta = \frac{(R_{\max} + \sigma_Z(L+1)+3\beta/2)\delta}{\beta n} $ and since $\max(Z_{\bar{w}}^{(\zeta)}, Z_{\bar{w}, M}^{(\zeta)}) \geq Z_{\bar{w}}^{(\zeta)} \geq 1/2$ on $\mathcal{G}_1$, we get
\begin{equation}
\label{eq:B3_bar_to_barM}
|J(\pi_{\bar{w}}^{(\zeta)}) - J(\pi_{\bar{w}, M}^{(\zeta)})| \leq \frac{2R_{\max}(R_{\max}+\sigma_Z (L+1)+3/2)\delta}{(\beta n) \cdot (1/2)} = O(R_{\max}(R_{\max}+\sigma_Z L)\delta/\beta n).
\end{equation}

\textit{Assembling the conditional gap.}

Note that
\[
J(\pi_{\mathrm{PrivITP}}^{(\zeta)}) = \E[r^*(x,Y^*)] = \E[r^*(x,Y^*)\mathbf{1}_\mathcal{S}] + \E[r^*(x,Y^*)\mathbf{1}_{\mathcal{S}^c}].
\]
We know that $Y^* \mid \mathcal{S} \sim \pi_{\bar{w}, M}^{(\zeta)}(\cdot|x)$, so $\E[r^*(Y^*)\mathbf{1}_\mathcal{S}] = J(\pi_{\bar{w}, M}^{(\zeta)})\Pr[\mathcal{S}]$. Therefore:
\begin{align*}
J(\pi_{\bar{w}, M}^{(\zeta)}) - J(\pi_{\mathrm{PrivITP}}^{(\zeta)})
&= J(\pi_{\bar{w}, M}^{(\zeta)}) - J(\pi_{\bar{w}, M}^{(\zeta)})\Pr[\mathcal{S}] - \E[r^*(x,Y^*)\mathbf{1}_{\mathcal{S}^c}] \\
&= J(\pi_{\bar{w}, M}^{(\zeta)})\Pr[\mathcal{S}^c] - \E[r^*(Y^*)\mathbf{1}_{\mathcal{S}^c}].
\end{align*}
Taking absolute values and using $r^* \in [0, R_{\max}]$, we get $J(\pi_{\bar{w}, M}^{(\zeta)}) \in [0, R_{\max}]$ and $\E[r^*(Y^*)\mathbf{1}_{\mathcal{S}^c}] \in [0, R_{\max}\Pr[\mathcal{S}^c]]$ and thus
\[
|J(\pi_{\bar{w}, M}^{(\zeta)}) - J(\pi_{\mathrm{PrivITP}}^{(\zeta)})| \leq 2R_{\max}\Pr[\mathcal{S}^c \mid \zeta].
\]
Combining with \eqref{eq:B3_bar_to_barM} via triangle inequality
\begin{equation*}
|J(\pi_{\bar{w}}^{(\zeta)}) - J(\pi_{\mathrm{PrivITP}}^{(\zeta)})| \leq 2R_{\max}\Pr[\mathcal{S}^c \mid \zeta] +  O(R_{\max}(R_{\max}+\sigma_Z L)\delta/\beta n).
\end{equation*}
From \eqref{eq:B3_trunc_bound} and \eqref{eq:B3_fallback_bound}, we have $\Pr[\mathcal{S}^c \mid \zeta] \leq \delta + \exp\left(-\frac{\beta n}{2R_{\max}+2\sigma_Z L +3 \beta}+2\delta\right)$. Setting $\delta \le 1/\sqrt{n}$ and $n$ such that $n/\log n \ge (2R_{\max}+2\sigma_Z \sqrt{3\log n} +3 \beta)/\beta$, we get $\Pr[\mathcal{S}^c \mid \zeta] \le O(1/\sqrt{n})$.

Therefore, on $\mathcal{G}_1$, we have
\begin{equation}\label{eq:B3_conditional_final}
|J(\pi_{\bar{w}}^{(\zeta)}) - J(\pi_{\mathrm{PrivITP}}^{(\zeta)})| \leq  O(R_{\max}(R_{\max}+\sigma_Z L)/\beta n^{3/2})+ O(R_{\max}/\sqrt{n})= O(R_{\max}/\sqrt{n}).
\end{equation}

\textit{Averaging over $\zeta$.}

Observe that $\Pr[\mathcal{G}_1^c] = \Pr[|\zeta| > \beta/2] \leq 2\exp(-\beta^2/(8\sigma_X^2))$ and  $|J(\pi_{\bar{w}}^{(\zeta)}) - J(\pi_{\mathrm{PrivITP}}^{(\zeta)})| \leq R_{\max}$ for all $\zeta$. Combining this with \eqref{eq:B3_conditioning} and \eqref{eq:B3_conditional_final}, we get
\begin{align}\label{eq:B3}
    |J(\pi_{\bar{w}}) - J(\pi_{\mathrm{PrivITP}})| &\leq \E_\zeta\bigl[|J(\pi_{\bar{w}}^{(\zeta)}) - J(\pi_{\mathrm{PrivITP}}^{(\zeta)})| \cdot \mathbf{1}_{\mathcal{G}_1}\bigr] + R_{\max}\Pr[\mathcal{G}_1^c]\nonumber\\
    &\leq O(R_{\max}/\sqrt{n}) + 2R_{\max} \exp(-\beta^2/(8\sigma_X^2)).
\end{align}
Combining \eqref{eq:A}, \eqref{eq:B1}, \eqref{eq:B2} and \eqref{eq:B3}, we obtain the regret
\[
\mathrm{Reg}(x) \leq \sqrt{C^{\pi^*}\varepsilon_{\mathrm{RM}}^2} + \beta C^{\pi^*} + \frac{\varepsilon_{\mathrm{RM}}^2}{\beta} + O\left(R_{\max}\left(\frac{\sigma_X+\sigma_Z}{\beta} +\frac{1}{\sqrt{n}}+\exp\left(-\frac{\beta^2}{8\sigma_X^2}\right)\right)\right). 
\]
For $\beta \geq  2\sigma_X$, we can subsume the negative exponential term in $\sigma_X/\beta$ and obtain
\begin{align*}
  \mathrm{Reg}(x) \leq \sqrt{C^{\pi^*}\varepsilon_{\mathrm{RM}}^2} + \beta C^{\pi^*} + \frac{\varepsilon_{\mathrm{RM}}^2}{\beta} + \frac{R_{\max}(\sigma_X+\sigma_Z)}{\beta} + \frac{R_{\max}}{\sqrt{n}}~.  
\end{align*}
Combining the requirement on $n$ for Term (A) an (B3), the bounds holds for $n = \max \left\lbrace\Omega\!\left(\frac{R_{\max}}{\beta}\log\!\left(\frac{R_{\max}}{\beta \,\epsilon_{\mathrm{RM}}}\right)\right), \widetilde \Omega\left(\frac{R_{\max}+\sigma_Z}{\beta}\right)\right\rbrace$.
\end{proof}

\section{Multi-Query Deployment: Proof of proposition~\ref{thm:privitp_fsrc_comp}}
\label{app:composition}

\begin{algorithm}[t]
\caption{PrivITP with Filtered Self-Reporting Composition}
\label{alg:privitp_fsrc}
\begin{algorithmic}[1]
\REQUIRE Total privacy budget $(\varepsilon_{\mathrm{total}}, \delta)$; stream of prompts $x_1, x_2, \ldots$; PrivITP hyperparameters $(\beta, \sigma_X, \sigma_Z, L)$
\STATE Set cumulative privacy spend $\varepsilon_{\mathrm{spent}} \leftarrow 0$
\STATE Compute per-query worst-case bound $\varepsilon_{\max}$ from \Cref{thm:privitp_priv} at $t = n$
\FOR{$\tau = 1, 2, \ldots$}
    \IF{$\varepsilon_{\mathrm{spent}} + \varepsilon_{\max} \geq \varepsilon_{\mathrm{total}}$}
        \STATE \textbf{HALT} 
    \ENDIF
    \STATE Run PrivITP on $x_\tau$ (\Cref{alg:privitp}), obtaining response $y^{*(\tau)}$ and halting time $t_\tau$
    \STATE Compute ex-post privacy cost $\varepsilon_\tau \leftarrow \varepsilon_1 + \varepsilon_2^{\mathrm{post}}(t_\tau)$ from \Cref{thm:privitp_priv}
    \STATE Update $\varepsilon_{\mathrm{spent}} \leftarrow \varepsilon_{\mathrm{spent}} + \varepsilon_\tau$
    \STATE Release $y^{*(\tau)}$
\ENDFOR
\end{algorithmic}
\end{algorithm}

\begin{proof}
Privacy follows directly from~\citet[Theorem~11]{lebensold2024privacy}: \Cref{alg:privitp_fsrc} is an instance of their Filtered Composition framework with $\varepsilon_{\max,t} = \varepsilon_{\max}$ (worst-case PrivITP bound) and $\varepsilon_{\mathrm{post},t}(o_t) = \varepsilon_1 + \varepsilon_2^{\mathrm{post}}(t_\tau)$ (ex-post PrivITP bound from \Cref{thm:privitp_priv}). Since PrivITP is both $(\varepsilon_{\max}, \delta)$-pDP (via its worst-case bound) and $\varepsilon_{\mathrm{post},\tau}$-ex-post-DP (via \Cref{thm:privitp_priv}), the filter condition ensures the cumulative ex-post cost never exceeds $\varepsilon_{\mathrm{total}}$, and the pDP bound handles the tail risk in the final query.

The bound is a direct consequence of the stopping rule: the algorithm halts when $\varepsilon_{\mathrm{spent}} + \varepsilon_{\max} \geq \varepsilon_{\mathrm{total}}$, so by Wald's identity applied to the cumulative cost process, $\E[T^*] \cdot \E[\varepsilon_\tau] \geq \varepsilon_{\mathrm{total}} - \varepsilon_{\max}$, giving the stated lower bound (ignoring the boundary term).
\end{proof}

\begin{remark}[Comparison with privacy filters]
\label{rem:fsrc_vs_filters}
An alternative to FSRC is the privacy filter framework of~\citet{rogers2023adaptive,whitehouse2023fully}, which provides a stopping rule based on the cumulative privacy cost computed via advanced composition. The key difference is that privacy filters use the worst-case per-query cost, while FSRC uses the actual ex-post cost. For PrivITP, where the ex-post cost is typically much smaller than the worst-case cost, FSRC yields strictly tighter deployment budgets. \citet{lebensold2024privacy} demonstrate this advantage empirically, showing improvements in query throughput at the same total privacy budget.
\end{remark}

\section{PrivBoN with Gaussian Noise}\label{sec:app_gauss}

Gumbel noise in PrivBoN yields pure $\epsilon$-DP and converges to a closed-form tilted
policy. A natural question is whether Gaussian
noise---standard in most DP analyses---offers comparable guarantees. We show here that
Gaussian noise has qualitatively different properties that make it a less clean match
for inference-time alignment, which motivates our choice of Gumbel noise.




\begin{lemma}[Privacy]
\label{thm:gaussian_priv}
Let $\hat{r}_D(x, \cdot) \in [0, R_{\max}]$ with sensitivity $\Delta_r$. Then, \Cref{alg:mbon} with $g_i \stackrel{\text{iid}}{\sim} \mathcal{N}(0, \sigma^2)$ satisfies \textbf{pure} $\varepsilon$-DP with
\begin{equation}
\label{eq:gaussian_mbon_eps}
\varepsilon = \log\frac{\displaystyle \E_{z \sim \mathcal{N}(0,1)}\!\left[\Phi\!\left(z - \frac{ R_{\max}- 2\Delta_r}{\sigma}\right)^{n-1}\right]}{\displaystyle \E_{z \sim \mathcal{N}(0,1)}\!\left[\Phi\!\left(z - \frac{R_{\max}}{\sigma}\right)^{n-1}\right]},
\end{equation}
where $\Phi$ denotes the standard Gaussian CDF. 
\end{lemma}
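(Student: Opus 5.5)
Condition on the candidate set $y_{1:n}$, which is drawn from $\pi_0$ independently of $D$. Since returning $y_{i^*}$ is post-processing of the index $i^*$ given $y_{1:n}$, it suffices to bound the likelihood ratio $\Pr[i^*=k\mid D]/\Pr[i^*=k\mid D']$ uniformly over $k$, $y_{1:n}$, and adjacent $D\sim D'$. Write $q_i=\hat r_D(x,y_i)$ and $q_i'=\hat r_{D'}(x,y_i)$, so that $|q_i-q_i'|\le\Delta_r$ and $q_i,q_i'\in[0,R_{\max}]$. Conditioning on the winner's noise $g_k=\sigma z$ with $z\sim\mathcal N(0,1)$ gives the exact formula
\[
\Pr[i^*=k\mid D]=\E_{z}\Big[\prod_{j\ne k}\Phi\Big(z+\tfrac{q_k-q_j}{\sigma}\Big)\Big].
\]

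The plan is to bound the numerator from above and the denominator from below by worst-case configurations. In this expression every factor is increasing in $q_k-q_j$. Hence $\Pr[i^*=k\mid D]$ is largest when all gaps $q_k-q_j$ are as large as possible, and $\Pr[i^*=k\mid D']$ is smallest when the gaps $q'_k-q'_j$ are as small as possible.

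The key subtlety is that we must bound the ratio, not each term separately, and that the two configurations are coupled through $|q_i-q_i'|\le\Delta_r$. We have $q_k-q_j\le (q'_k-q'_j)+2\Delta_r$. So for fixed $D'$ scores, the numerator is at most $\E_z[\prod_{j\ne k}\Phi(z+(q'_k-q'_j+2\Delta_r)/\sigma)]$. The ratio therefore reduces to
\[
\frac{G(c+2\Delta_r)}{G(c)},\qquad G(c)=\E_z\Big[\prod_{j}\Phi(z+c_j/\sigma)\Big],
\]
where $c=(c_j)$ with $c_j=q'_k-q'_j\in[-R_{\max},R_{\max}]$. I would then argue that this ratio is maximized at $c_j=-R_{\max}$ for all $j$, that is, when the selected candidate has the lowest possible score and all competitors the highest. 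The tool would be a log-concavity argument in the spirit of Lemma~\ref{lem:monotone}. Each factor $\Phi$ is log-concave, so $\log \Phi(z+c/\sigma+2\Delta_r/\sigma)-\log\Phi(z+c/\sigma)$ is non-increasing in $c$. I would extend this to the $z$-mixture and product, possibly via Prékopa–Leindler applied to the joint log-concave integrand in $(z,c)$, to show that the ratio is coordinatewise non-increasing in each $c_j$.

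At the extreme point, $c_j+2\Delta_r=-(R_{\max}-2\Delta_r)$, which yields exactly $\E_z[\Phi(z-(R_{\max}-2\Delta_r)/\sigma)^{n-1}]/\E_z[\Phi(z-R_{\max}/\sigma)^{n-1}]$. Taking the logarithm gives the claimed $\varepsilon$. There is a feasibility caveat: a shift of $2\Delta_r$ must be realizable with scores in $[0,R_{\max}]$. Since we only need an upper bound, relaxing to unconstrained shifts of size $2\Delta_r$ is fine. The main obstacle is the monotonicity step: log-concavity of a Gaussian mixture of products gives monotonicity of ratios of the form $G(c+h)/G(c)$ only in the aggregate direction, and care is needed to justify that pushing every $c_j$ to $-R_{\max}$ simultaneously increases the ratio. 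If the direct argument stalls, I would first reduce by symmetry to the equal-gap case $c_j\equiv c$ and prove monotonicity there, using that $z\mapsto\Phi(z+a)^{n-1}\phi(z)$ is log-concave.
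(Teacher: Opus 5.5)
Your reduction is sound and matches what the paper outsources to \citet[Theorem~8]{lebensold2024privacy}. You condition on $y_{1:n}$, use the exact probit formula for $\Pr[i^*=k\mid D]$, bound $q_k-q_j\le c_j+2\Delta_r$, and then need $G(c+2\Delta_r\mathbf{1})/G(c)$ to be maximized at $c=-R_{\max}\mathbf{1}$, which gives exactly the stated $\varepsilon$. The gap is the monotonicity step you flag, and neither of your proposed routes closes it.

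Log-concavity of $G$ (e.g.\ from Pr\'ekopa--Leindler on the joint integrand in $(z,c)$) only says that the Hessian $H$ of $\log G$ is negative semidefinite. Coordinatewise monotonicity of $\log G(c+2\Delta_r\mathbf{1})-\log G(c)$ instead amounts to non-positive row sums $\sum_i H_{ij}\le 0$. That is a sign condition on mixed partials, which concavity never supplies. In fact $H_{ij}\ge 0$ for $i\ne j$ here, because $G(c)$ is the CDF at $c/\sigma$ of the $\mathrm{MTP}_2$ Gaussian vector $(Z_j-Z_0)_j$; so the row-sum condition is a genuine balance, not a sign consequence.

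Passing the pointwise log-concavity of $\Phi$ through the $z$-mixture also fails, for a concrete reason. Write the ratio as $\E_{\mu_c}[\rho_c(z)]$ with $\mu_c\propto\phi(z)\prod_j\Phi(z+c_j/\sigma)$ and $\rho_c(z)=\prod_j\Phi(z+(c_j+2\Delta_r)/\sigma)/\Phi(z+c_j/\sigma)$. Raising $c_j$ decreases $\rho_c$ pointwise, but it also moves $\mu_c$ toward small $z$, where the non-increasing function $\rho_c$ is largest. The two effects have opposite signs.

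The symmetry fallback is not a valid reduction either. Permutation invariance does not place the maximum over the box on the diagonal, and the equal-gap log-concavity argument only covers that one line.

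The missing idea is a change of variables that removes $c$ from the integrand. With $h=2\Delta_r/\sigma$, substitute $u=z+h$ in the numerator and use $\phi(u-h)=\phi(u)e^{hu-h^2/2}$:
\[
\frac{G(c+2\Delta_r\mathbf{1})}{G(c)}=e^{-h^2/2}\,\E_{U\sim\mu_c}\bigl[e^{hU}\bigr].
\]
Raising any $c_j$ to $c_j'$ multiplies the density of $\mu_c$ by $\Phi(u+c_j'/\sigma)/\Phi(u+c_j/\sigma)$. This factor is non-increasing in $u$ by log-concavity of $\Phi$. So $\mu_c$ decreases in likelihood-ratio (hence stochastic) order, and $\E_{\mu_c}[e^{hU}]$ decreases. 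This gives the coordinatewise monotonicity and the maximum at $c=-R_{\max}\mathbf{1}$, and it turns your outline into a self-contained proof of the cited theorem.
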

\begin{proof}
PrivBon with Gaussian noise is an instance of Gaussian Report Noisy Max applied to the $n$ reward scores $\hat{r}_D(x, y_1), \ldots, \hat{r}_D(x, y_n)$, each bounded in $[0, R_{\max}]$ with sensitivity $\Delta_r$. The result follows directly from  \citet[Theorem 8]{lebensold2024privacy}, which establishes pure $\varepsilon$-DP for Gaussian Report Noisy Max over bounded queries with the closed-form expression in~\eqref{eq:gaussian_mbon_eps}. The proof identifies the worst-case reward configurations on adjacent datasets $D \sim D'$ via a monotonicity argument, then expresses the privacy loss as a ratio of Gaussian expectations over products of CDFs.
\end{proof}
The bound is computable via numerical integration (see \citealp{lebensold2024privacy}, Section 3.4) and depends on the reward range $R_{\max}$, the sensitivity $\Delta_r$, the noise scale $\sigma$, and the number of candidates $n$ -- but crucially does not require the approximate-DP slack $\delta$.  Unlike Gumbel noise, the privacy budget $\epsilon$ depends
on $n$ through a numerical integral, though the growth is very slow in practice.


\begin{proposition}[Policy]
\label{prop:gaussian_policy}
PrivBoN with Gaussian noise induces the selection probability $w_i = \E_Z[\prod_{j\neq i}\Phi(Z+(\hat{r}_D(x,y_i)-\hat{r}_D(x,y_j))/\sigma)]$, $i \in [n]$, where $Z \sim \mathcal{N}(0,1)$. 
\end{proposition}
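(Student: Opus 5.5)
Condition on the candidate set $y_{1:n}$, which is drawn from $\pi_0$ independently of the Gaussian noise. Write $r_k := \hat r_D(x,y_k)$ for each $k$. Let $g_1,\dots,g_n \stackrel{\text{iid}}{\sim}\mathcal{N}(0,\sigma^2)$ and write $g_k = \sigma Z_k$ with $Z_k$ standard normal. The plan is to compute $w_i = \Pr[i^* = i \mid y_{1:n}]$ directly by conditioning on the noise of the winning candidate. Since the noise is continuous, ties occur with probability zero. So the event $\{i^*=i\}$ equals the event $\{r_i + \sigma Z_i > r_j + \sigma Z_j \ \forall j\neq i\}$ up to a null set.

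The key step is to condition on $Z_i = z$. Given this, the events $\{r_j + \sigma Z_j < r_i + \sigma z\}$ for $j \neq i$ are mutually independent, because the $Z_j$ are independent of each other and of $Z_i$. Each has probability
\[
\Pr\!\left[Z_j < z + (r_i - r_j)/\sigma\right] = \Phi\!\left(z + (r_i-r_j)/\sigma\right).
\]
The conditional probability is therefore the product $\prod_{j\neq i}\Phi(z + (r_i - r_j)/\sigma)$. Integrating over $z$ against the standard normal density (tower property) gives
\[
w_i = \E_{Z\sim\mathcal{N}(0,1)}\!\Big[\prod_{j\neq i}\Phi\big(Z + (\hat r_D(x,y_i)-\hat r_D(x,y_j))/\sigma\big)\Big].
\]
This is the stated formula.

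As a sanity check, the $w_i$ sum to one, since the events $\{i^*=i\}$ partition the sample space up to ties. If desired, the induced policy then follows exactly as in the proof of Proposition~\ref{prop:gumbel_policy}, by exchangeability of the candidates:
\[
\pi(y\mid x) = n\,\pi_0(y\mid x)\,\E_{y_{2:n}}[w_1], \qquad \text{with } y_1 = y.
\]

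There is no real obstacle here. The only points needing care are the null probability of ties, and conditioning on $Z_i$ so that the remaining comparisons become independent. Without that conditioning, the comparison events $\{r_i+g_i>r_j+g_j\}$ are all coupled through the shared variable $g_i$.
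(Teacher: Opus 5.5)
Your proof is correct and follows the paper's argument essentially verbatim: condition on the candidate set and on the winning candidate's noise $g_i = \sigma z$, use independence of the remaining $g_j$ to get the product $\prod_{j\neq i}\Phi(z + (r_i - r_j)/\sigma)$, and integrate against the standard normal density. Your extra care about ties (a null event) and the exchangeability step for the induced policy match what the paper does, the latter by reference to the Gumbel case.
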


\begin{proof}
Define $r_i := \hat r_D(x,y_i)$ for brevity. The derivation follows the same symmetry argument as \Cref{prop:gumbel_policy}. The only difference is the selection probability. Candidate $i$ wins if $r_i + g_i \geq r_j + g_j$ for all $j \neq i$. Conditioning on $g_i = \sigma \cdot z$, where $z \sim \mathcal{N}(0,1)$, we get
\[
\Pr(i \text{ wins} \mid g_i = \sigma \cdot z) = \prod_{j \neq i} \Pr(g_j \leq \sigma \cdot z + r_i - r_j) = \prod_{j \neq i} \Phi\!\left(z + \frac{r_i - r_j}{\sigma}\right).
\]
Integrating over $z$, we have
\[
w_i:=\Pr(i \text{ wins}) = \int_{-\infty}^{\infty} \varphi(z)\prod_{j \neq i}\Phi\!\left(z + \frac{r_i-r_j}{\sigma}\right)dz = \E_{Z\sim\mathcal{N}(0,1)}\!\left[\prod_{j\neq i}\Phi\!\left(Z + \frac{r_i-r_j}{\sigma}\right)\right].
\]
The rest (symmetry reduction, tower property, marginal policy) is identical to the Gumbel case with $w_i$ replacing the softmax weight.
\end{proof}
This probit integral has no closed-form simplification for general $n$. Unlike the Gumbel case, PrivBON with Gaussian noise does not converge to a tilted policy as $n \to \infty$: Gaussian extreme value theory dominates, and the selection concentrates on the candidate with the highest $\hat{r}_D$, recovering hard BoN in the limit rather than a smoothed distribution.

\section{Details on Experiments}
\label{app:experiments}

\subsection{Experimental Setup}

\textbf{Compute resources.} All experiments were run using 4 parallel 48GB GPU cards and one 140 GB GPU card. The time to generate responses and score them depends on the size of the base and reward models -- the larger the models, the longer the inference time.

\textbf{Datasets.}
We evaluate on three benchmarks with verifiable ground-truth correctness, assigning
$r^\star(x,y) = 1$ for correct answers and $0$ otherwise. GSM8K~\citep{cobbe2021gsm8k}
is the standard $1$K-prompt grade-school math test split. MMLU~\citep{hendrycks2020mmlu}
covers $\sim\!100$ college-level multiple-choice questions each in math and chemistry.
MATH-500 \citep{lightman2023lets}, a widely adopted, representative subset of 500 test problems from the original MATH dataset \citep{hendrycks2021math}. We use zero-shot Chain-of-Thought prompting \citep{wei2022cot} throughout.

\textbf{Reward models.}
We use four reward models as the proxy $\hat{r}_\mathcal{D}$:
Oasst-RM (Pythia-1.4b), Gemma-RM (Gemma-2-2b),
Llama-RM (Llama-3-3b) and Armo-RM (Llama-3-8b).
The reference policy $\pi_{\mathrm{ref}}$ (denoted interchangeably $\pi_0$)
is Gemma-2-2B-Instruct and Phi-3-Mini-Instruct.

\textbf{Sampling and evaluation.}
For each prompt, we sample $10$K responses at temperature~$1$ from $\pi_{\mathrm{ref}}$,
then draw $M = 50$ bootstrap replicates of $n$ responses and run each algorithm on
every replicate. Per-prompt accuracy is the fraction of replicates yielding a correct
answer; reported accuracy averages over prompts, with standard errors estimated across prompt distribution.

\subsection{Ablation: Noise Scale for PrivBoN}

The noise scale $\sigma$ in PrivBoN controls both the privacy budget
($\varepsilon = 2\Delta_r/\sigma$) and the strength of KL-regularization, with
\Cref{thm:gumbel_regret} predicting an optimal $\sigma^* = \varepsilon_{\mathrm{RM}}/\sqrt{C^{\pi^*}}$
balancing under-regularization (reward hacking) and over-regularization (suppressing
the reward signal). Since $C^{\pi^*}$ is not computable in practice, to select $\sigma$ for the experiments above, we sweep
$\sigma$ and
report accuracy lift, and expected proxy reward in \Cref{fig:sigma_ablation} on a held out set. The resulting profile is
single-peaked: small $\sigma$ allows reward hacking (accuracy drops as PrivBoN
approaches BoN's behavior), while a large $\sigma$ over-regularizes (accuracy drops
toward $\pi_{\mathrm{ref}}$). We use the empirically best
$\sigma$ for the main results in \Cref{tab:rm_dataset_sweep}. The same $\sigma$ is also taken as a total noise in PrivITP for a fair comparison.

\subsection{Ablation: Regularization for ITP}

The regularization parameter $\beta$ in ITP and PrivITP plays a role analogous to $\sigma$
in PrivBoN, with \Cref{thm:privitp_regret} predicting an optimal
$\beta^*$ that balances
$\chi^2$-bias against overoptimization and privacy noise. Sweeping
$\beta \in \{0.0005,0.001,0.005,0.01,0.05,0.1,0.5\}$, 
and
reporting accuracy lift, and expected proxy reward in \Cref{fig:beta_ablation}, we find the empirically best $\beta$ on a held out set.
Crucially, we observe that
PrivITP's optimal $\beta$ is essentially independent of the privacy budget
$\varepsilon$---validating Theorem~\ref{thm:privitp_regret}'s decoupling claim that regularization and privacy can be tuned separately. We use the empirically best
$\beta$ for the main results in \Cref{tab:rm_dataset_sweep}.

\subsection{Algorithm comparison across reward models and datasets}

We compare four algorithms (BoN, PrivBoN, ITP, and PrivITP) across four
reward models (Oasst, Gemma, Llama, Armo) and three datasets (GSM8K, MMLU, MATH),
fixing the base policy to Gemma-2-2b-Instruct and Phi-3-Mini-Instruct. For each (algorithm, RM, dataset) configuration, we sweep the number of candidates $N \in \{2, 4, 8, 16, 32, 64, 128, 256, 512, 1024, 2048, 4096\}$ (interchangeably denoted by $n$). Private algorithms are calibrated to the empirical best $\sigma$ obtained for PrivBoN from the ablation study; non-private ITP serves as the regret-optimal skyline. We report
two quantities per configuration: (a)~accuracy lift over $\pi_{\mathrm{ref}}$,
measuring \emph{true-reward} performance, and (b)~expected proxy reward
$\mathbb{E}[\hat{r}_\mathcal{D}]$ under each algorithm's output distribution,
measuring how aggressively each algorithm exploits the reward model.
\Cref{fig:gsm8k_comparison,fig:mmlu_comparison,fig:math_comparison} present
the results for the base policy Gemma-2-2b-Instruct, one figure per dataset, each with a $4 \times 2$ grid: rows index
reward models, the left column shows accuracy lift vs.~$N$, the right column
shows expected proxy reward vs.~$N$. In a similar way, \Cref{fig:gsm8k_comparison_phi3,fig:math_comparison_phi3,fig:mmlu_comparison_phi3}  present the results for the base policy Phi-3-Mini-Instruct. \Cref{fig:privacy-vs-utility,fig:privacy-vs-utility_MMLU,fig:privacy-vs-utility_MATH} compare \% accuracy lift of ITP vs. PrivITP for different values of $\sigma$ on GSM8K, MMLU and MATH datasets respectively under Gemma2-2B-Instruct base policy, showing privacy-utility trade-off. \Cref{tab:rm_dataset_sweep} summarizes the results at $N = 2^{12}$ across all datasets and RMs for Gemma2-2B-Instruct base policy whereas \Cref{tab:rm_dataset_sweep_llama} summarizes the results at $N = 2^{12}$ across all datasets and RMs for Llama-3.2-3B-Instruct base policy

Three patterns hold consistently (except, in some cases, for the harder MATH dataset).
\emph{First}, BoN's accuracy lift exhibits the reward-hacking signature: it rises
with $N$ initially, then \emph{decreases} starts decreasing, while its expected
proxy reward continues to climb monotonically. The two curves diverge---a
demonstration of Goodhart's law---confirming that BoN exploits proxy errors at
the cost of true reward. \emph{Second}, PrivBoN, ITP, and PrivITP are all
scaling-monotone: accuracy lift improves (or plateaus) with $N$ and proxy reward
saturates rather than diverging. \emph{Third}, PrivITP dominates PrivBoN on every (RM, dataset) configuration where both
provide positive lift, and on GSM8K it recovers between 60\% and 95\% of
the non-private ITP skyline; on MMLU and MATH, where the skyline itself
is small or noisy, PrivITP closely tracks ITP within standard error. The proxy-reward panels make the mechanism visible:
PrivITP's expected proxy reward saturates at a level above PrivBoN's, indicating
that PrivITP's $\chi^2$-pessimism allows more aggressive exploitation of
\emph{informative} reward signals while still preventing overfitting to errors.
Similar to \citet{huang2025bon}, we also observe that ITP variants tend to have higher average performance than BoN variants, although in many instances this difference is not statistically significant.

\subsection{FSRC composition}
We supplement the main-text composition experiment (\Cref{fig:main_all}(bottom right))
with sweeps across reward models using Gemma2-2B-Instruct base policy
on GSM8K dataset with $\varepsilon_{\mathrm{total}} = 50$. See~\Cref{fig:fsrc_comp}.
Three patterns hold. (i)~Absolute query counts increase with $\sigma$ for
both FSRC and standard composition, since larger noise lowers the per-query
privacy cost faster than it raises the expected halting time $\mathbb{E}[t]$; both compositors benefit, but FSRC retains its advantage. (ii)~The FSRC
answers $\sim 3\times$ more queries, reflecting the gap between worst-case $\varepsilon_{\max}$
and realized $\mathbb{E}[\varepsilon_\tau]$. (iii)~Empirical halting time admits $t_\tau \ll n$, confirming faster rejection ($\mathbb{E}[t_\tau] = O(1)$). These results confirm that the FSRC gain
is structural -- driven by the ex-post bound's halting-time dependence.

\begin{table}[t]
\centering
\caption{Percentage lift in accuracy over base policy $\text{Gemma-2-2B-Instruct}$ at $n = 2^{12}$
candidates. Values are mean $\pm$ standard
error over all prompts across the GSM8K, MMLU, and MATH test splits. Same (RM, dataset) configuration uses the same noise $\sigma$ for private variants and the same regularization $\beta$ for ITP variants. 
}
\label{tab:rm_dataset_sweep}
\begin{tabular}{@{}llcccc@{}}
\toprule
\textbf{Dataset} & \textbf{Algorithm} & \textbf{Oasst-RM} & \textbf{Gemma-RM} & \textbf{Llama-RM} & \textbf{Armo-RM} \\
\midrule
\multirow{4}{*}{GSM8K}
 & BoN     & $-6.46 \pm 0.92$ & $\phantom{-}0.75 \pm 0.97$ & $16.70 \pm 1.12$ & $3.43 \pm 1.07$ \\
 & PrivBoN & $\phantom{-}5.94 \pm 0.88$ & $10.95 \pm 0.90$ & $21.46 \pm 0.92$ & $7.65 \pm 0.87 $\\
 & PrivITP & $6.77 \pm 0.90$ & $11.81 \pm 0.91$ & $25.67 \pm 0.93$ & $14.16 \pm 0.90$ \\
 \cmidrule(l){2-6}
 & ITP  & \textit{$7.36 \pm 0.90$} & \textit{$12.50 \pm 0.92$} & \textit{$29.69 \pm 0.93$} & $23.16 \pm 0.93$ \\
\midrule
\multirow{4}{*}{MMLU}
 & BoN     & $10.55 \pm 2.21$ & $11.09 \pm 2.24$ & $8.59 \pm 2.18$ & $23.18 \pm 2.89$\\
 & PrivBoN & $10.14 \pm 1.93$ & $9.04 \pm 1.71$ & $4.01 \pm 1.63$ & $1.12 \pm 1.48$ \\
 & PrivITP & $8.26 \pm 1.68$ & $14.07 \pm 1.93$ & $8.89 \pm 1.91$ & $2.49 \pm 1.57$ \\
 \cmidrule(l){2-6}
 & ITP  & $8.45 \pm 1.69$ & $15.44 \pm 2.01$ & $9.03 \pm 1.99$ & $15.69 \pm 2.31$ \\
\midrule
\multirow{4}{*}{MATH-500}
 & BoN     & $\phantom{-}1.32 \pm 1.27$ & $7.38 \pm 1.73$ & $20.48 \pm 1.86$ & $17.77 \pm 1.81$ \\
 & PrivBoN & $-0.48 \pm 1.02$ & $\phantom{-}3.11 \pm 1.33$ & $\phantom{0}7.53 \pm 1.33$ & $9.34 \pm 1.41$ \\
 & PrivITP & $-0.42 \pm 1.02$ & $5.44 \pm 1.43$ & $14.03 \pm 1.53$ & $15.01 \pm 1.57$ \\
 \cmidrule(l){2-6}
 & ITP  & \textit{$\phantom{-}0.01 \pm 1.05$} & \textit{$\phantom{-}6.61 \pm 1.51$} & \textit{$16.56 \pm 1.64$} & $16.98 \pm 1.68$ \\
\bottomrule
\end{tabular}
\end{table}

\begin{table}[t]
\centering
\caption{Percentage lift in accuracy over base policy $\text{Llama-3.2-3B-Instruct}$ at $n = 2^{12}$
candidates. Values are mean $\pm$ standard
error over all prompts across the GSM8K, MMLU, and MATH test splits. Same (RM, dataset) configuration uses the same noise $\sigma$ for private variants and the same regularization $\beta$ for ITP variants. 
}
\label{tab:rm_dataset_sweep_llama}
\begin{tabular}{@{}llcccc@{}}
\toprule
\textbf{Dataset} & \textbf{Algorithm} & \textbf{Oasst-RM} & \textbf{Gemma-RM} & \textbf{Llama-RM} & \textbf{Armo-RM} \\
\midrule
\multirow{4}{*}{GSM8K}
 & BoN     & $ -0.41 \pm 1.06 $ & $ 4.68 \pm 1.00$ & $ 19.96 \pm 0.75 $ & $ 16.94 \pm 0.14 $ \\
 & PrivBoN & $ 3.02 \pm 0.84 $ & $ 5.04 \pm 0.82$ & $ 17.71 \pm 0.69 $ & $ 11.96 \pm 0.11 $ \\
 & PrivITP & $ 4.07 \pm 0.85$ & $ 6.58 \pm 0.84$ & $ 19.63 \pm 0.69 $ & $ 14.26 \pm 0.11 $ \\
 \cmidrule(l){2-6}
 & ITP     & $ 3.98 \pm 0.87 $ & $ 6.17 \pm 0.87 $& $ 19.75 \pm 0.71 $ & $ 16.31 \pm 0.11 $ \\
\midrule
\multirow{4}{*}{MMLU}
 & BoN     & $7.66 \pm 2.68$ & $7.83 \pm 2.88$ & $-2.08 \pm 2.71$ & $13.80 \pm 3.08$\\
 & PrivBoN & $3.26 \pm 1.94$ & $6.70 \pm 2.09$ & $2.55 \pm 1.97$ & $13.78 \pm 3.08$ \\
 & PrivITP & $5.76 \pm 2.07$ & $8.78 \pm 2.27$ & $4.64 \pm 2.10$ & $13.87 \pm 2.68$ \\
 \cmidrule(l){2-6}
 & ITP  & $7.90 \pm 2.25$ & $9.14 \pm 2.40$ & $9.27 \pm 2.45$ & $8.18 \pm 2.28$ \\
\midrule
\multirow{4}{*}{MATH-500}
 & BoN     & $ -4.48 \pm 1.72$ & $ 3.77 \pm 1.87 $ & $ -8.16 \pm 1.82$ & $ -3.95 \pm 1.86$ \\
 & PrivBoN & $ 0.34 \pm 1.58 $ & $ 2.70 \pm 1.62$ & $ 2.16 \pm 1.63 $ & $ 1.35 \pm 1.59 $ \\
 & PrivITP & $ 0.46 \pm 1.58$ & $ 4.14 \pm 1.66$ & $ 3.83 \pm 1.67$ & $ 2.21 \pm 1.61 $ \\
 \cmidrule(l){2-6}
 & ITP     & $ 0.73 \pm 1.60 $ & $5.79 \pm 1.71 $ & $ 6.52 \pm 1.80 $ & $ 6.24 \pm 1.73$ \\
\bottomrule
\end{tabular}
\end{table}

\begin{figure}[htbp]
    \centering
    
    \begin{subfigure}[b]{\textwidth}
        \centering
        \includegraphics[width=\textwidth, height=5cm]{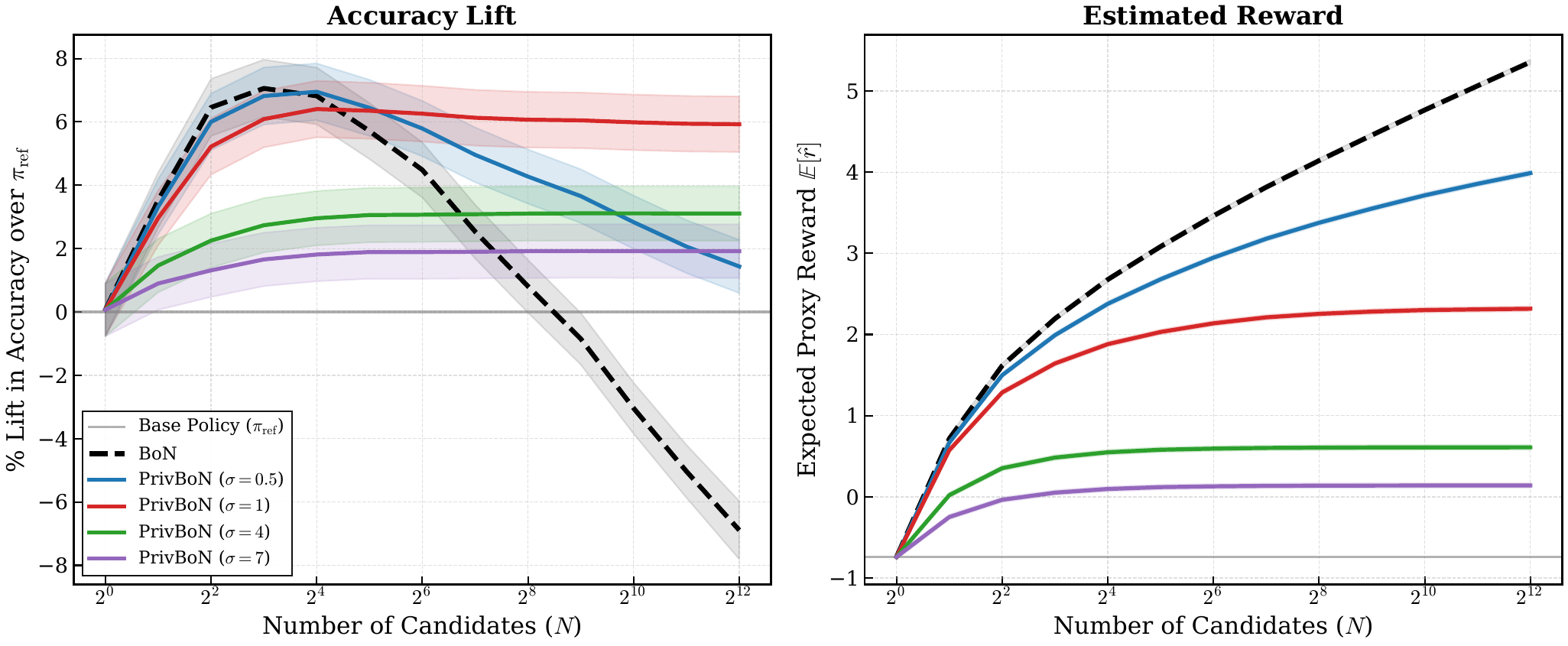}
        \caption{Oasst-RM}
        \label{fig:plot1}
    \end{subfigure}
    
    \begin{subfigure}[b]{\textwidth}
        \centering
\includegraphics[width=\textwidth,height=5cm]{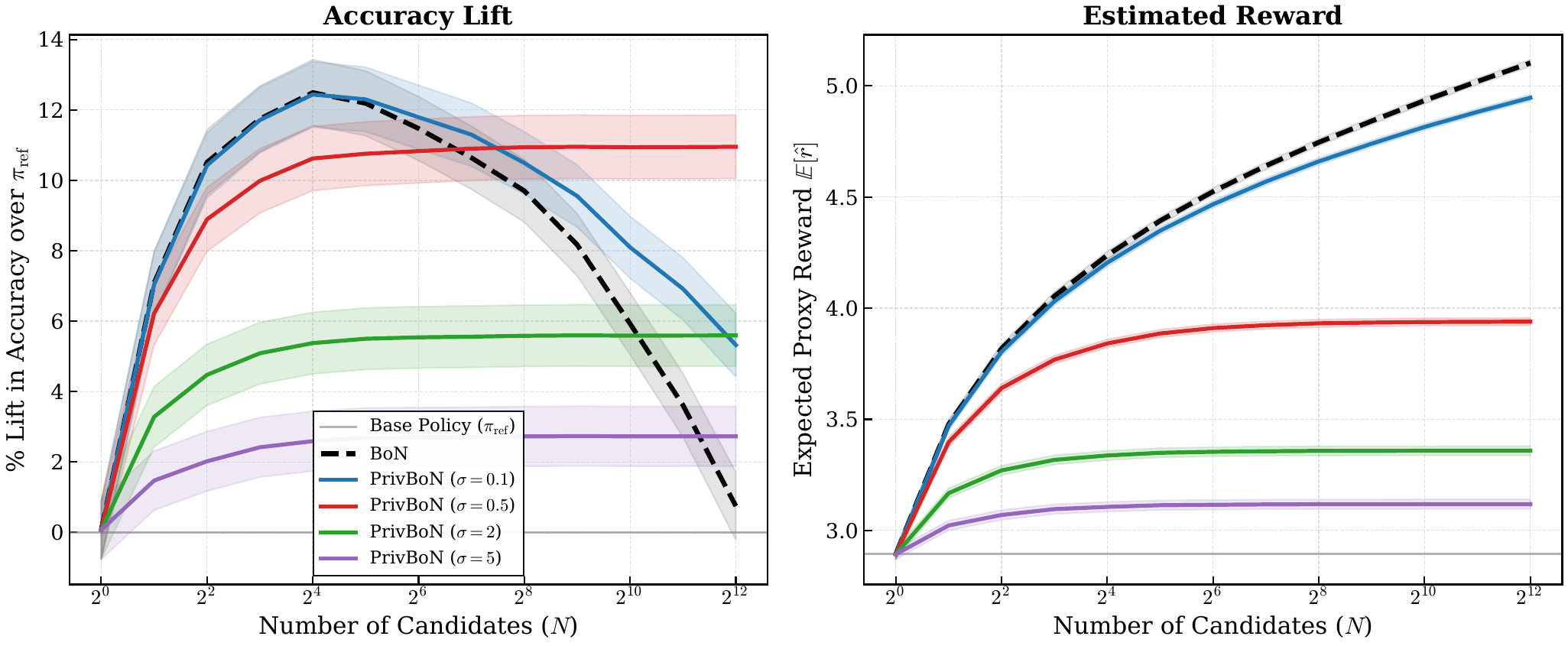}
        \caption{Gemma-RM}
        \label{fig:plot2}
    \end{subfigure}
    
    \begin{subfigure}[b]{\textwidth}
        \centering
        \includegraphics[width=\textwidth, height=5cm]{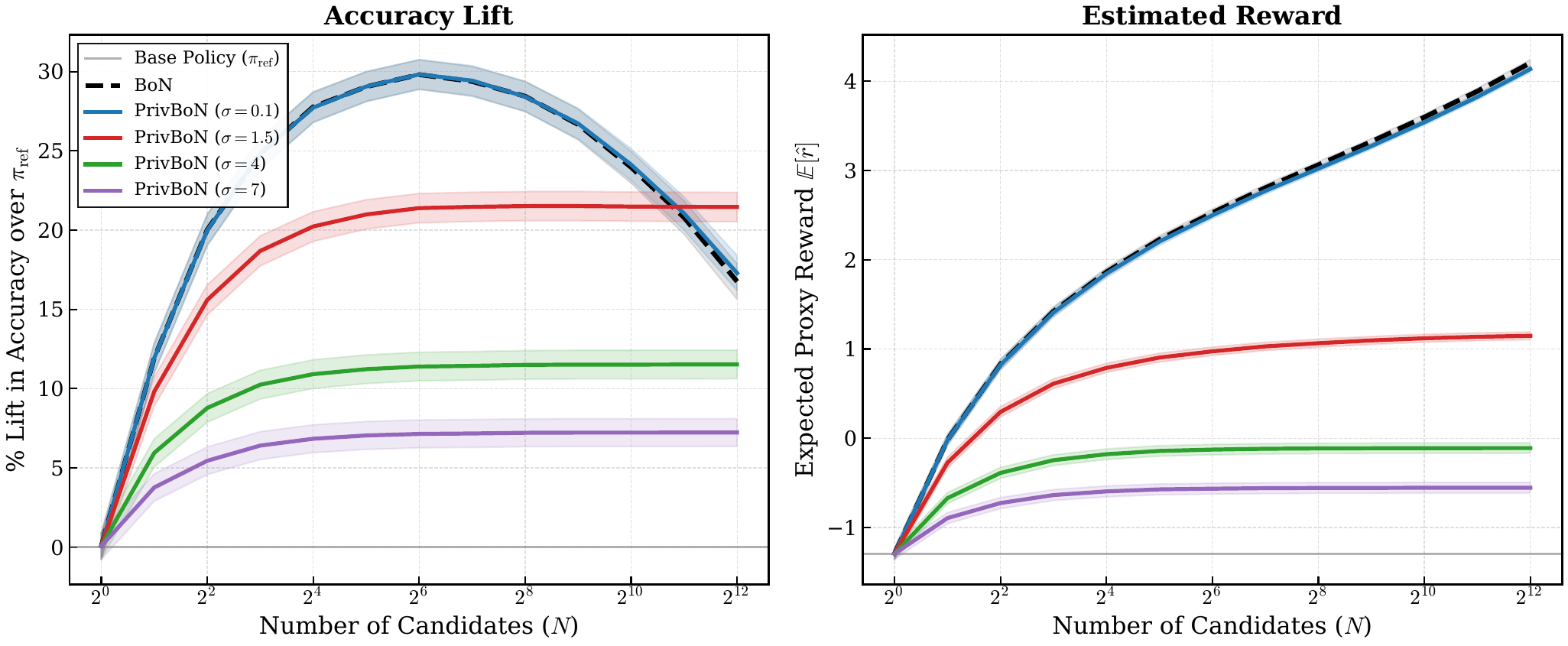}
        \caption{Llama-RM}
        \label{fig:plot3}
    \end{subfigure}
    
    \begin{subfigure}[b]{\textwidth}
        \centering
        \includegraphics[width=\textwidth, height=5cm]{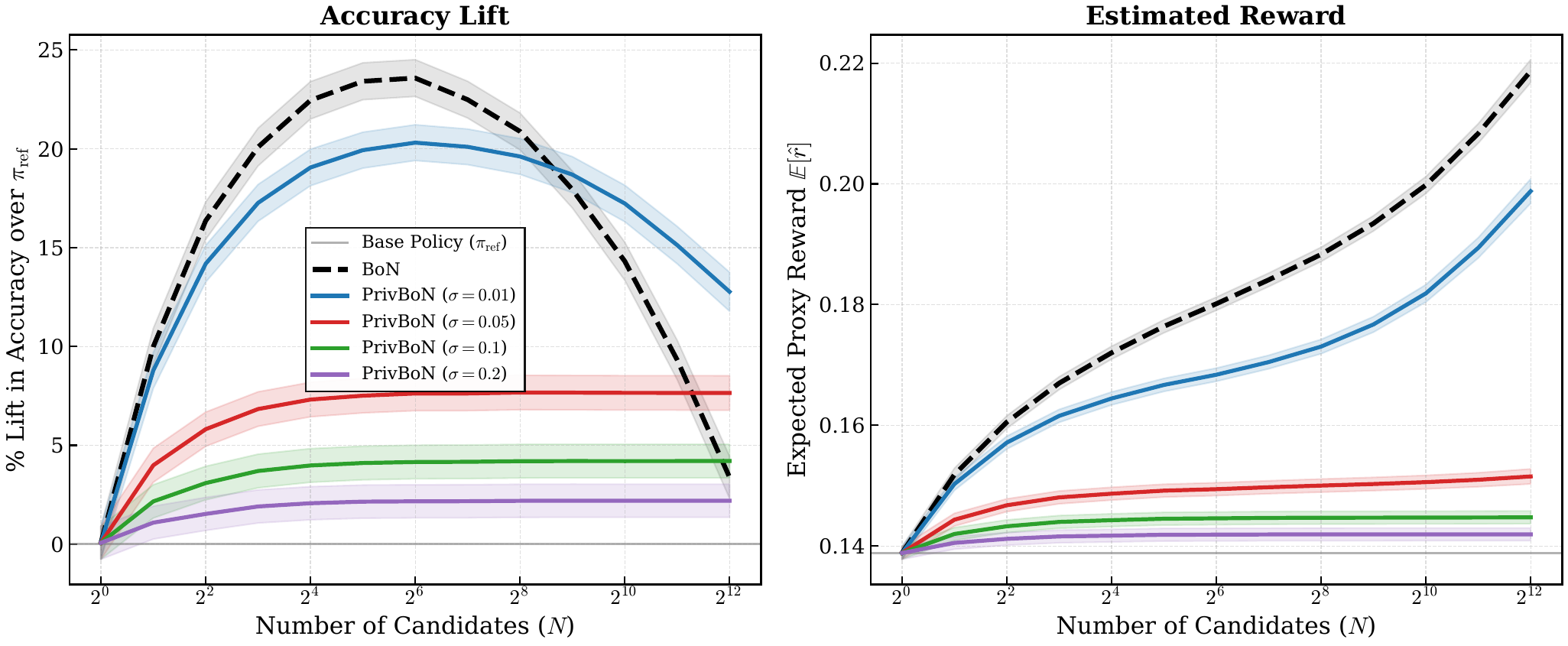}
        \caption{Armo-RM}
        \label{fig:plot4}
    \end{subfigure}
    \caption{Ablation on noise level $\sigma$ for PrivBoN on GSM8K dataset with Gemma-2-2b-Instruct as base model}
    \label{fig:sigma_ablation}
\end{figure}

\begin{figure}[htbp]
    \centering
    
    \begin{subfigure}[b]{\textwidth}
        \centering
        \includegraphics[width=\textwidth, height=5cm]{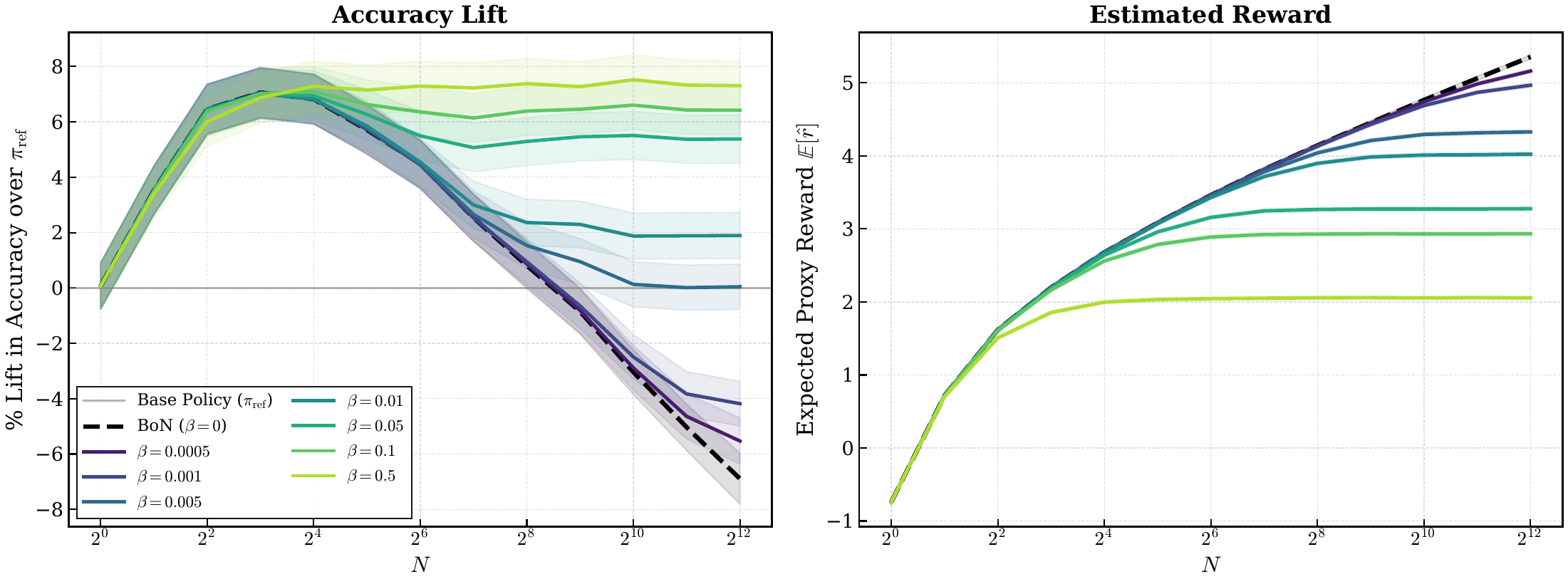}
        \caption{Oasst-RM}
        \label{fig:plot1}
    \end{subfigure}
    
    \begin{subfigure}[b]{\textwidth}
        \centering
\includegraphics[width=\textwidth,height=5cm]{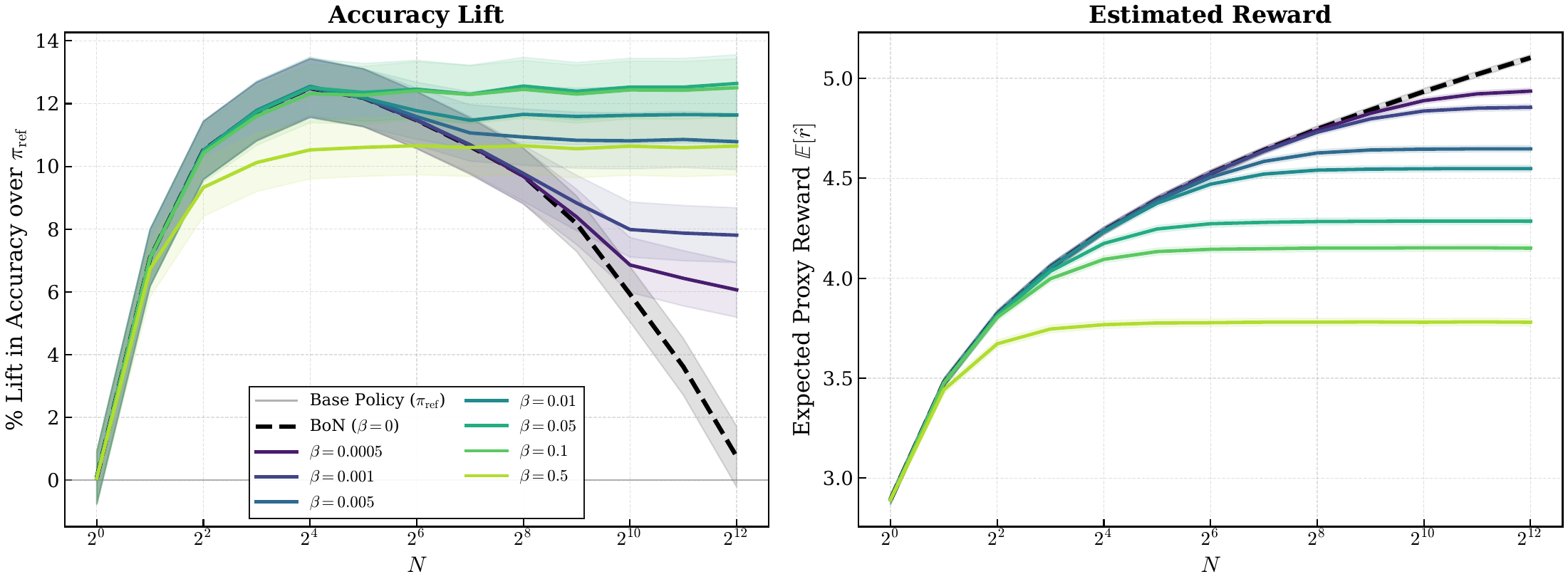}
        \caption{Gemma-RM}
        \label{fig:plot2}
    \end{subfigure}
    
    \begin{subfigure}[b]{\textwidth}
        \centering
        \includegraphics[width=\textwidth, height=5cm]{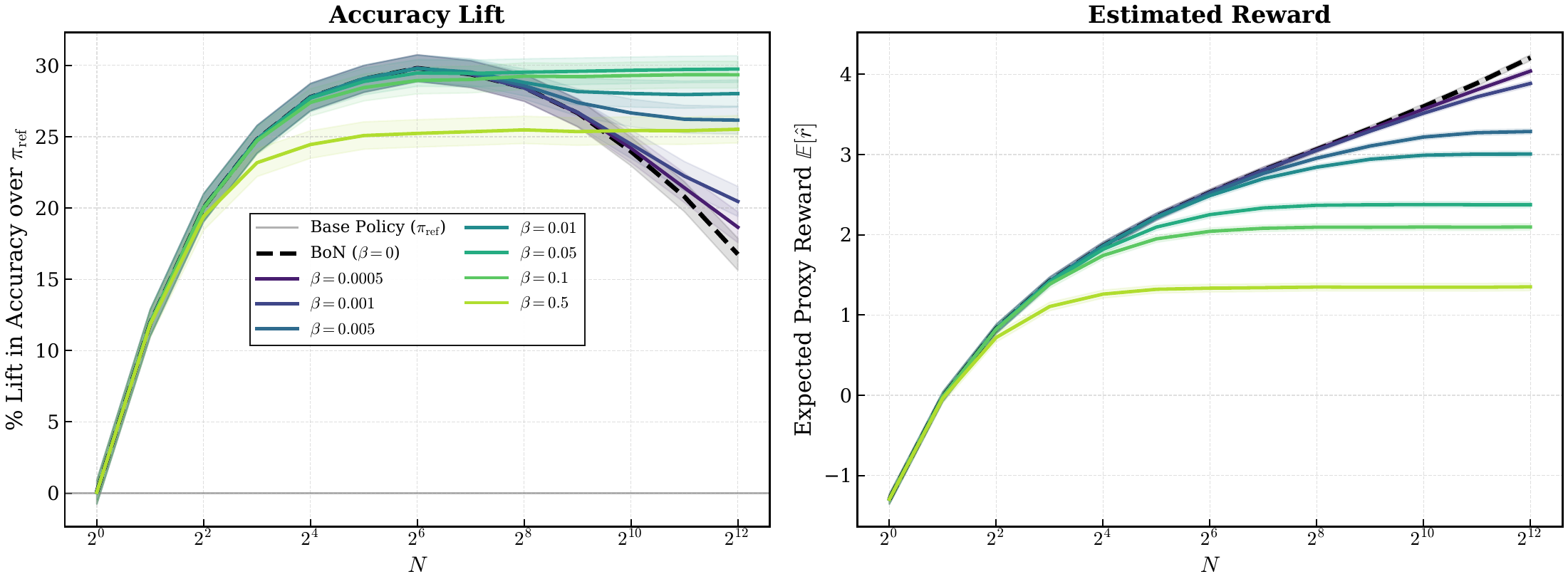}
        \caption{Llama-RM}
        \label{fig:plot3}
    \end{subfigure}
    
    \begin{subfigure}[b]{\textwidth}
        \centering
        \includegraphics[width=\textwidth, height=5cm]{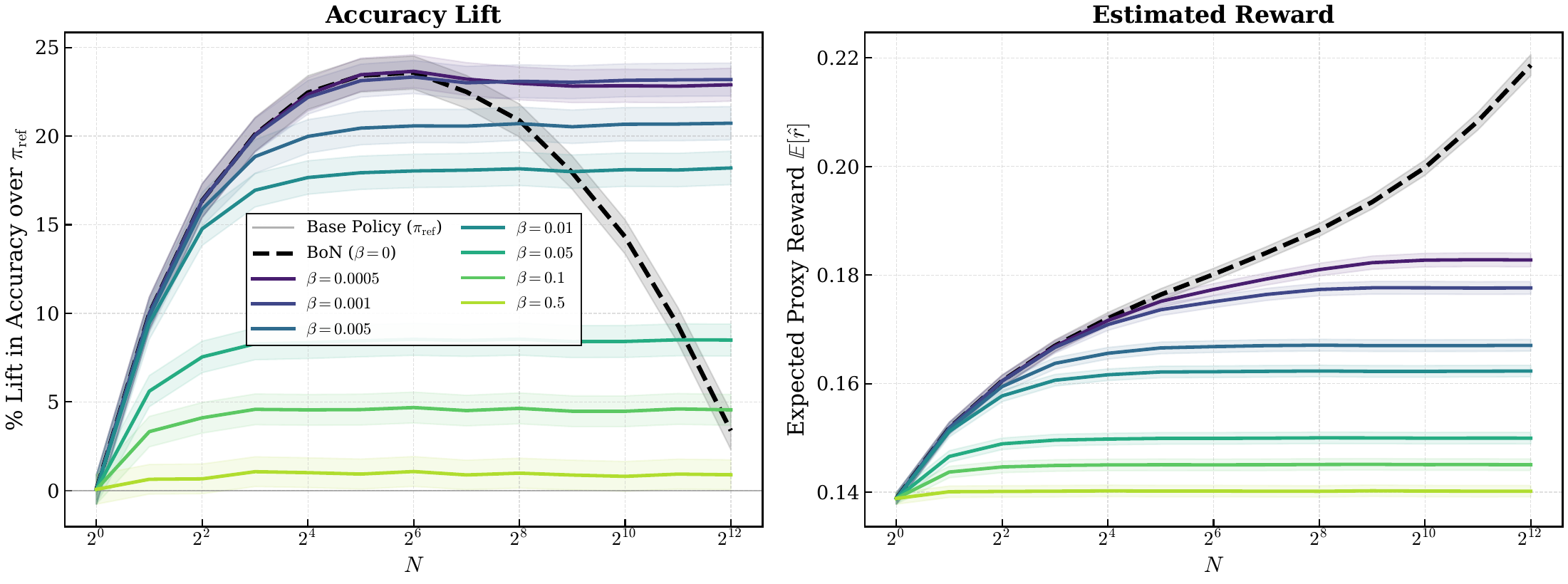}
        \caption{Armo-RM}
        \label{fig:plot4}
    \end{subfigure}
    \caption{Ablation on regularization $\beta$ for ITP on GSM8K dataset with Gemma-2-2b-Instruct as base model}
    \label{fig:beta_ablation}
\end{figure}

\begin{figure}[htbp]
    \centering
    
    \begin{subfigure}[b]{\textwidth}
        \centering
        \includegraphics[width=\textwidth, height=5cm]{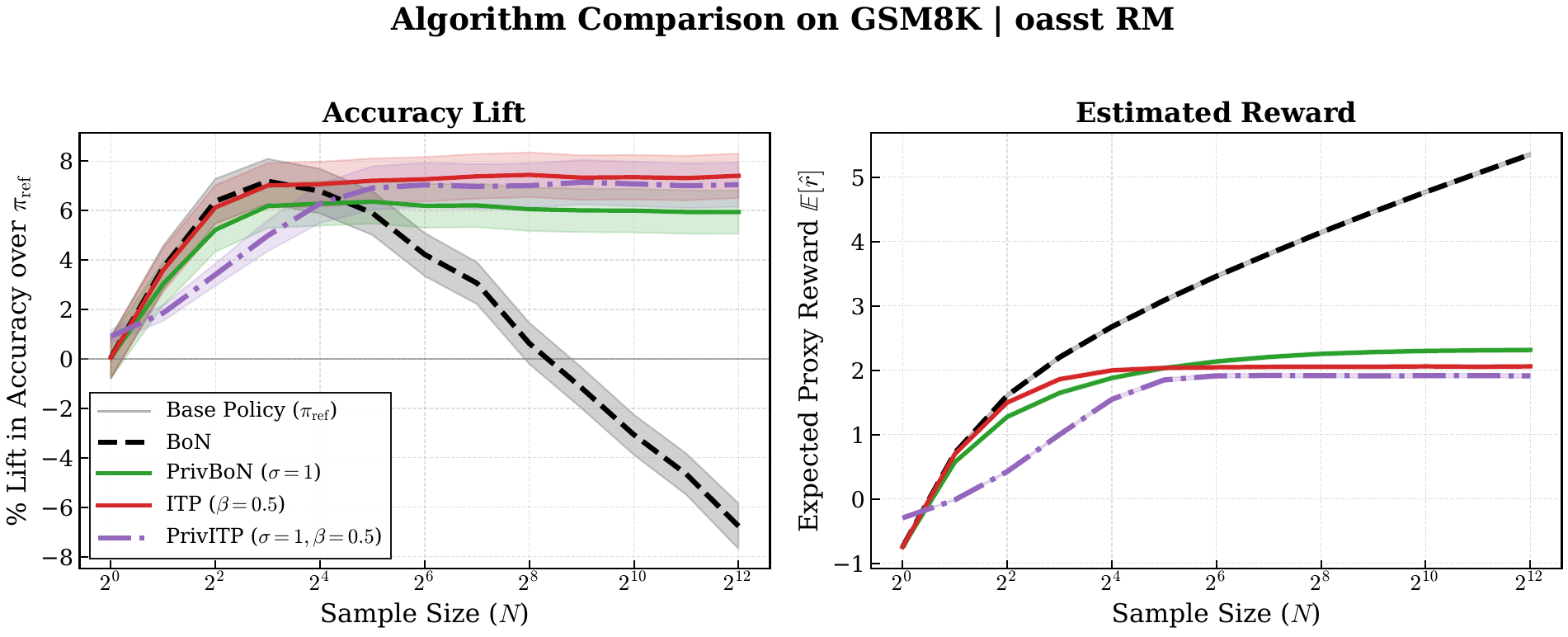}
        \label{fig:plot1}
    \end{subfigure}
    
    \begin{subfigure}[b]{\textwidth}
        \centering    
        \includegraphics[width=\textwidth,height=5cm]{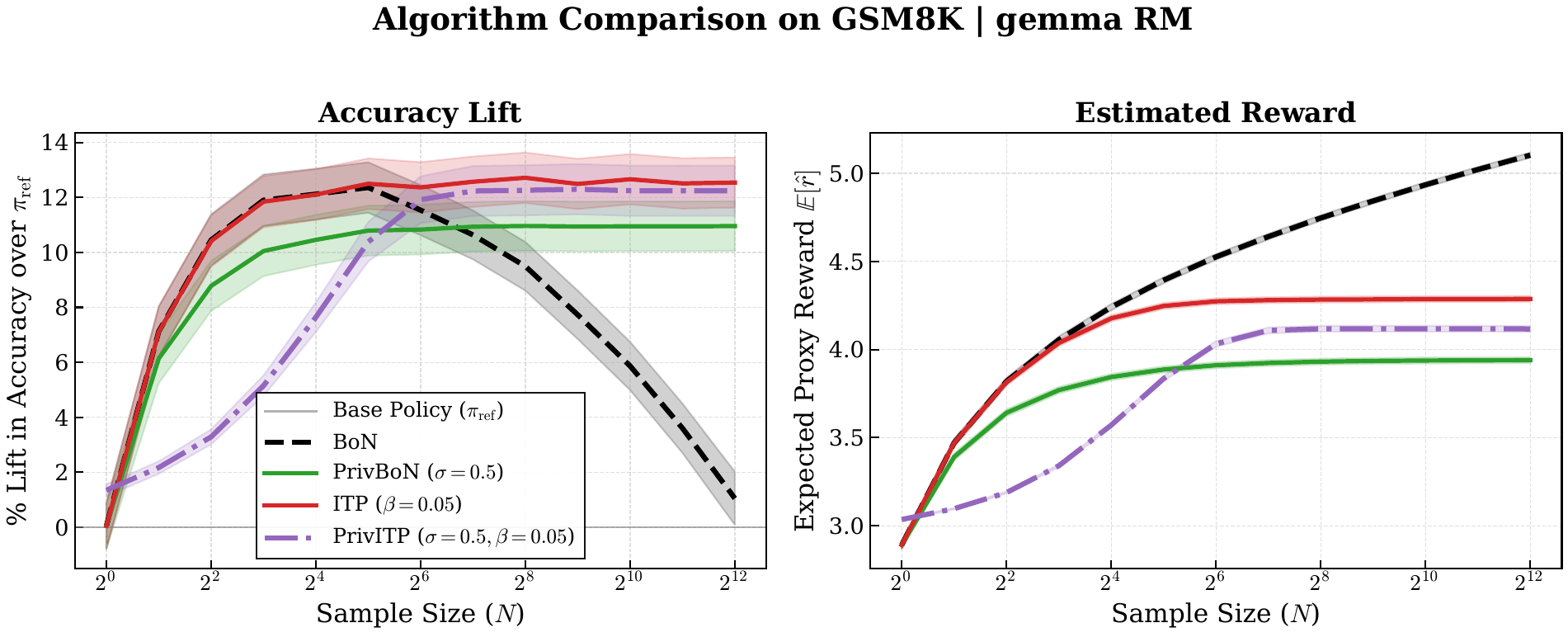}
       
        \label{fig:plot2}
    \end{subfigure}
    
    \begin{subfigure}[b]{\textwidth}
        \centering
        \includegraphics[width=\textwidth, height=5cm]{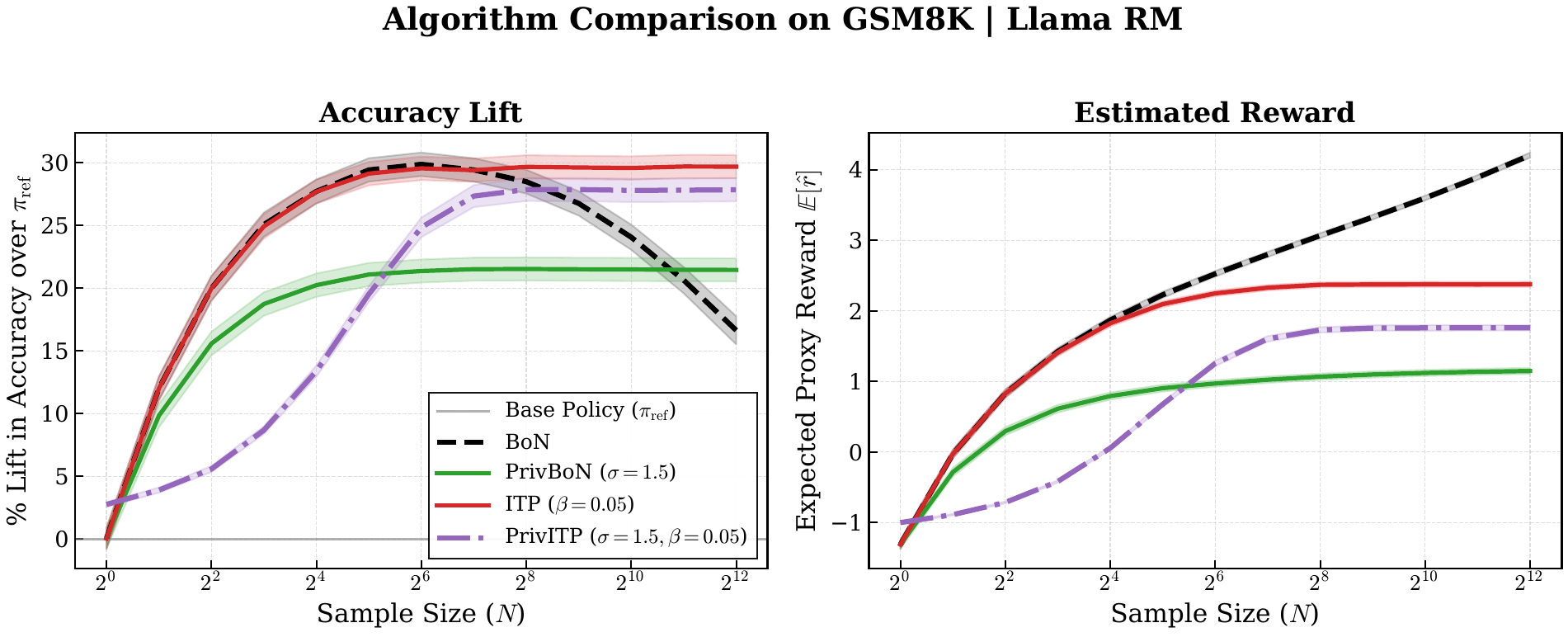}
        
        \label{fig:plot3}
    \end{subfigure}

    \begin{subfigure}[b]{\textwidth}
        \centering
        \includegraphics[width=\textwidth, height=5cm]{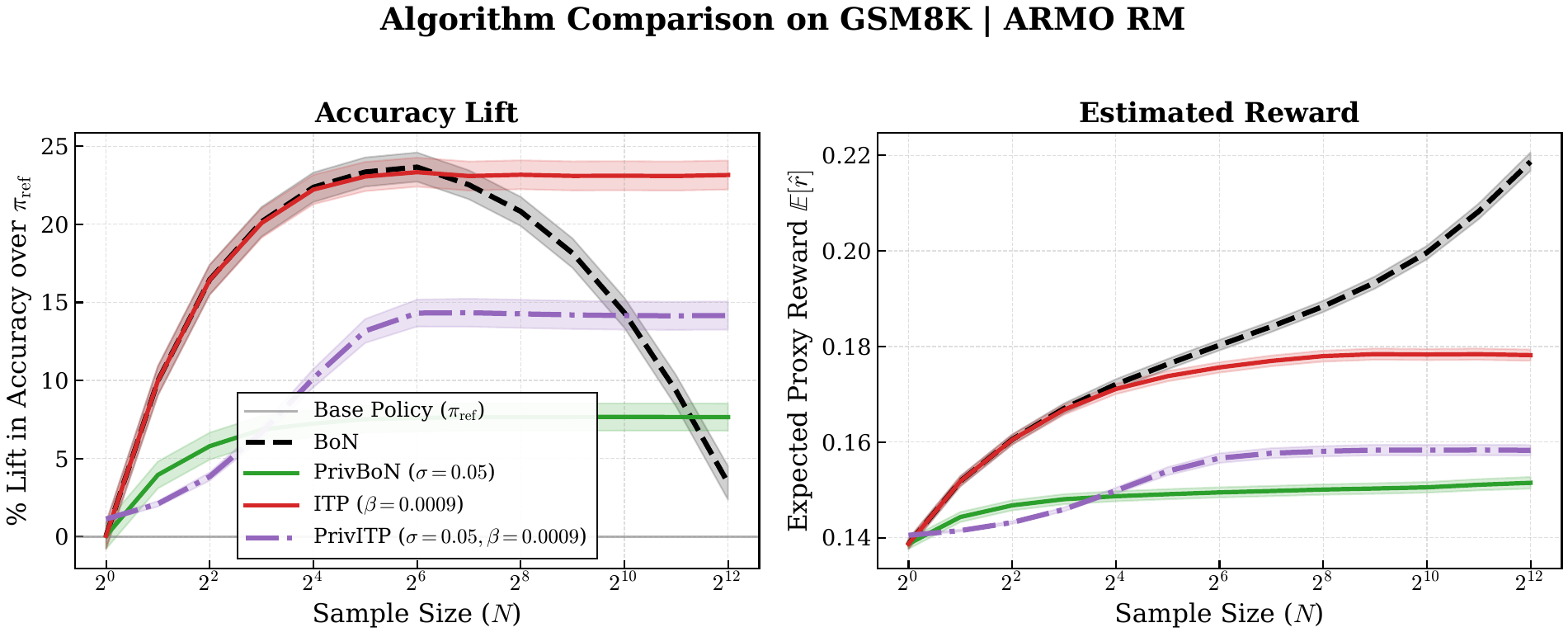}
        \label{fig:plot4}
    \end{subfigure}
    \caption{Comparison of BoN, PrivBoN, ITP, and PrivITP in accuracy
and estimated reward $\hat{r}$ for GSM8K for four reward models and Gemma-2-2B-Instruct $\pi_\mathrm{ref}$}
    \label{fig:gsm8k_comparison}
\end{figure}

\begin{figure}[htbp]
    \centering
    
    \begin{subfigure}[b]{\textwidth}
        \centering
        \includegraphics[width=\textwidth, height=5cm]{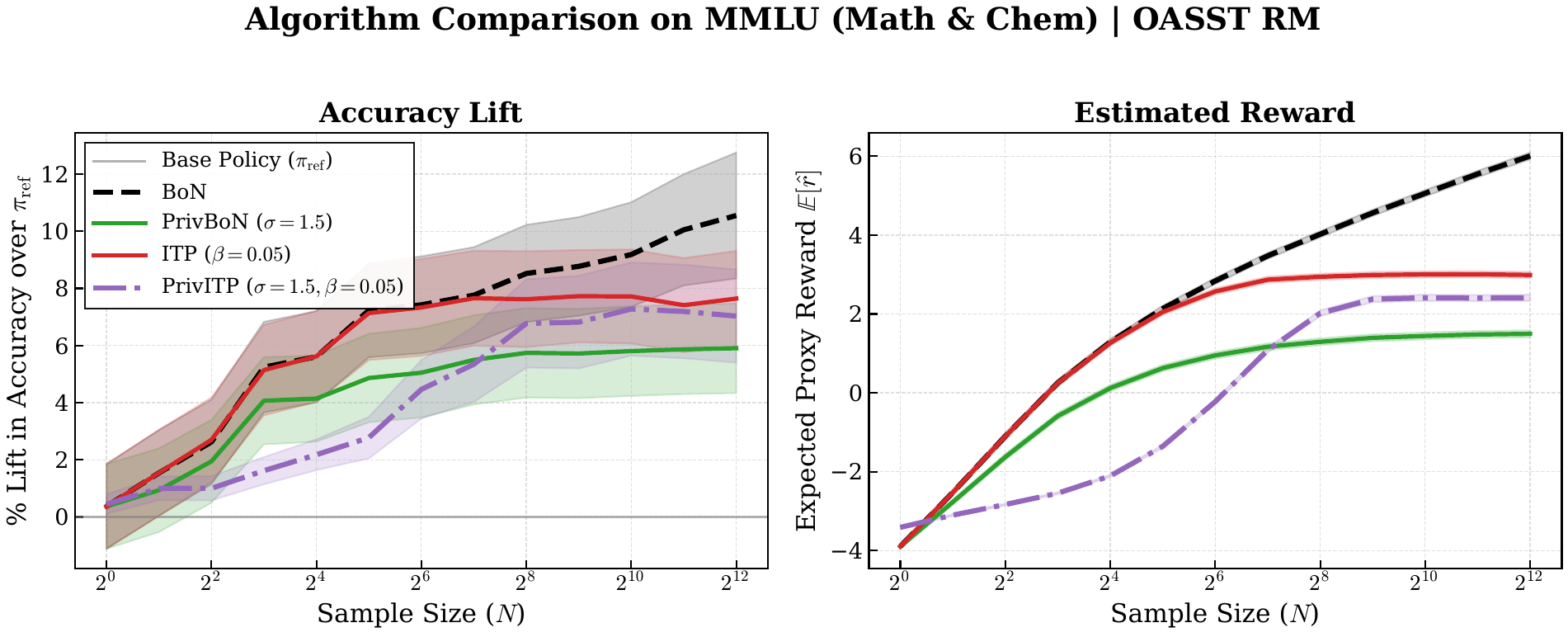}
        \label{fig:plot1}
    \end{subfigure}
    
    \begin{subfigure}[b]{\textwidth}
        \centering    
        \includegraphics[width=\textwidth,height=5cm]{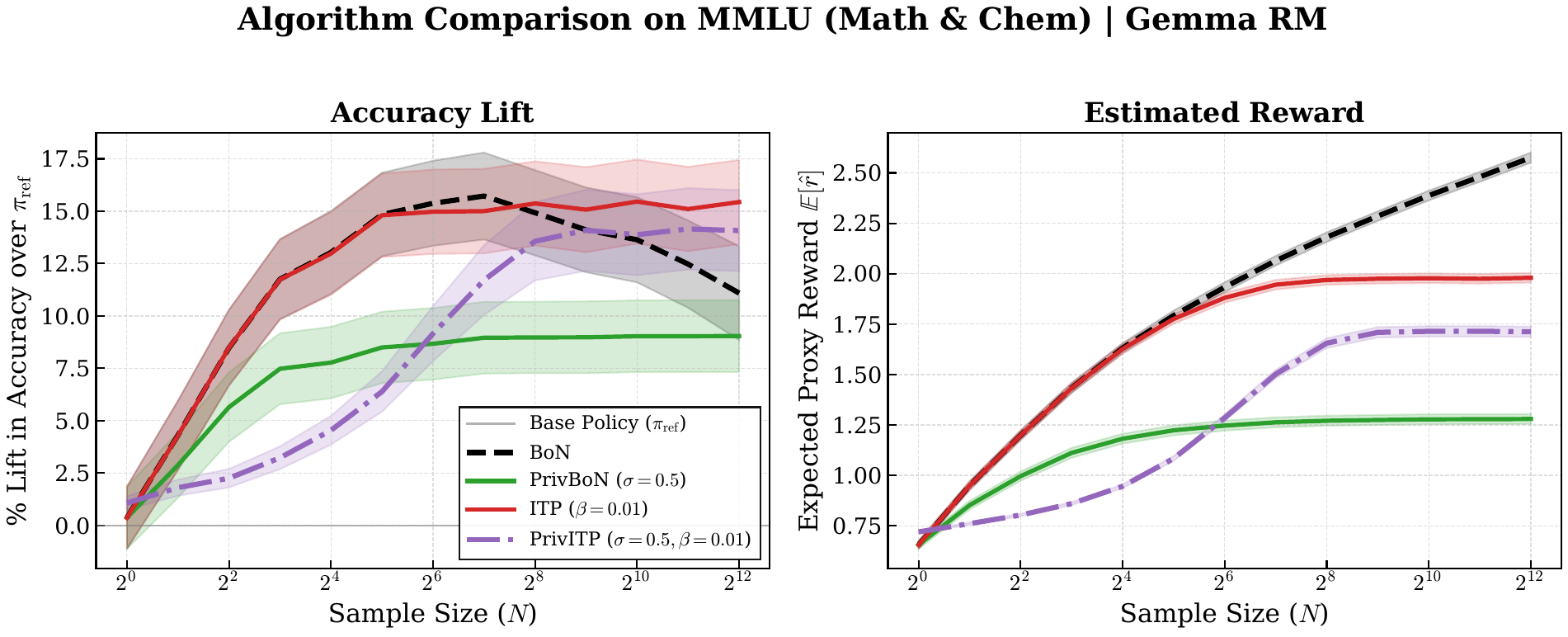}
       
        \label{fig:plot2}
    \end{subfigure}
    
    \begin{subfigure}[b]{\textwidth}
        \centering
        \includegraphics[width=\textwidth, height=5cm]{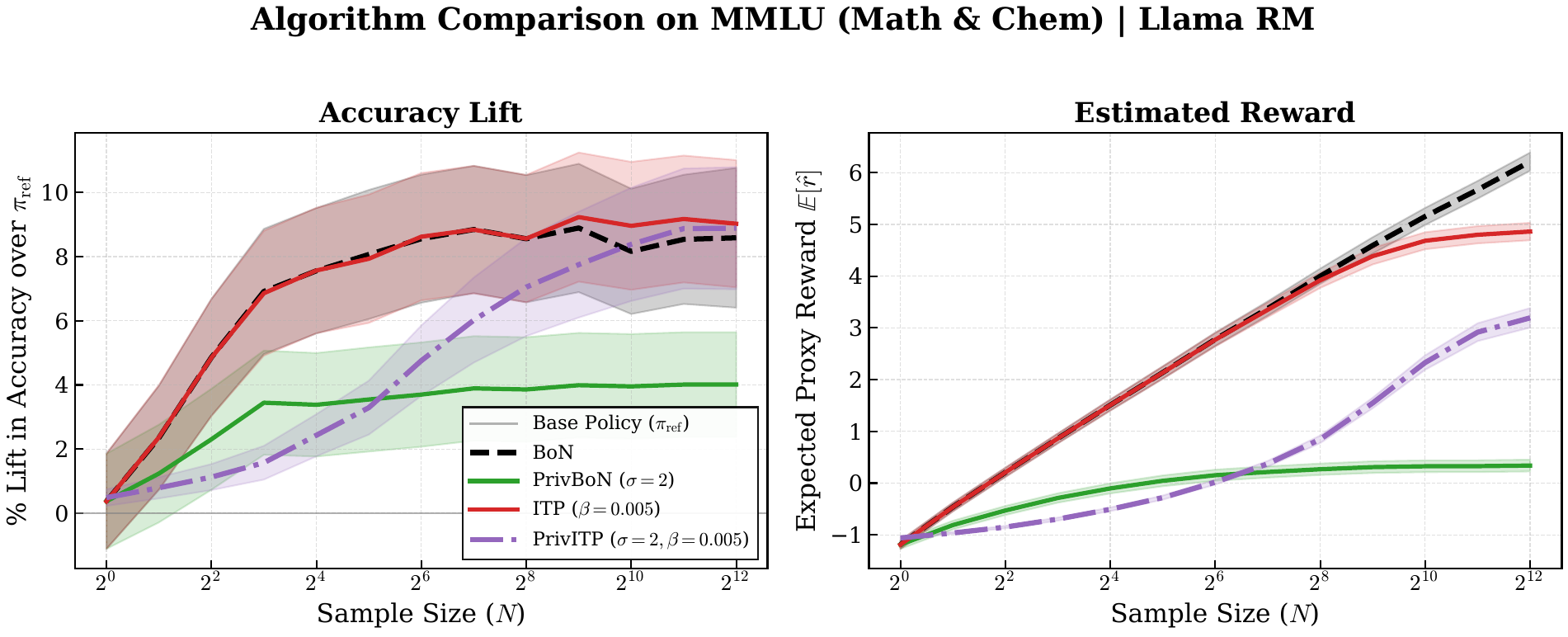}
        
        \label{fig:plot3}
    \end{subfigure}

    \begin{subfigure}[b]{\textwidth}
        \centering
        \includegraphics[width=\textwidth, height=5cm]{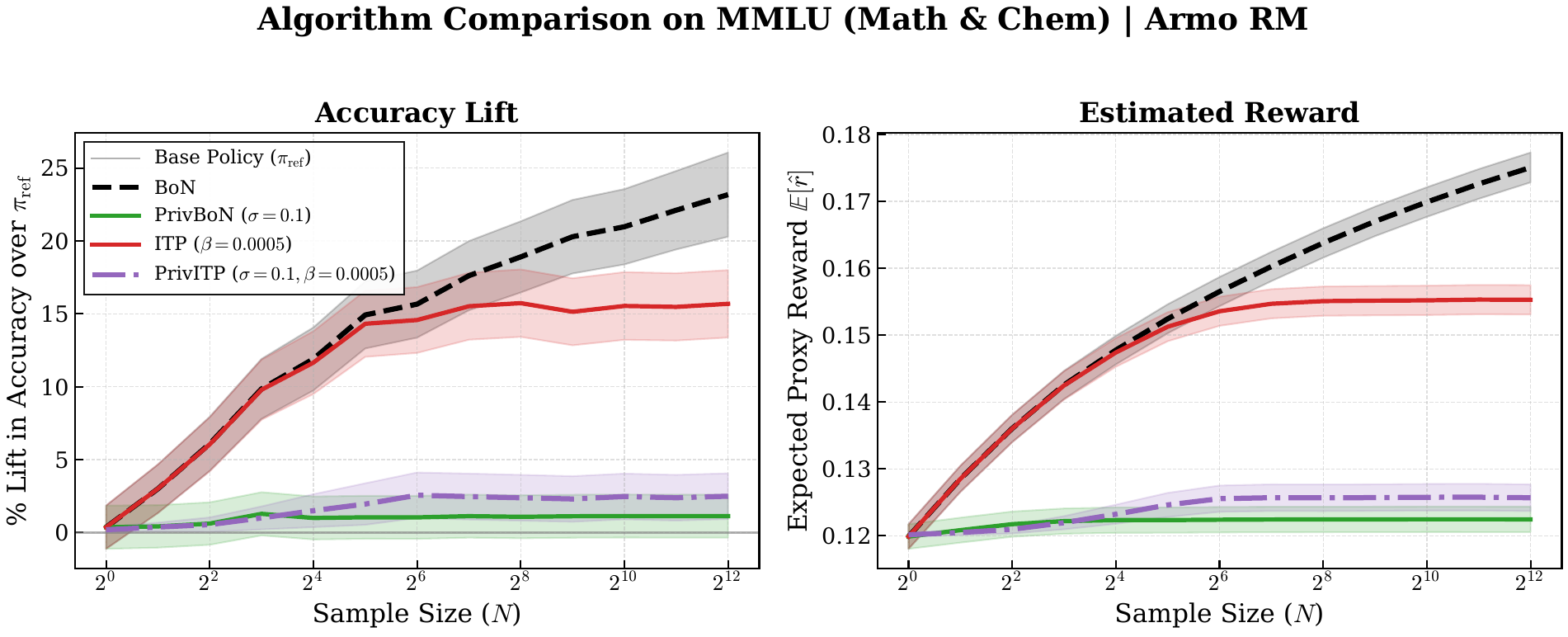}
        \label{fig:plot4}
    \end{subfigure}
    \caption{Comparison of BoN, PrivBoN, ITP, and PrivITP in accuracy
and estimated reward $\hat{r}$ for MMLU for four reward models and Gemma-2-2B-Instruct $\pi_\mathrm{ref}$}
    \label{fig:mmlu_comparison}
\end{figure}

\begin{figure}[htbp]
    \centering
    
    \begin{subfigure}[b]{\textwidth}
        \centering
        \includegraphics[width=\textwidth, height=5cm]{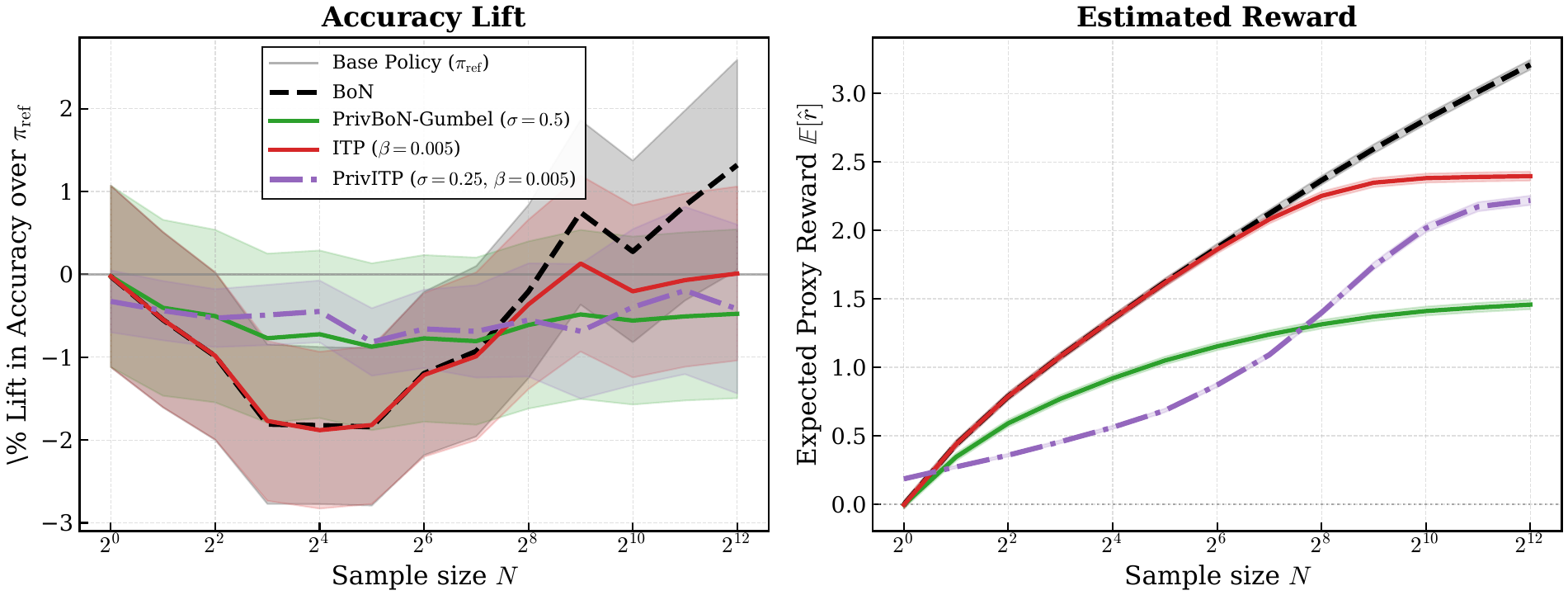}
        \caption{Oasst-RM}
        \label{fig:plot1}
    \end{subfigure}
    
    \begin{subfigure}[b]{\textwidth}
        \centering
        \includegraphics[width=\textwidth, height=5cm]{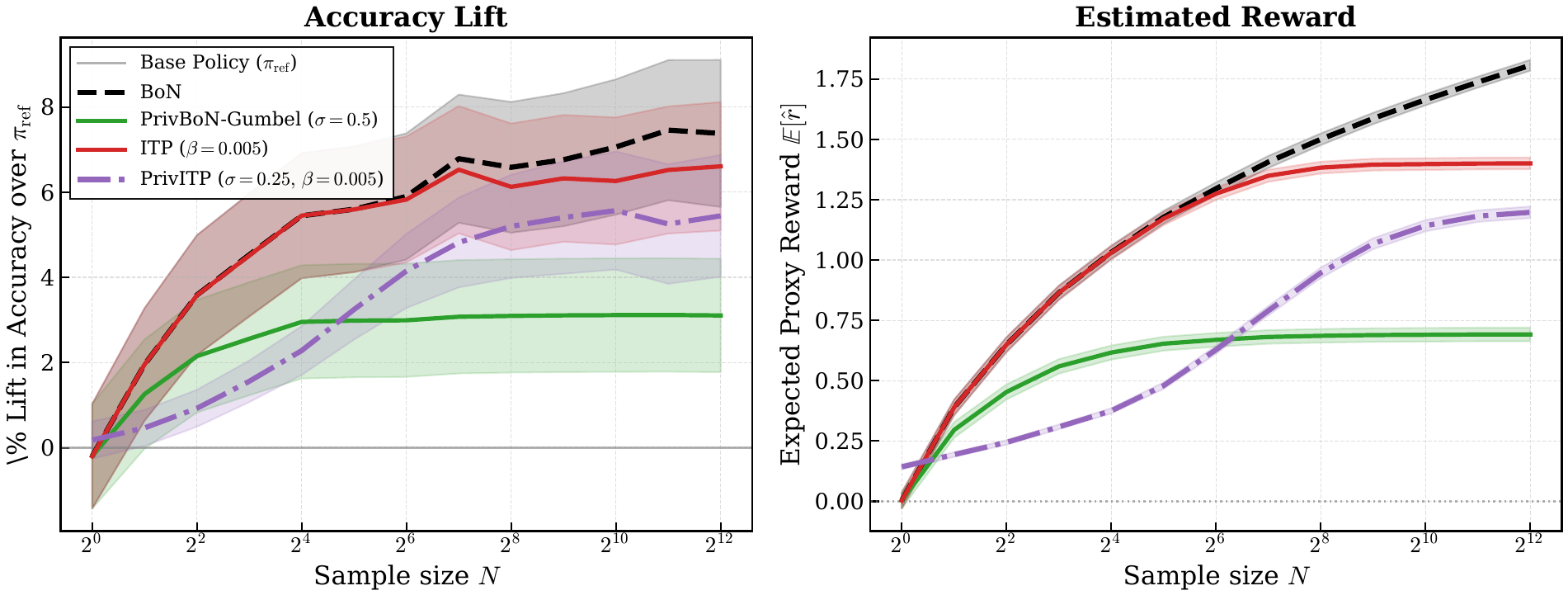}
        \caption{Gemma-RM}
        \label{fig:plot2}
    \end{subfigure}
    
    \begin{subfigure}[b]{\textwidth}
        \centering
\includegraphics[width=\textwidth,height=5cm]{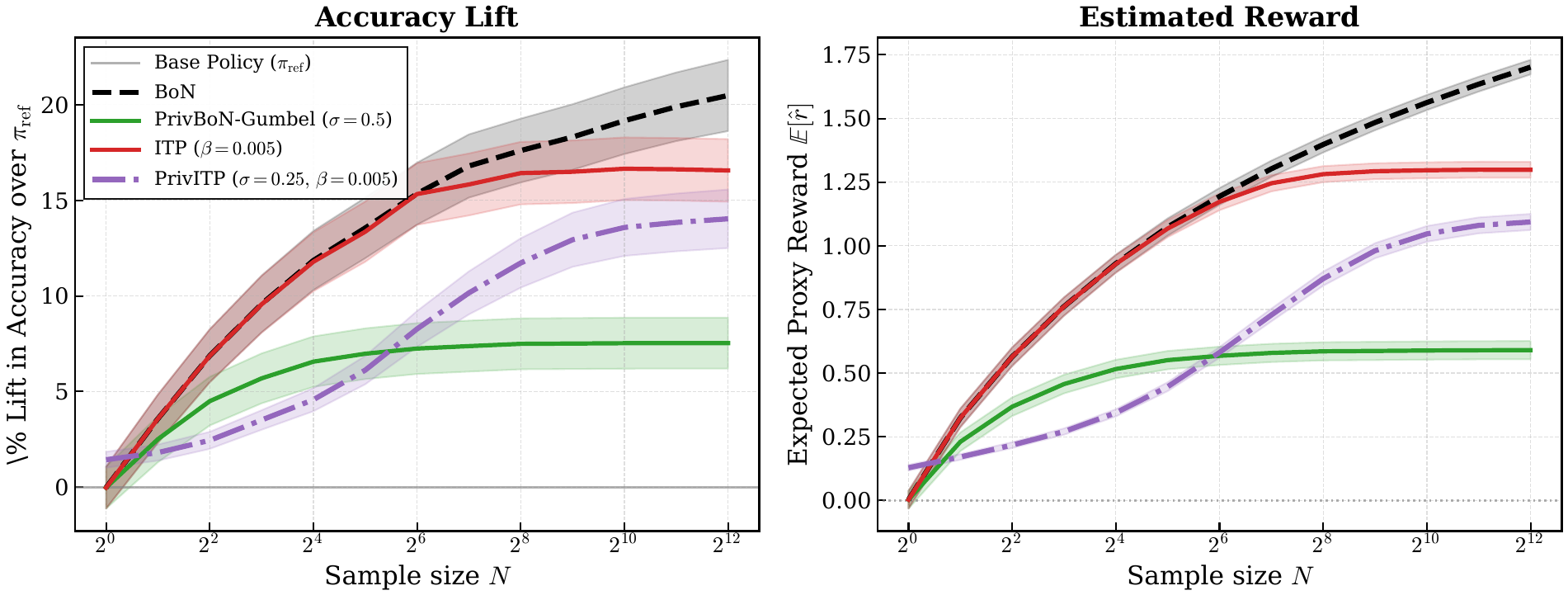}
        \caption{Llama-RM}
        \label{fig:plot3}
    \end{subfigure}
    \hfill
    \begin{subfigure}[b]{\textwidth}
        \centering
\includegraphics[width=\textwidth,height=5cm]{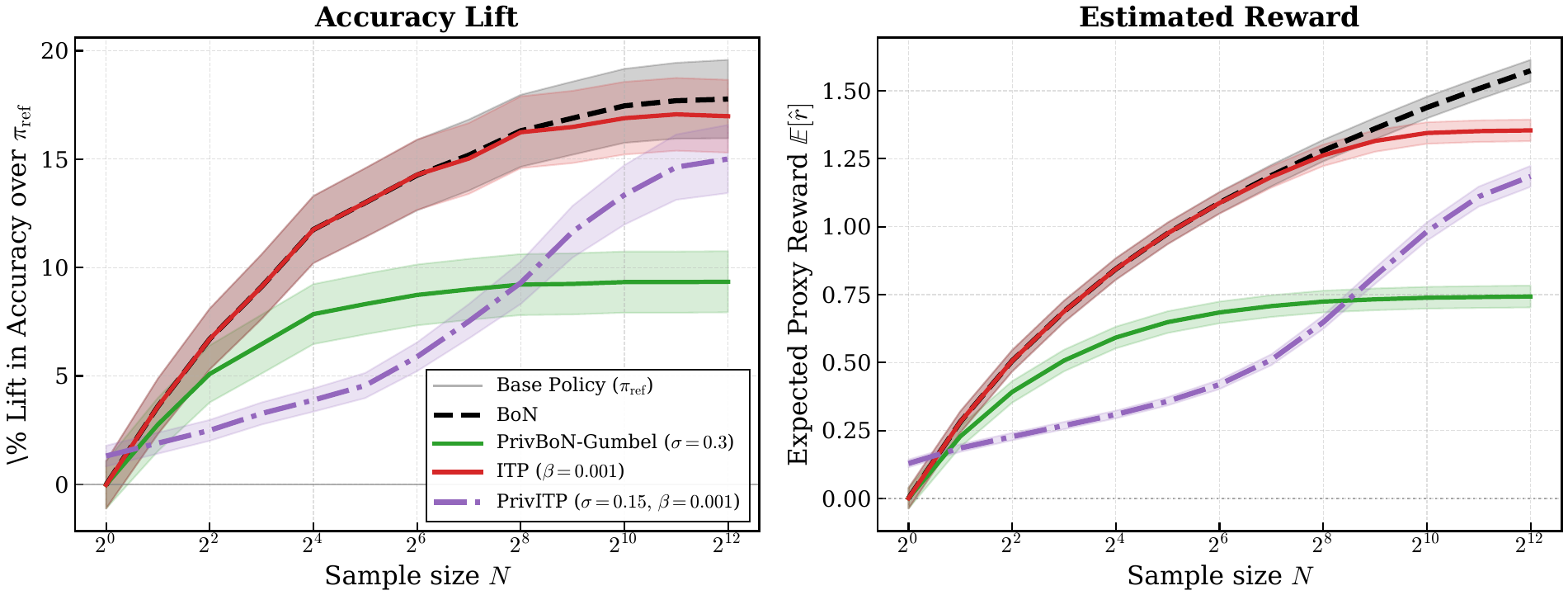}
        \caption{Armo-RM}
        \label{fig:plot4}
    \end{subfigure}
    \caption{Comparison of BoN, PrivBoN, ITP, and PrivITP in accuracy
and estimated reward $\hat{r}$ for MATH for four reward models and Gemma-2-2B-Instruct $\pi_\mathrm{ref}$}
    \label{fig:math_comparison}
\end{figure}

\begin{figure}[htbp]
    \centering
    
    \begin{subfigure}[b]{\textwidth}
        \centering
        \includegraphics[width=\textwidth, height=5cm]{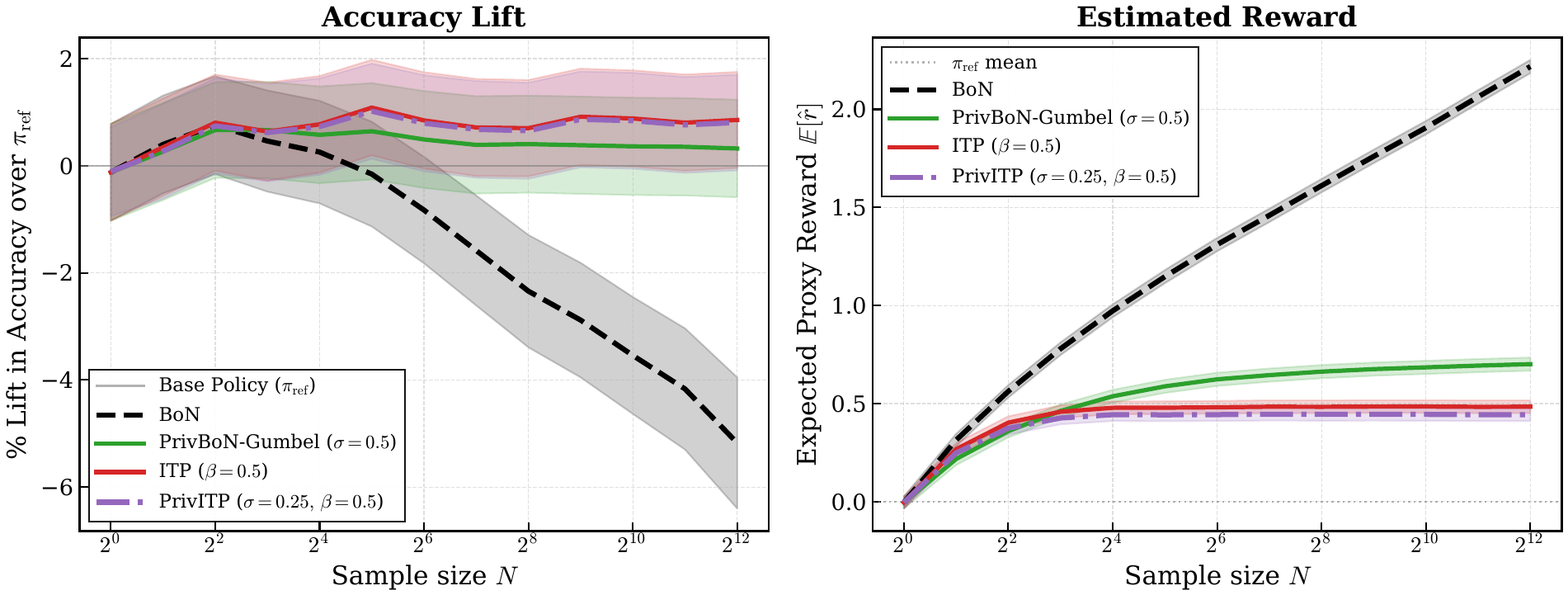}
        \caption{Oasst-RM}
        \label{fig:plot1}
    \end{subfigure}
    
    \begin{subfigure}[b]{\textwidth}
        \centering
        \includegraphics[width=\textwidth, height=5cm]{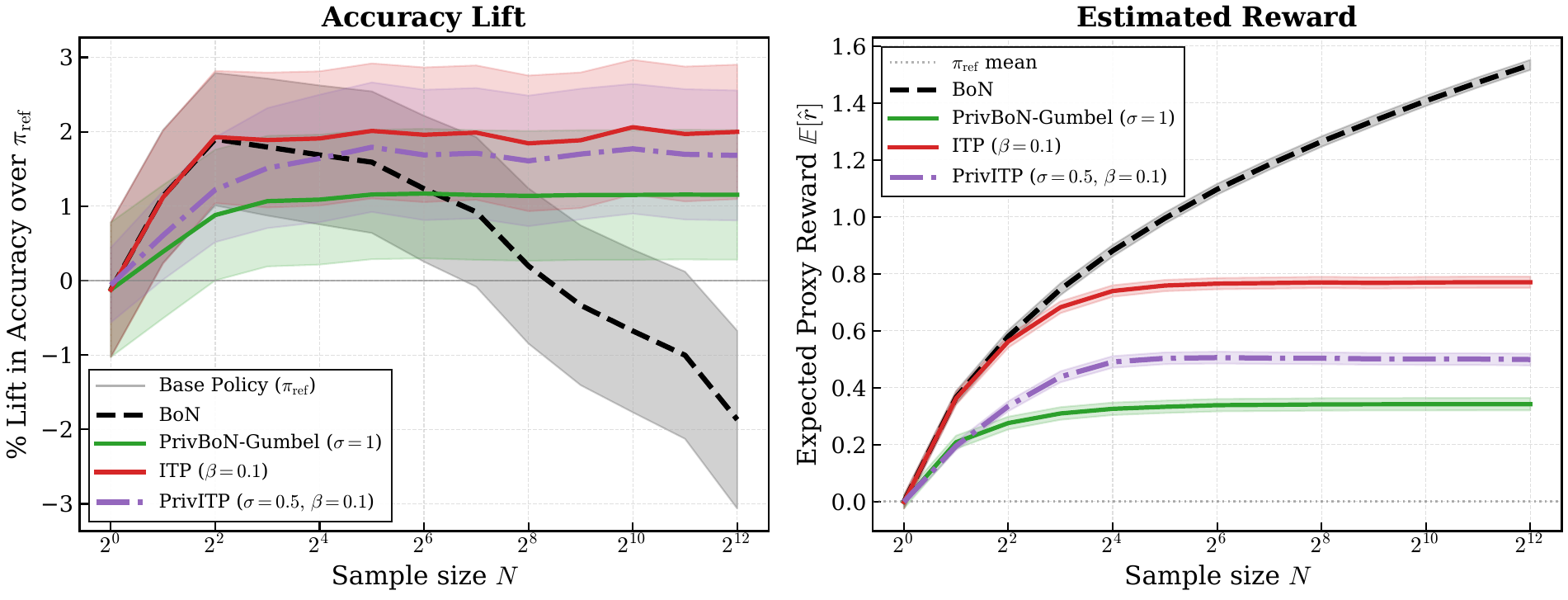}
        \caption{Gemma-RM}
        \label{fig:plot2}
    \end{subfigure}
    
    \begin{subfigure}[b]{\textwidth}
        \centering
\includegraphics[width=\textwidth,height=5cm]{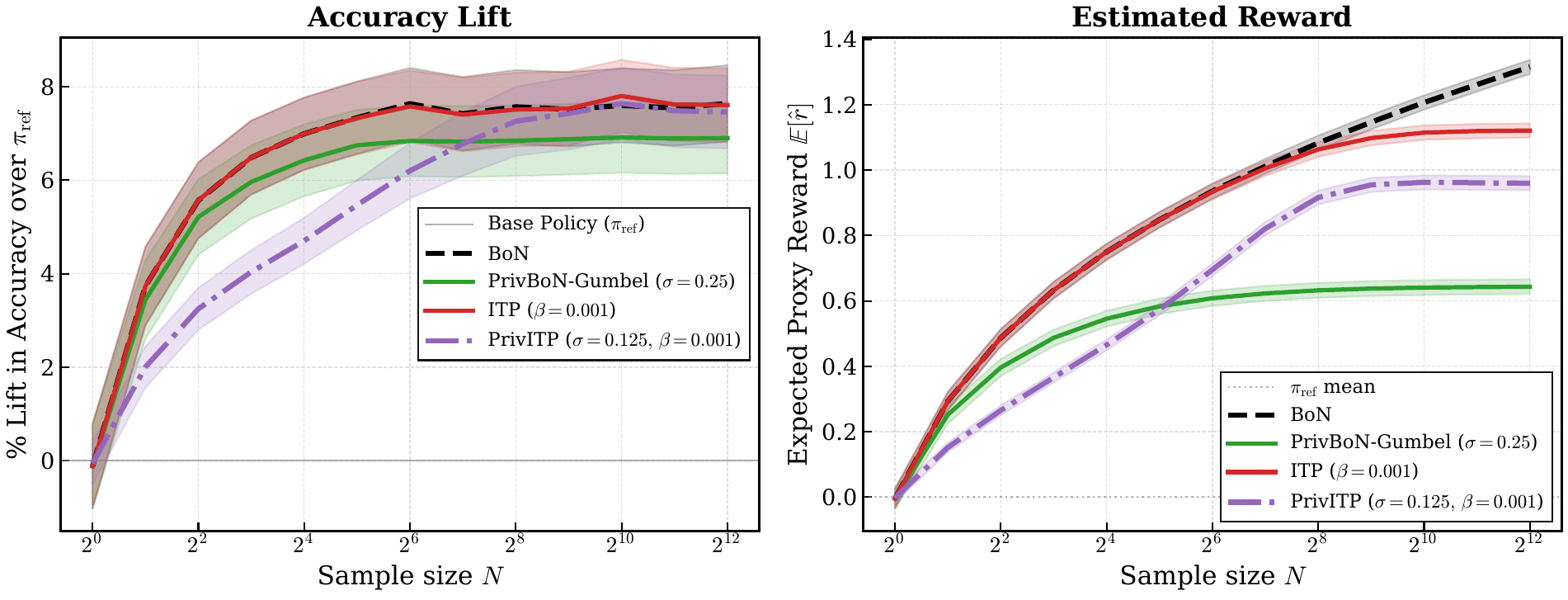}
        \caption{Llama-RM}
        \label{fig:plot3}
    \end{subfigure}
    \hfill
    \begin{subfigure}[b]{\textwidth}
        \centering
\includegraphics[width=\textwidth,height=5cm]{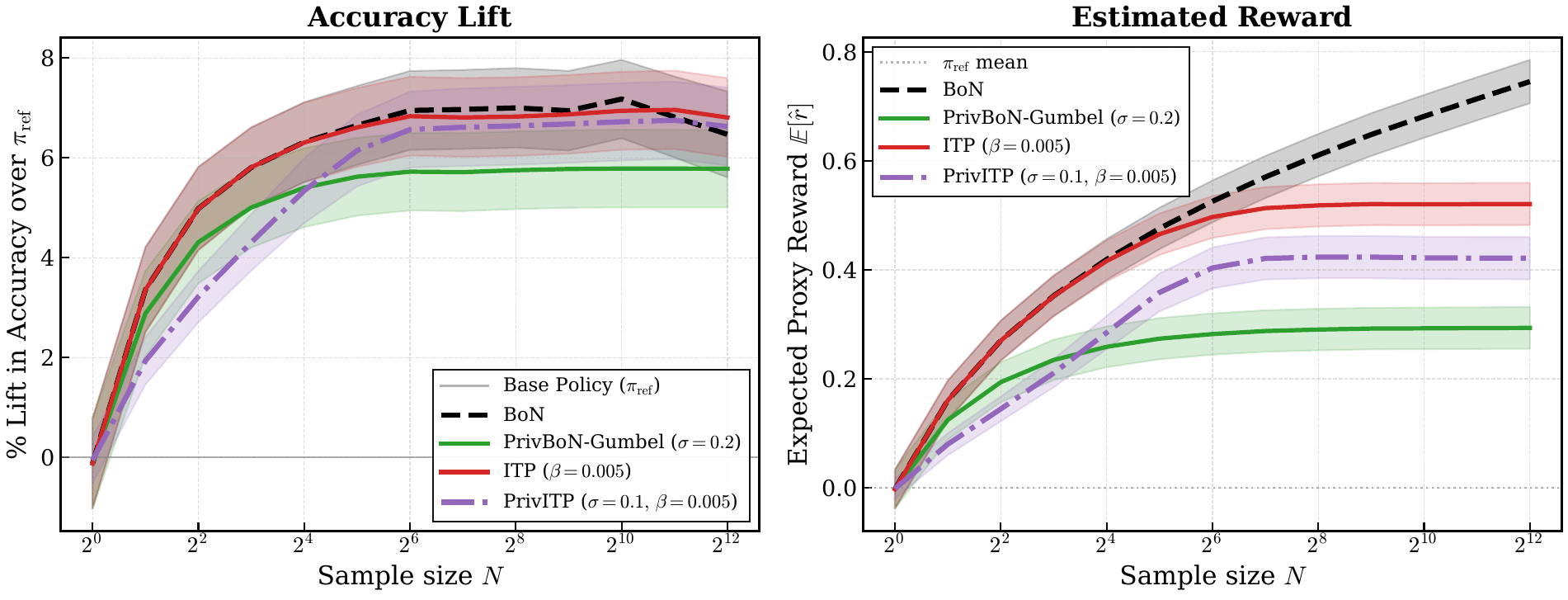}
        \caption{Armo-RM}
        \label{fig:plot4}
    \end{subfigure}
    \caption{Comparison of BoN, PrivBoN, ITP, and PrivITP in accuracy
and estimated reward $\hat{r}$ for GSM8K for four reward models and Phi-3-Mini-Instruct $\pi_\mathrm{ref}$}
    \label{fig:gsm8k_comparison_phi3}
\end{figure}

\begin{figure}[htbp]
    \centering
    
    \begin{subfigure}[b]{\textwidth}
        \centering
        \includegraphics[width=\textwidth, height=5cm]{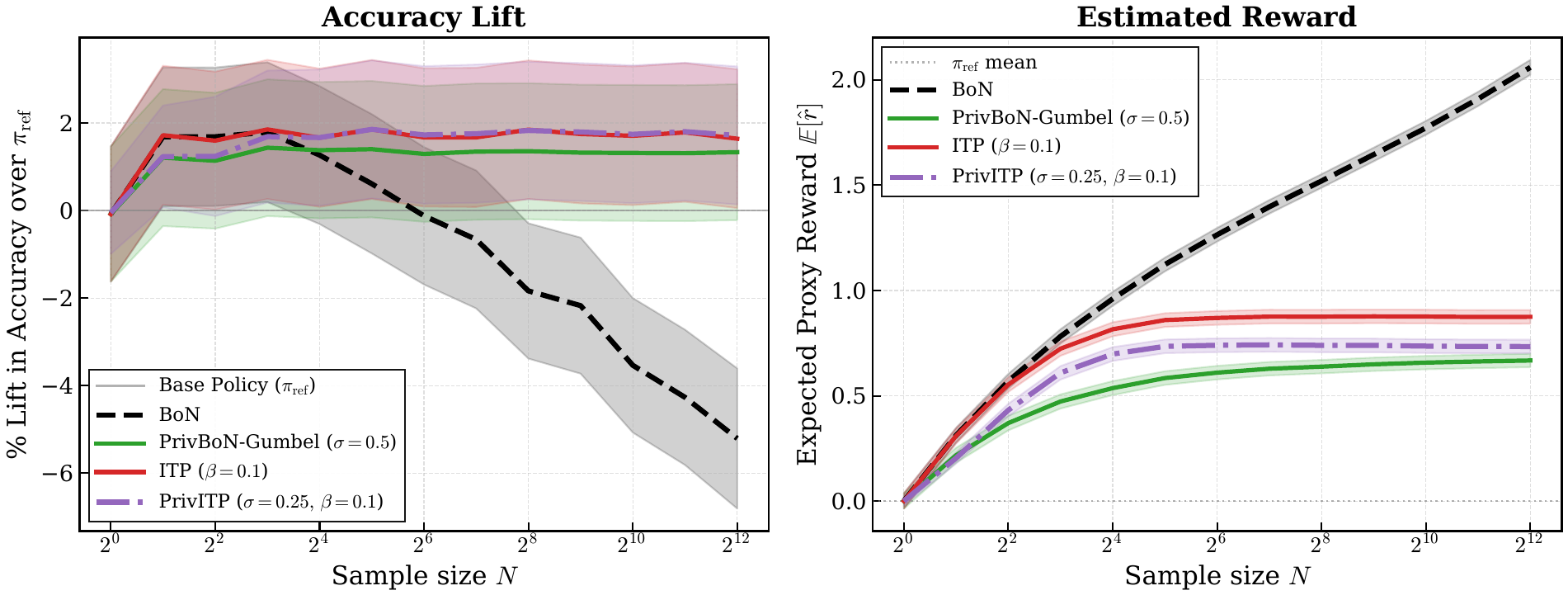}
        \caption{Oasst-RM}
        \label{fig:plot1}
    \end{subfigure}
    
    \begin{subfigure}[b]{\textwidth}
        \centering
        \includegraphics[width=\textwidth, height=5cm]{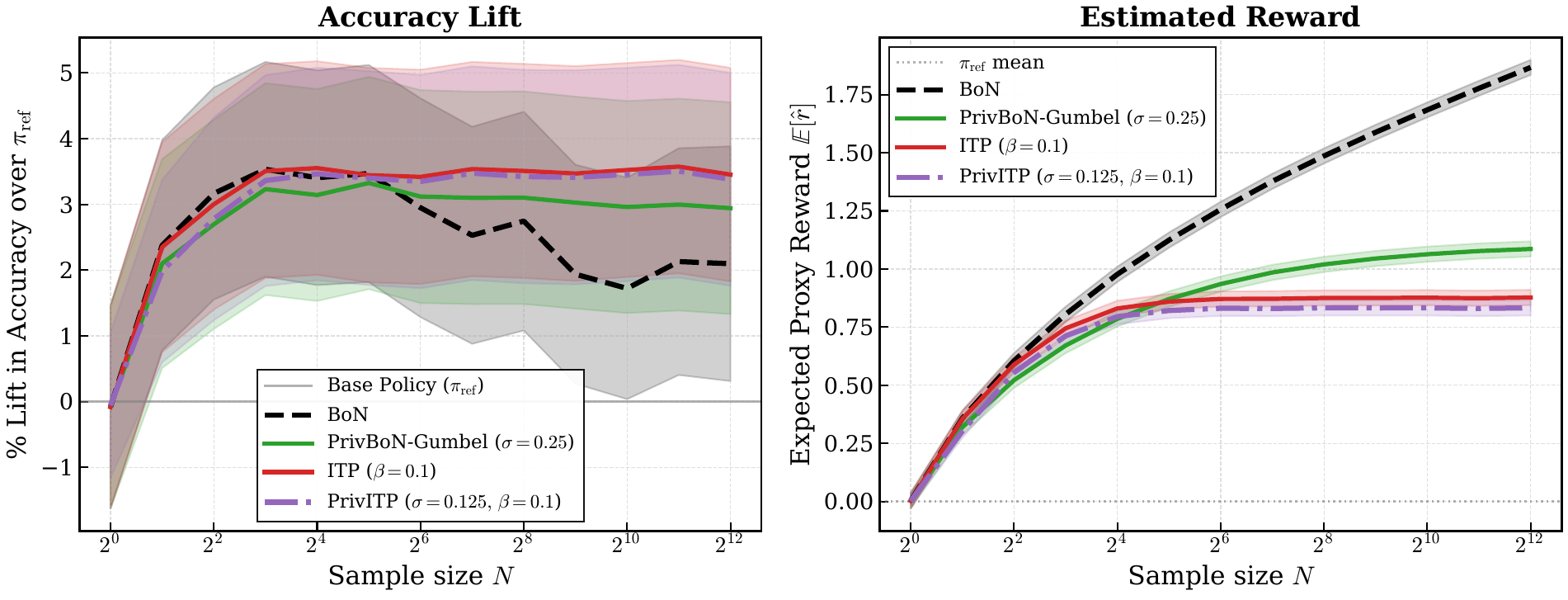}
        \caption{Gemma-RM}
        \label{fig:plot2}
    \end{subfigure}
    
    \begin{subfigure}[b]{\textwidth}
        \centering
\includegraphics[width=\textwidth,height=5cm]{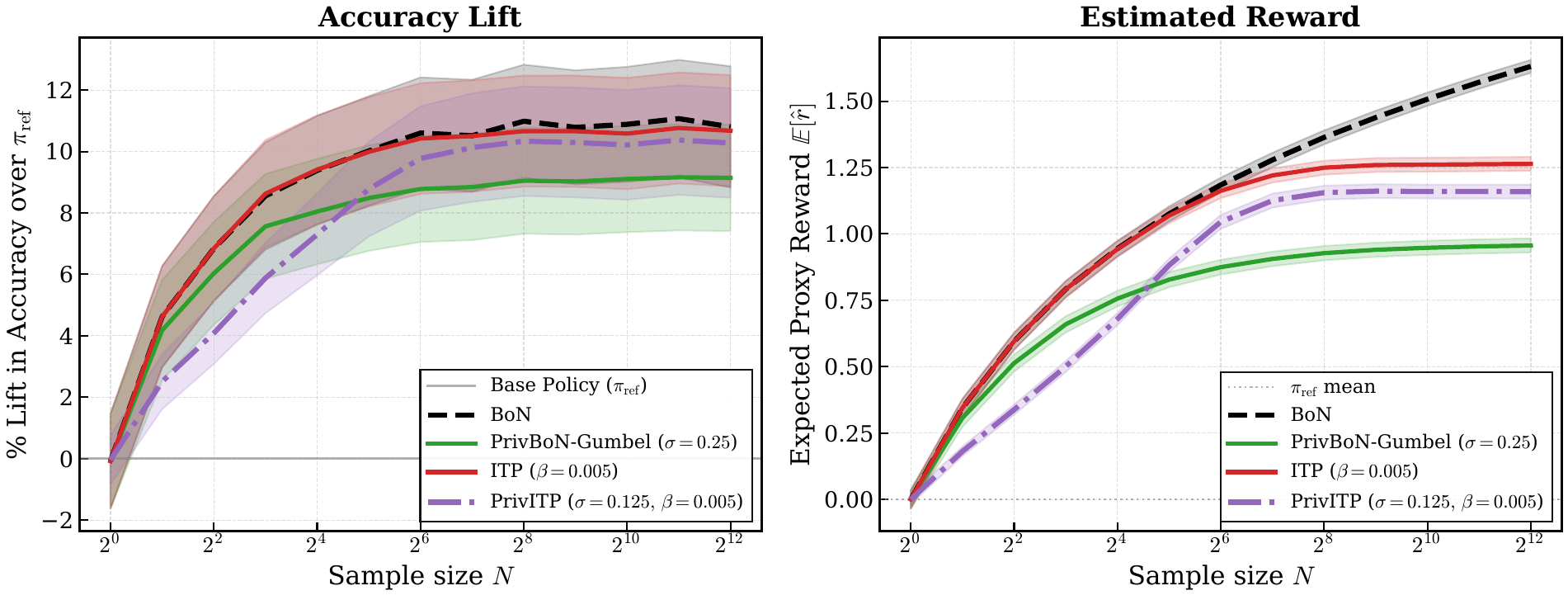}
        \caption{Llama-RM}
        \label{fig:plot3}
    \end{subfigure}
    \hfill
    \begin{subfigure}[b]{\textwidth}
        \centering
\includegraphics[width=\textwidth,height=5cm]{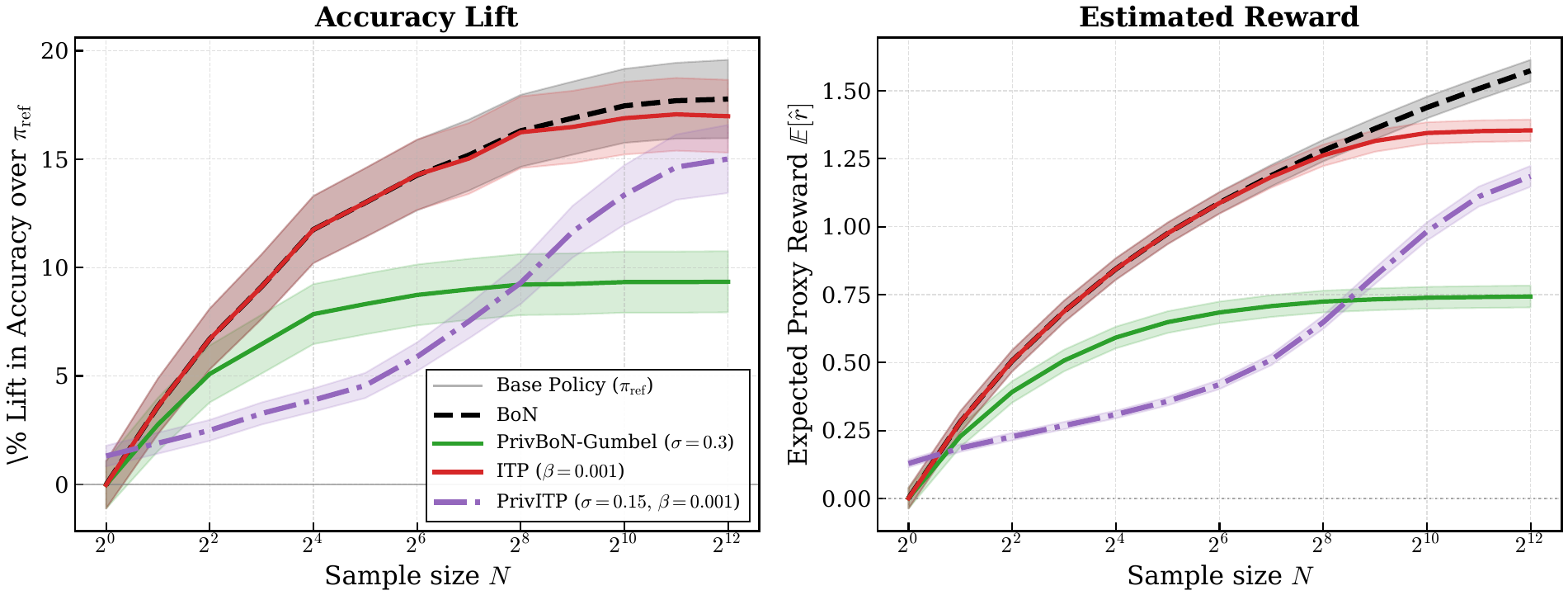}
        \caption{Armo-RM}
        \label{fig:plot4}
    \end{subfigure}
    \caption{Comparison of BoN, PrivBoN, ITP, and PrivITP in accuracy
and estimated reward $\hat{r}$ for MATH for four reward models and Phi-3-Mini-Instruct $\pi_\mathrm{ref}$}
    \label{fig:math_comparison_phi3}
\end{figure}

\begin{figure}[htbp]
    \centering
    
    \begin{subfigure}[b]{\textwidth}
        \centering
        \includegraphics[width=\textwidth, height=5cm]{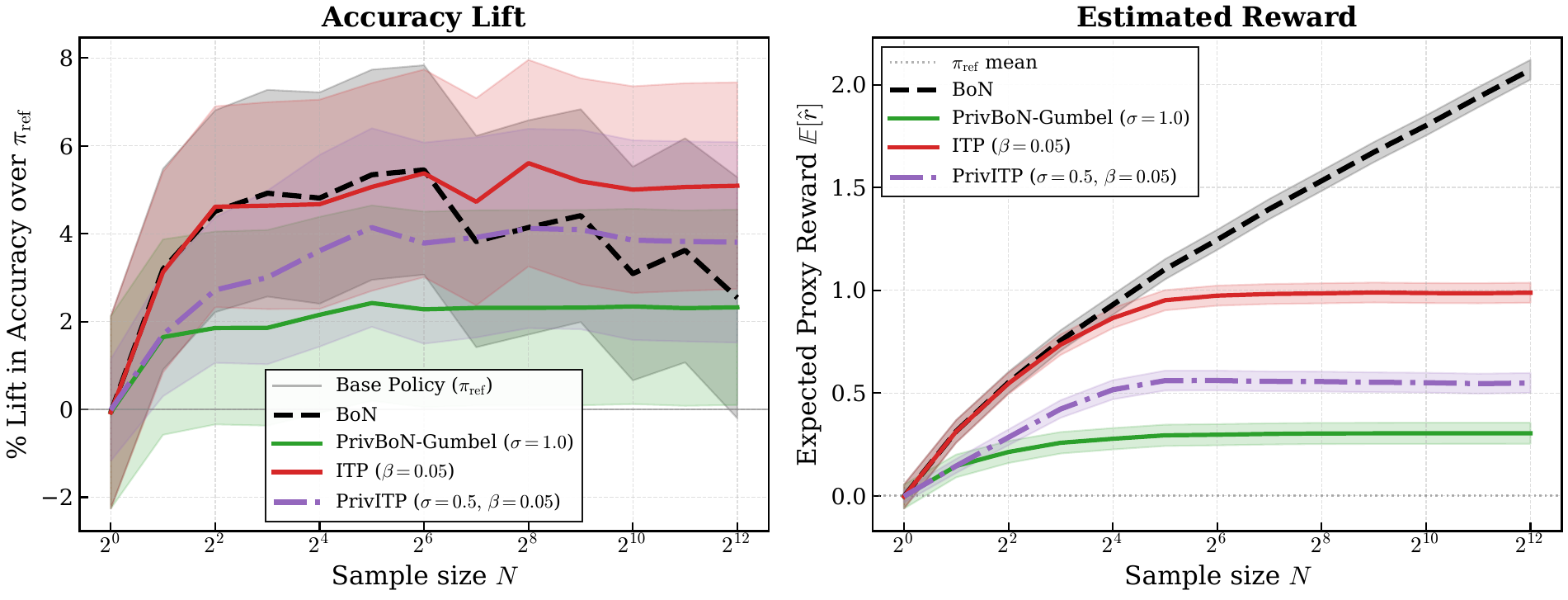}
        \caption{Oasst-RM}
        \label{fig:plot1}
    \end{subfigure}
    
    \begin{subfigure}[b]{\textwidth}
        \centering
        \includegraphics[width=\textwidth, height=5cm]{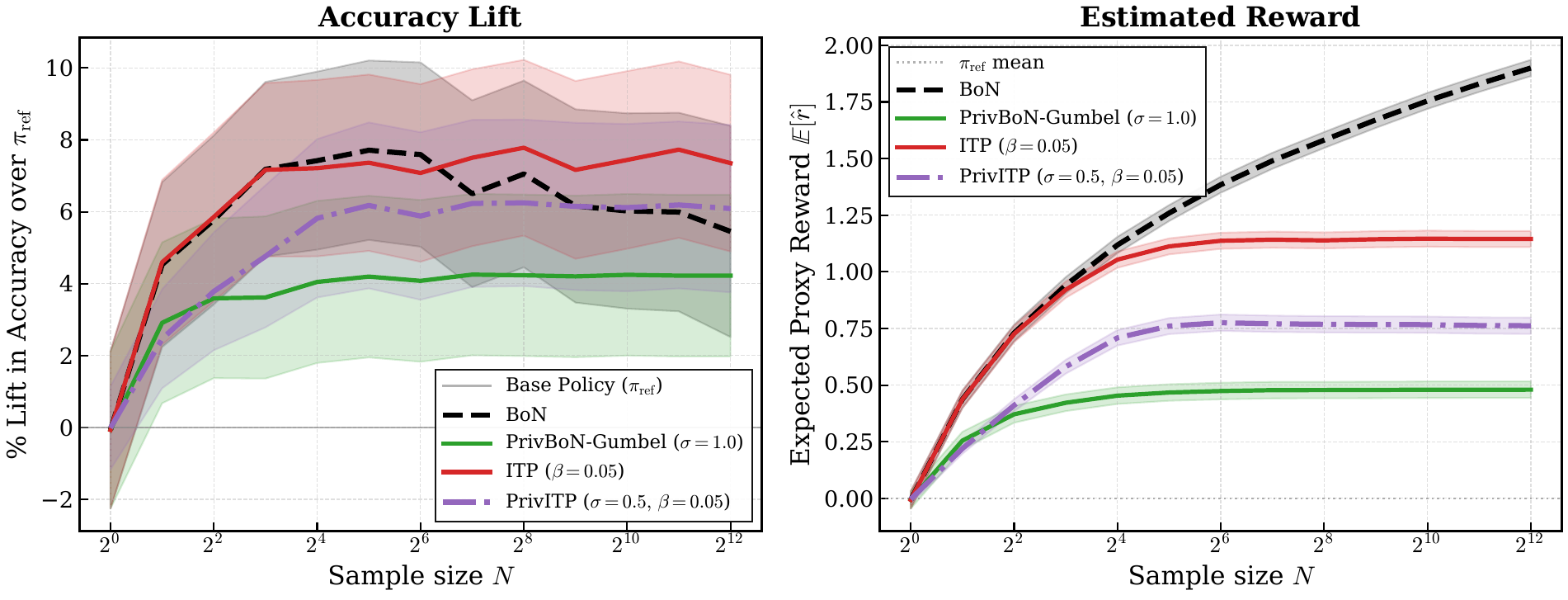}
        \caption{Gemma-RM}
        \label{fig:plot2}
    \end{subfigure}
    
    \begin{subfigure}[b]{\textwidth}
        \centering
        \includegraphics[width=\textwidth,height=5cm]{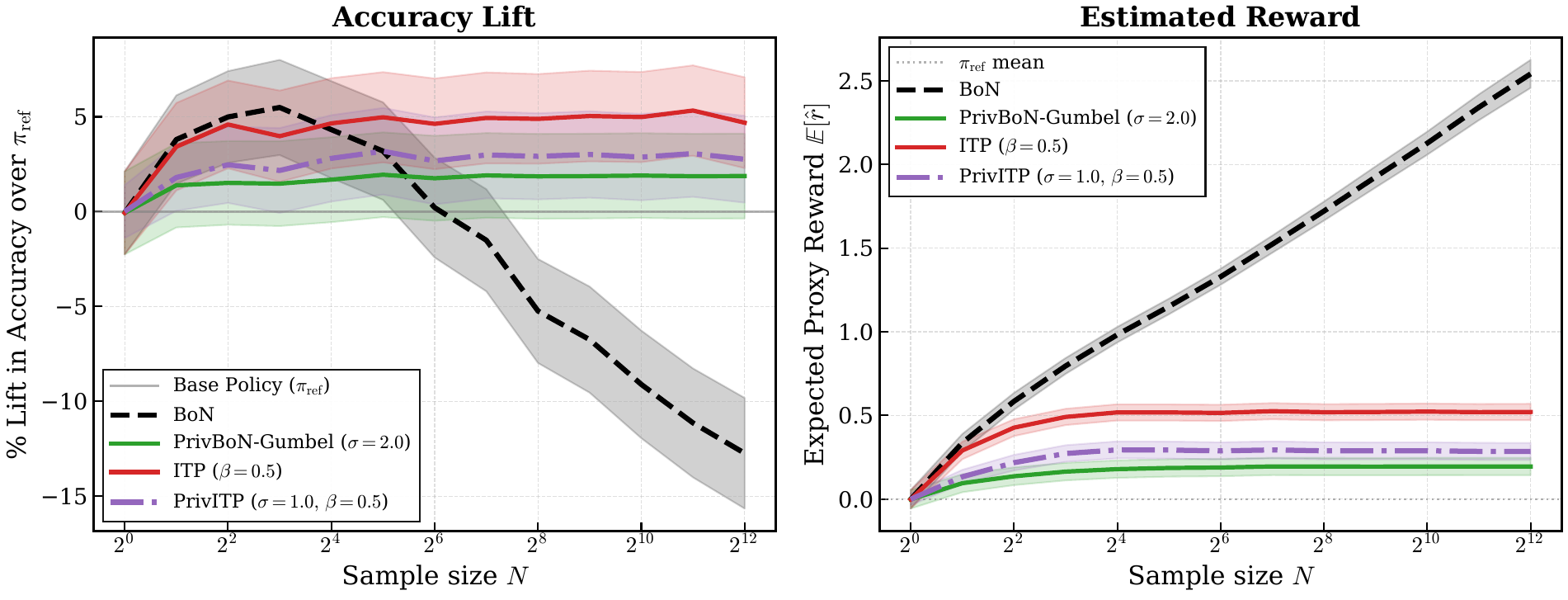}
        \caption{Llama-RM}
        \label{fig:plot3}
    \end{subfigure}
    
    \begin{subfigure}[b]{\textwidth}
        \centering
        \includegraphics[width=\textwidth,height=5cm]{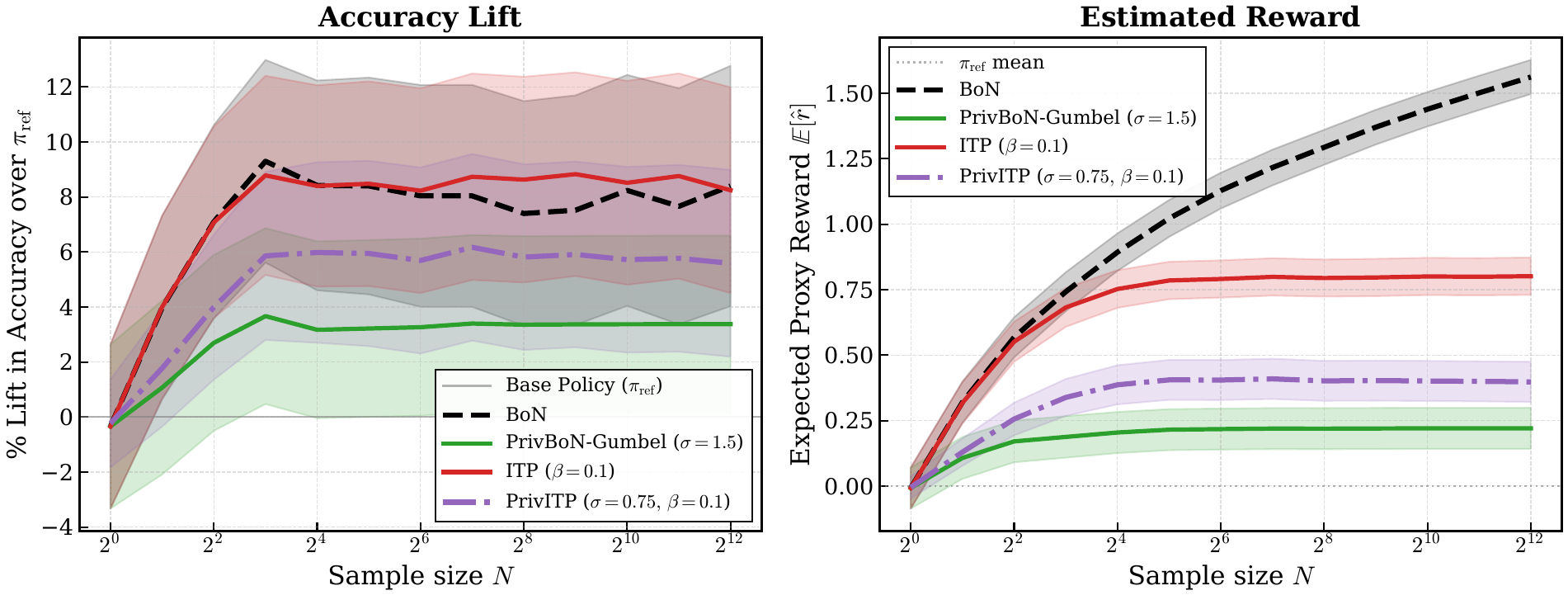}
        \caption{Armo-RM}
        \label{fig:plot4}
    \end{subfigure}
    
    \caption{Comparison of BoN, PrivBoN, ITP, and PrivITP in accuracy
and estimated reward $\hat{r}$ for MMLU for four reward models and Phi-3-Mini-Instruct $\pi_\mathrm{ref}$}
    \label{fig:mmlu_comparison_phi3}
\end{figure}

\begin{figure}
    \centering
    \begin{subfigure}[t]{0.49\textwidth}
        \centering
        \includegraphics[width=\textwidth]{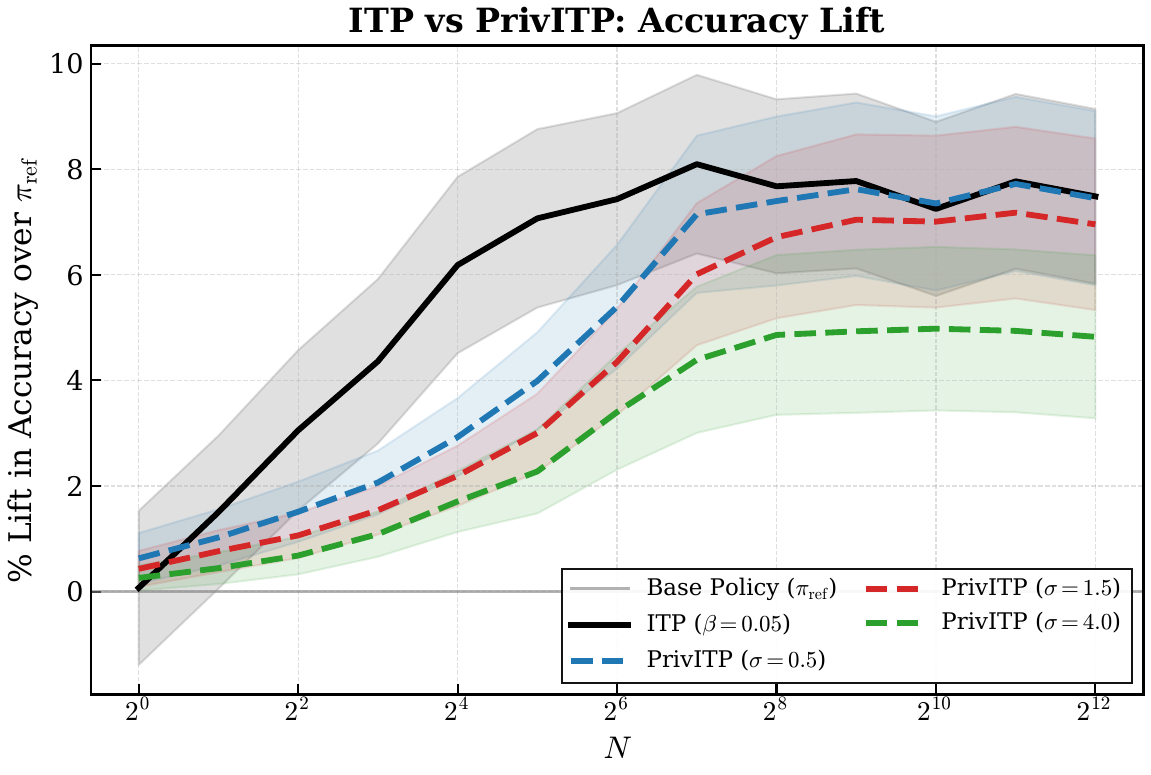}
        \caption{Oasst-RM}
        \label{fig:sub1}
    \end{subfigure}
    \hfill
    \begin{subfigure}[t]{0.49\textwidth}
        \centering
        \includegraphics[width=\textwidth]{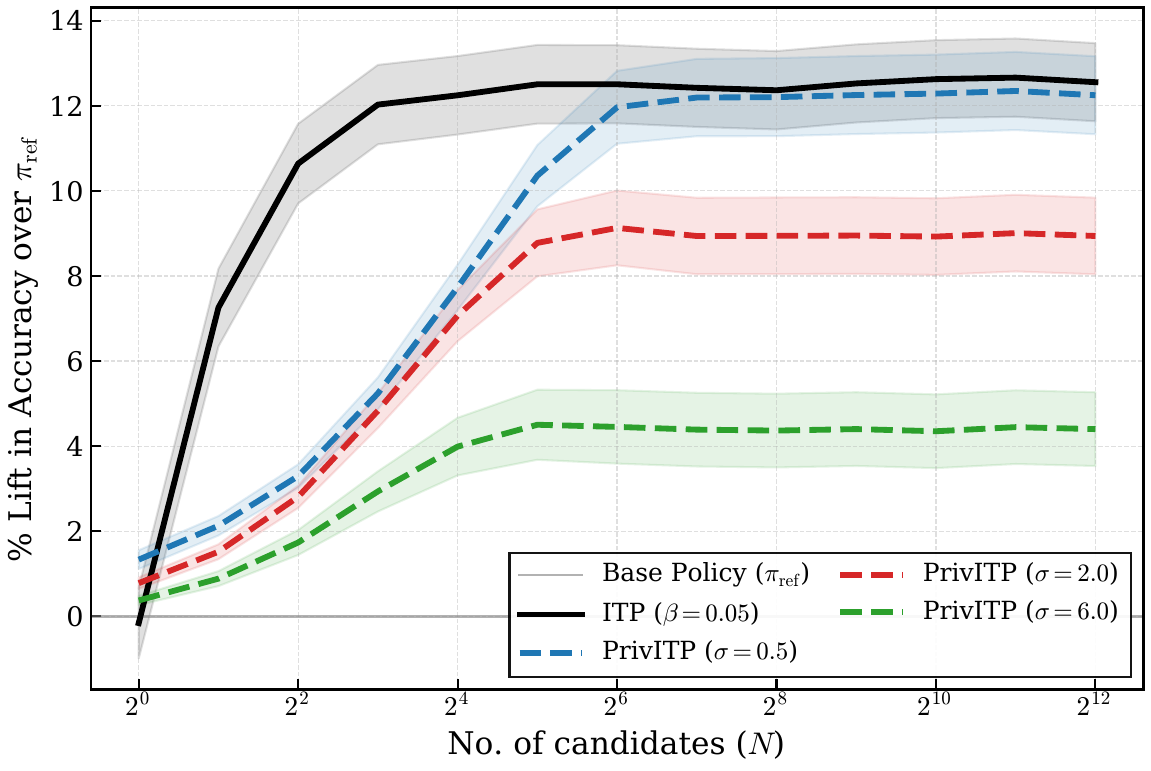}
        \caption{Gemma-RM}
        \label{fig:sub2}
    \end{subfigure}
    
    \begin{subfigure}[t]{0.49\textwidth}
        \centering
        \includegraphics[width=\textwidth]{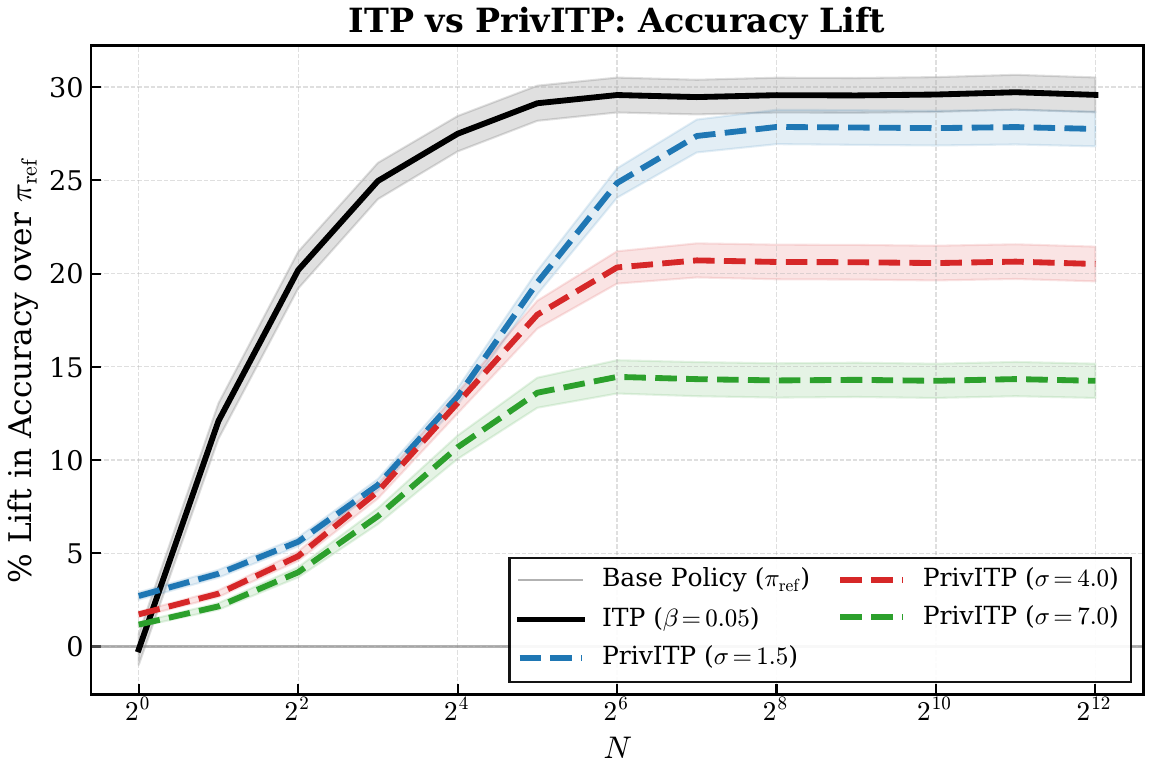}
        \caption{Llama-RM}
        \label{fig:sub3}
    \end{subfigure}
    \hfill 
    \begin{subfigure}[t]{0.49\textwidth}
        \centering
        \includegraphics[width=\textwidth]{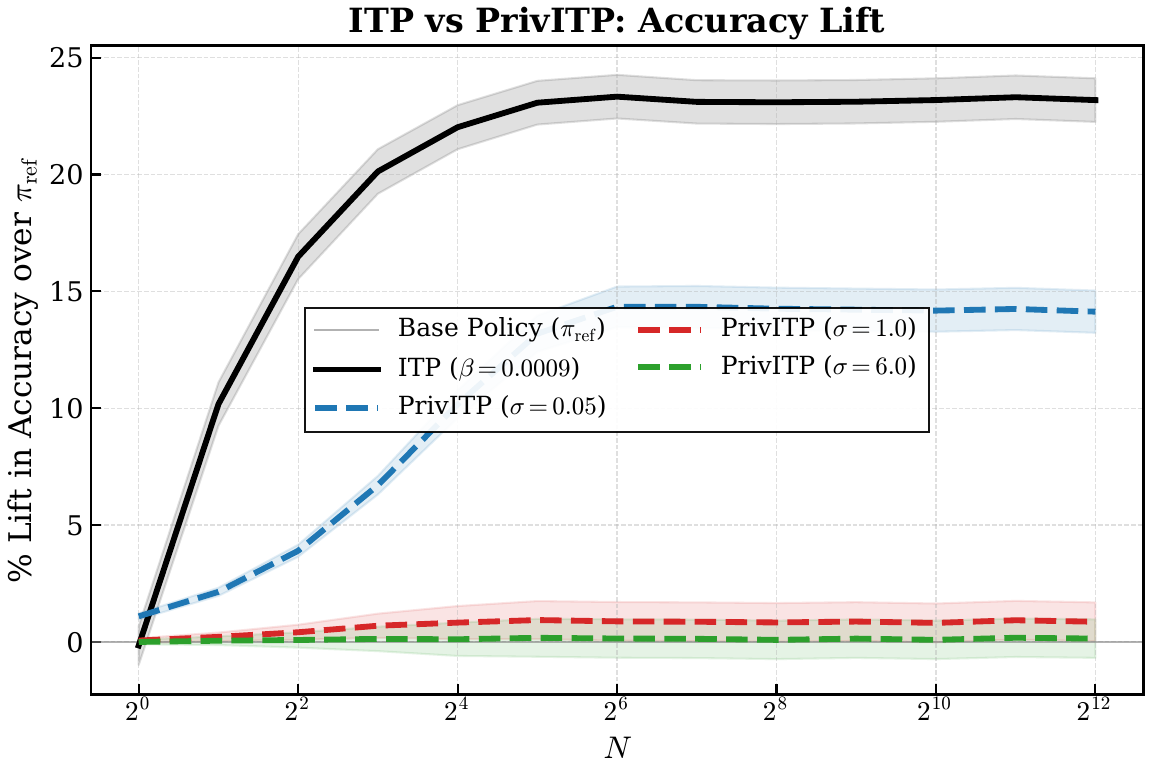}
        \caption{Armo-RM}
        \label{fig:sub4}
    \end{subfigure}
    \caption{ITP vs PrivITP for different $\sigma$ on GSM8K dataset with Gemma-2-2b-Instruct base policy}
    \label{fig:privacy-vs-utility}
\end{figure}

\begin{figure}
    \centering
    \begin{subfigure}[t]{0.49\textwidth}
        \centering
        \includegraphics[width=\textwidth]{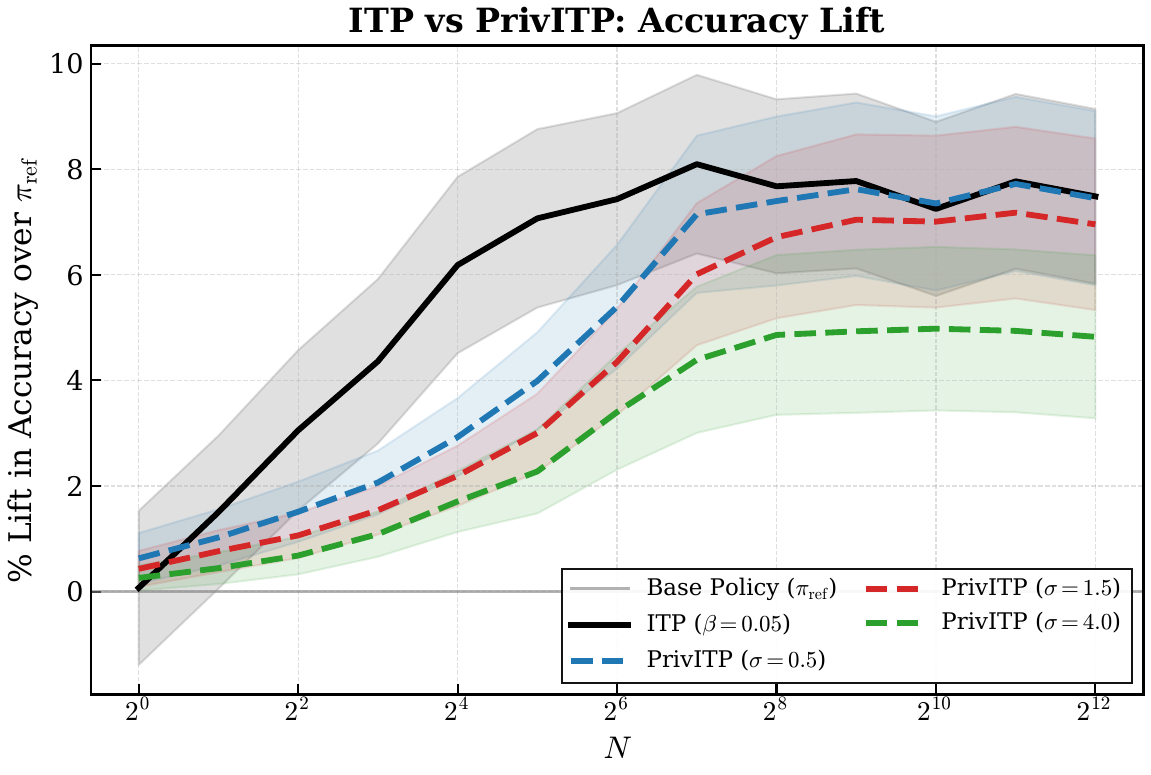}
        \caption{Oasst-RM}
        \label{fig:sub1}
    \end{subfigure}
    \hfill
    \begin{subfigure}[t]{0.49\textwidth}
        \centering
        \includegraphics[width=\textwidth]{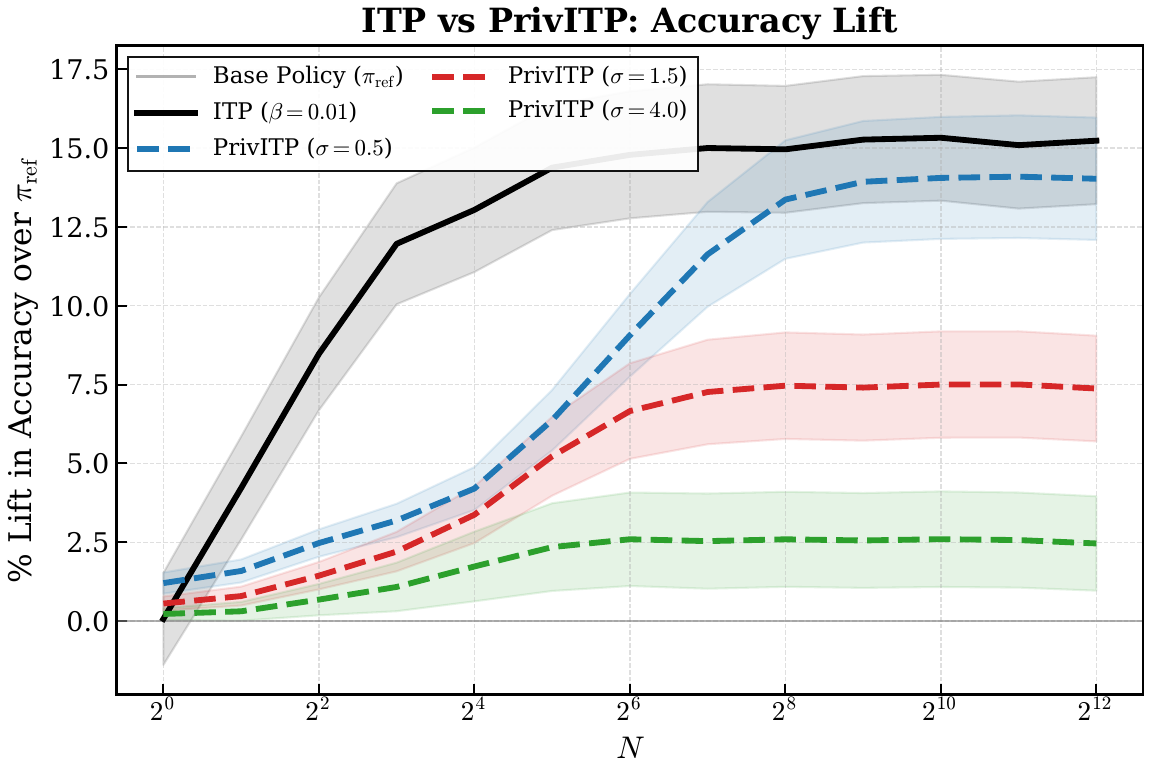}
        \caption{Gemma-RM}
        \label{fig:sub2}
    \end{subfigure}
    
    \begin{subfigure}[t]{0.49\textwidth}
        \centering
        \includegraphics[width=\textwidth]{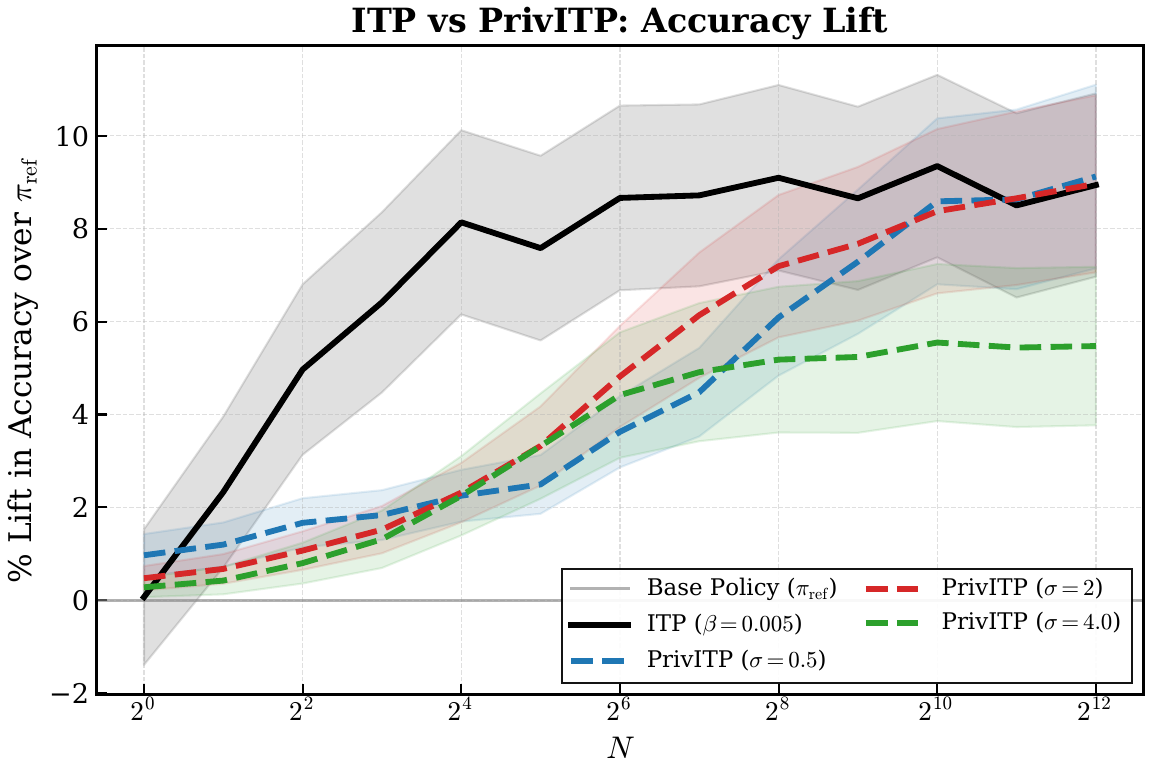}
        \caption{Llama-RM}
        \label{fig:sub3}
    \end{subfigure}
    \hfill 
    \begin{subfigure}[t]{0.49\textwidth}
        \centering
        \includegraphics[width=\textwidth]{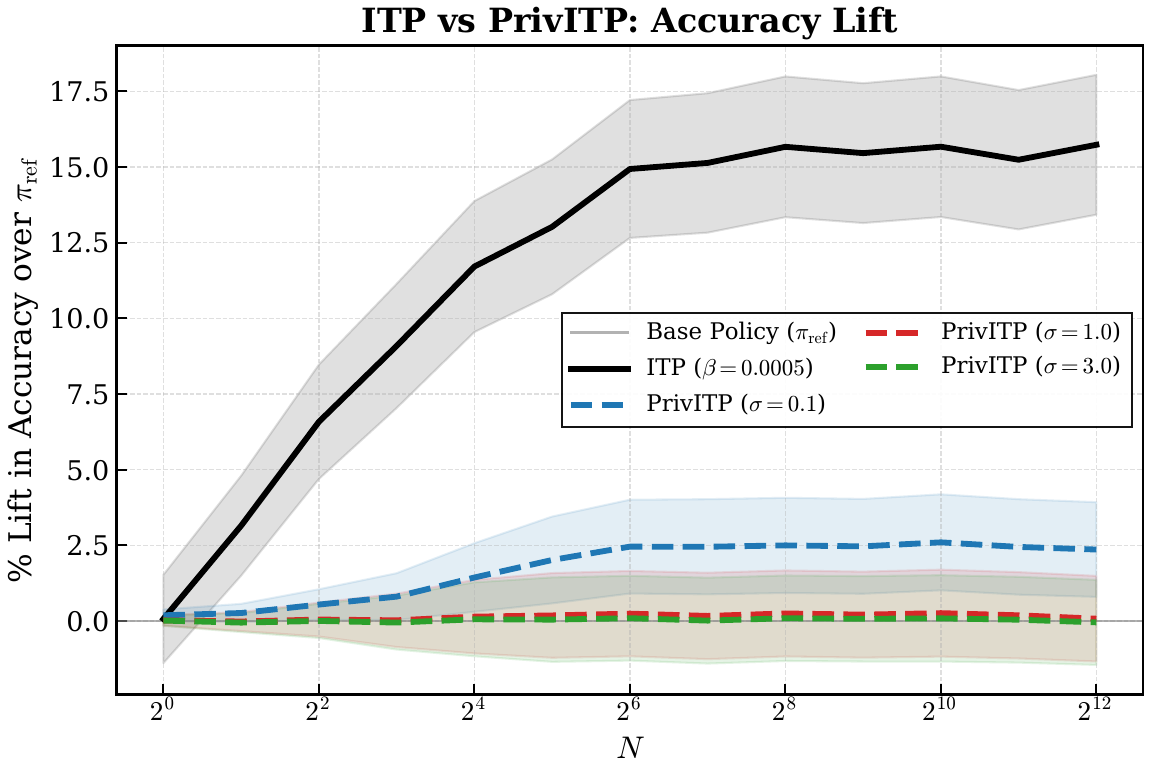}
        \caption{Armo-RM}
        \label{fig:sub4}
    \end{subfigure}
    \caption{ITP vs PrivITP for different $\sigma$ on MMLU dataset with Gemma-2-2b-Instruct base policy}
    \label{fig:privacy-vs-utility_MMLU}
\end{figure}

\begin{figure}
    \centering
    \begin{subfigure}[t]{0.49\textwidth}
        \centering
        \includegraphics[width=\textwidth]{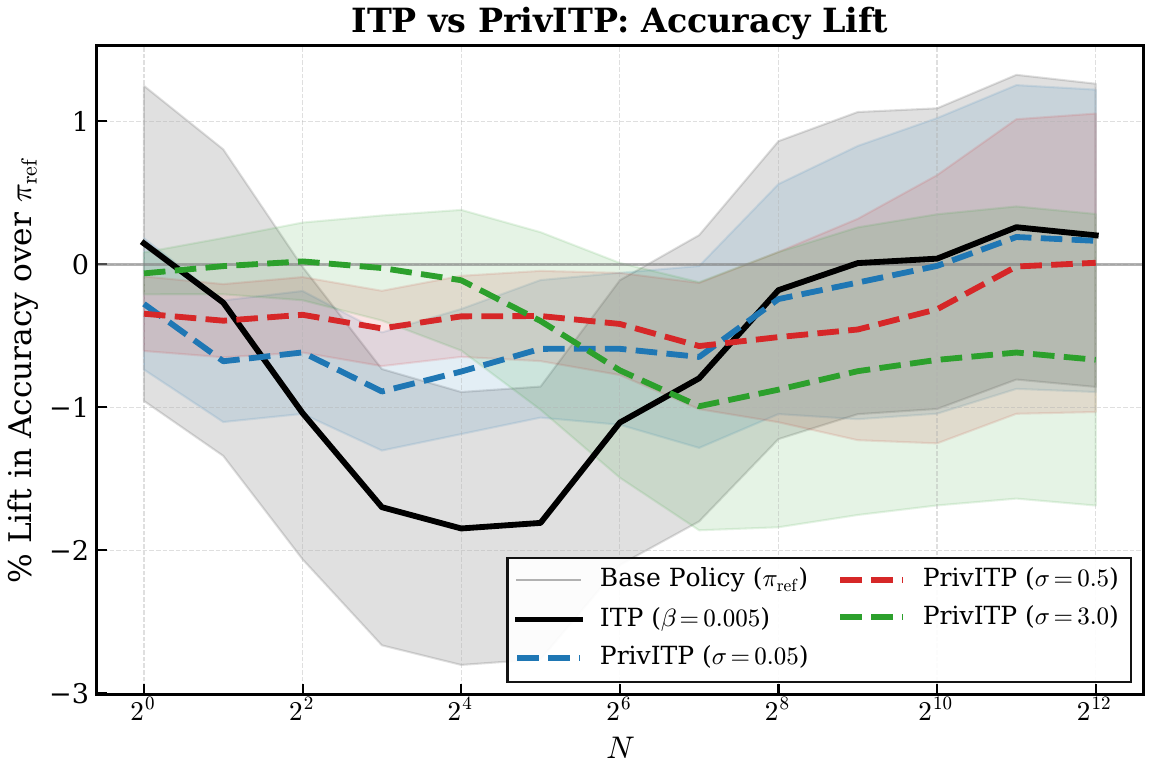}
        \caption{Oasst-RM}
        \label{fig:sub1}
    \end{subfigure}
    \hfill
    \begin{subfigure}[t]{0.49\textwidth}
        \centering
        \includegraphics[width=\textwidth]{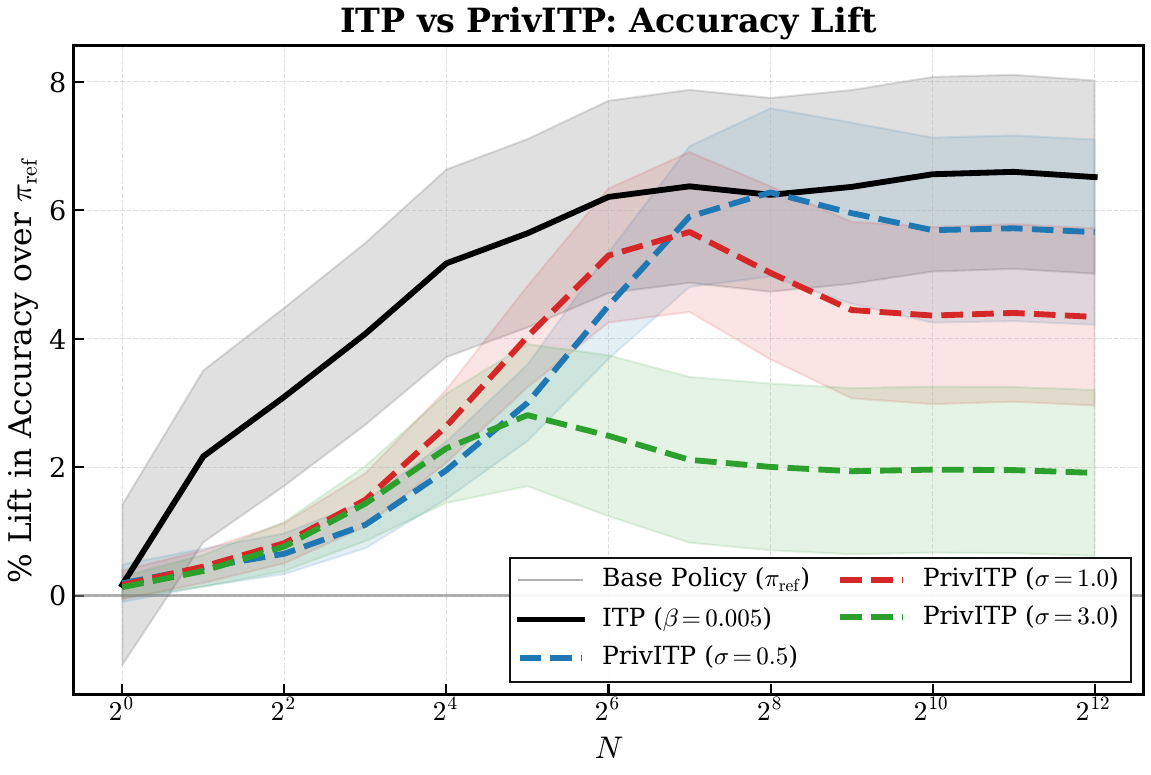}
        \caption{Gemma-RM}
        \label{fig:sub2}
    \end{subfigure}
    
    \begin{subfigure}[t]{0.49\textwidth}
        \centering
        \includegraphics[width=\textwidth]{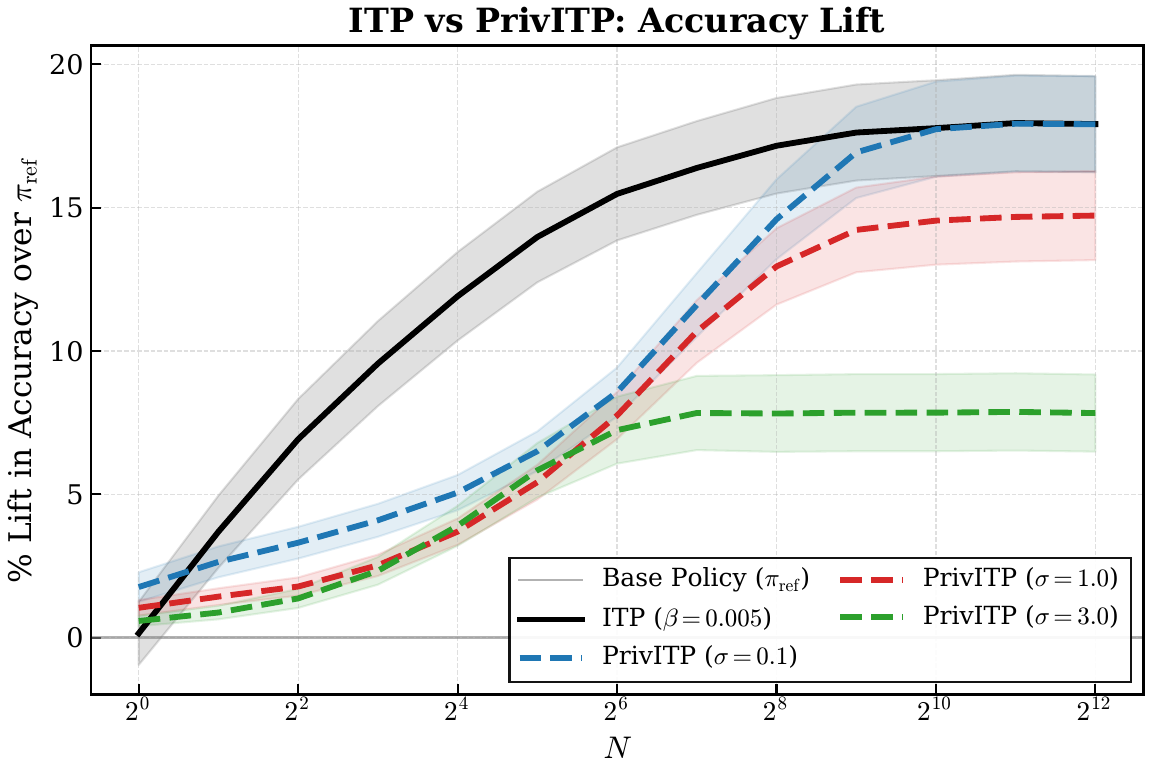}
        \caption{Llama-RM}
        \label{fig:sub3}
    \end{subfigure}
    \hfill 
    \begin{subfigure}[t]{0.49\textwidth}
        \centering
        \includegraphics[width=\textwidth]{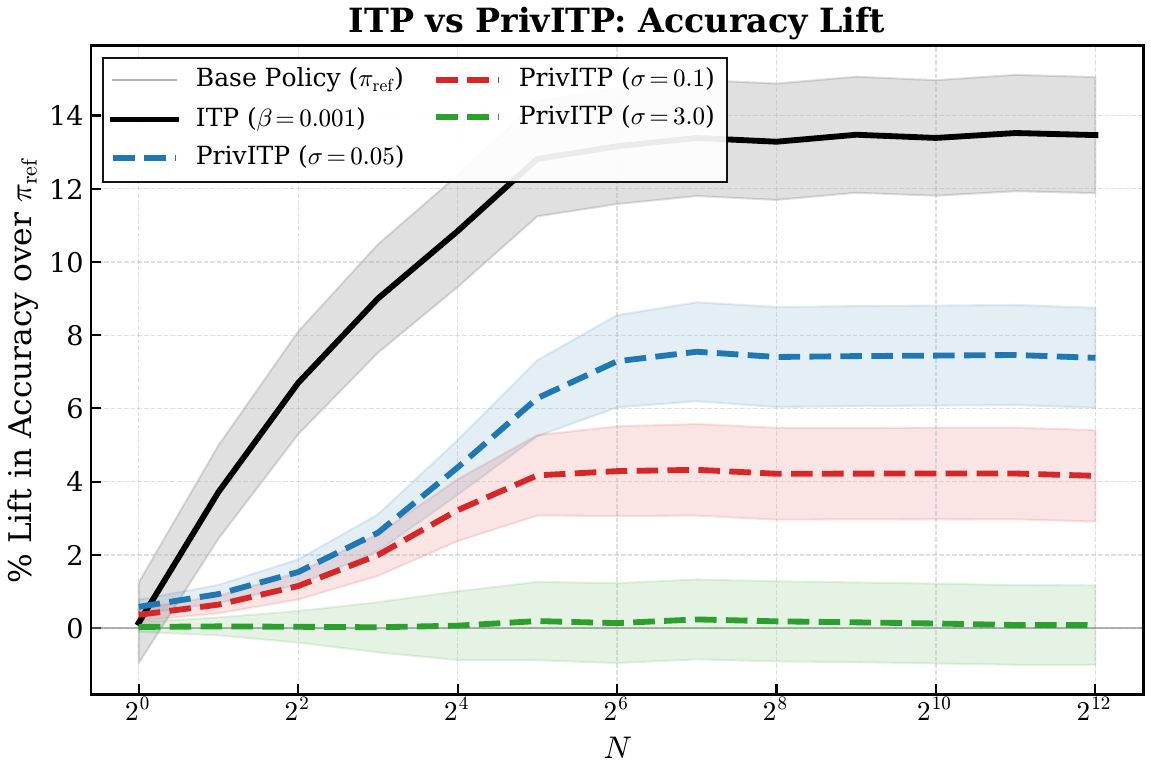}
        \caption{Armo-RM}
        \label{fig:sub4}
    \end{subfigure}
    \caption{ITP vs PrivITP for different $\sigma$ on MATH dataset with Gemma-2-2b-Instruct base policy}
    \label{fig:privacy-vs-utility_MATH}
\end{figure}

\begin{figure}
    \centering
    \begin{subfigure}[t]{0.49\textwidth}
        \centering
        \includegraphics[width=\textwidth]{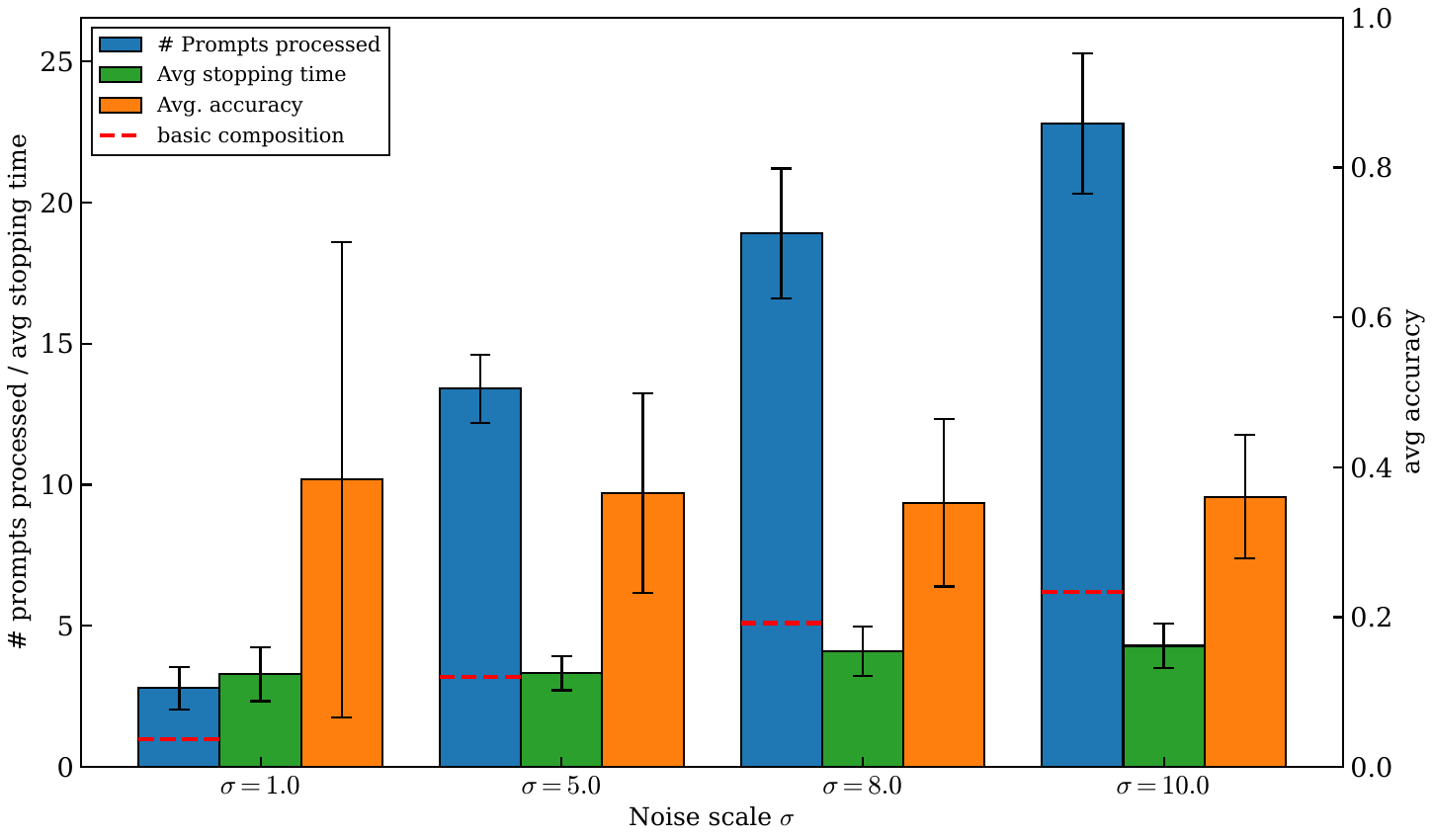}
        \caption{Oasst-RM}
        \label{fig:sub1}
    \end{subfigure}
    \hfill
    \begin{subfigure}[t]{0.49\textwidth}
        \centering
        \includegraphics[width=\textwidth]{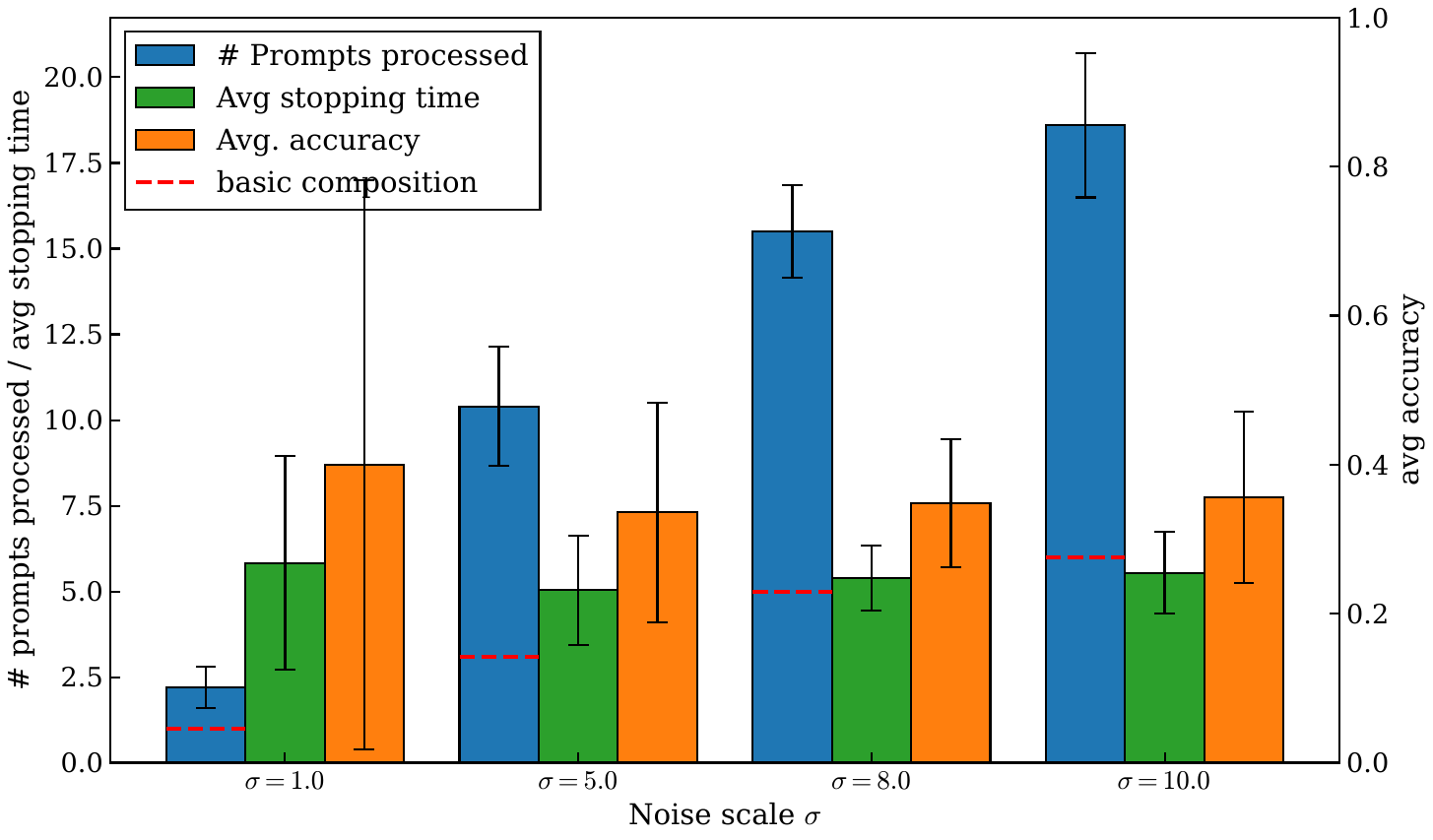}
        \caption{Gemma-RM}
        \label{fig:sub2}
    \end{subfigure}
    
    \begin{subfigure}[t]{0.49\textwidth}
        \centering
        \includegraphics[width=\textwidth]{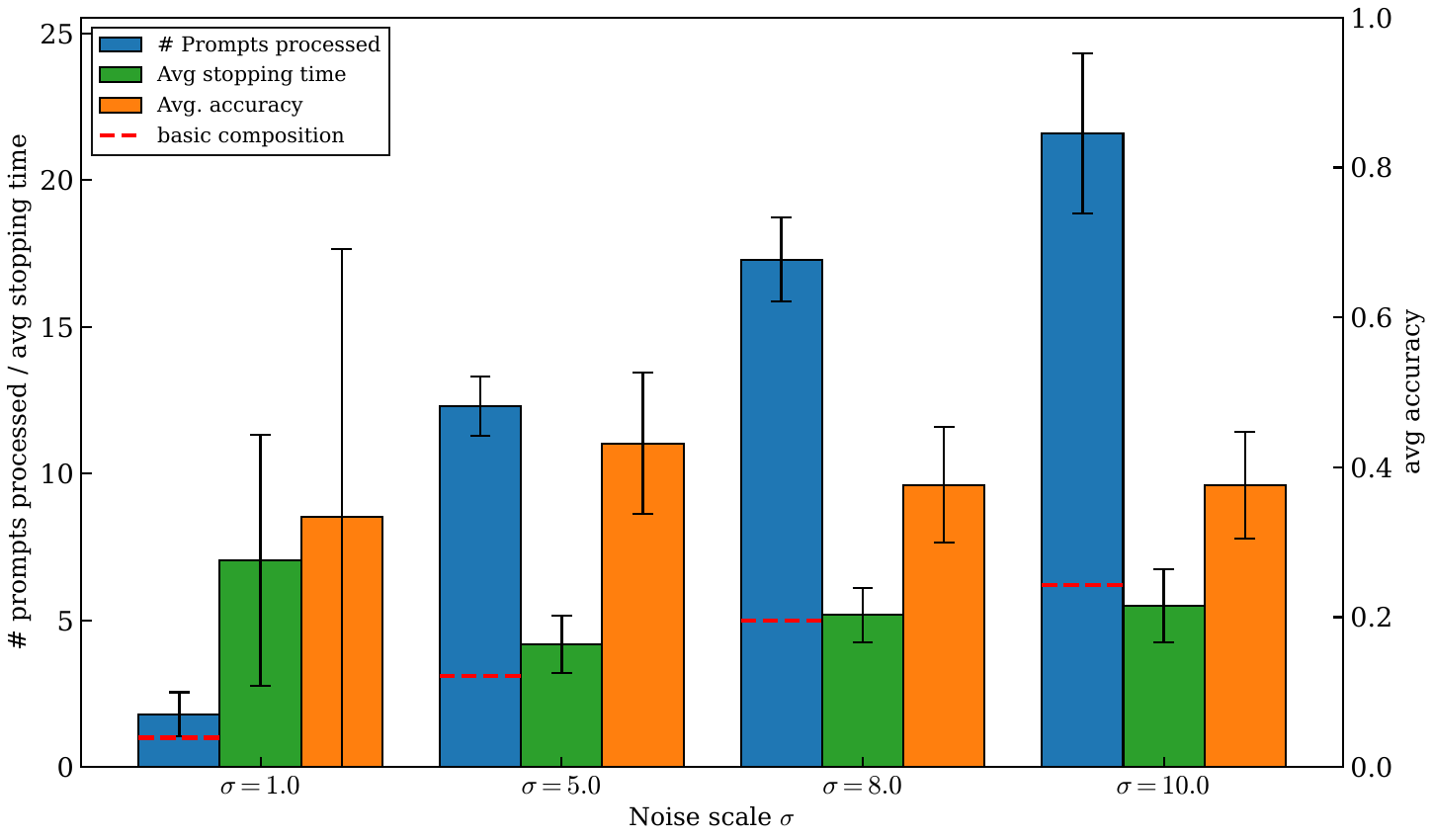}
        \caption{Llama-RM}
        \label{fig:sub3}
    \end{subfigure}
    \hfill 
    \begin{subfigure}[t]{0.49\textwidth}
        \centering
        \includegraphics[width=\textwidth]{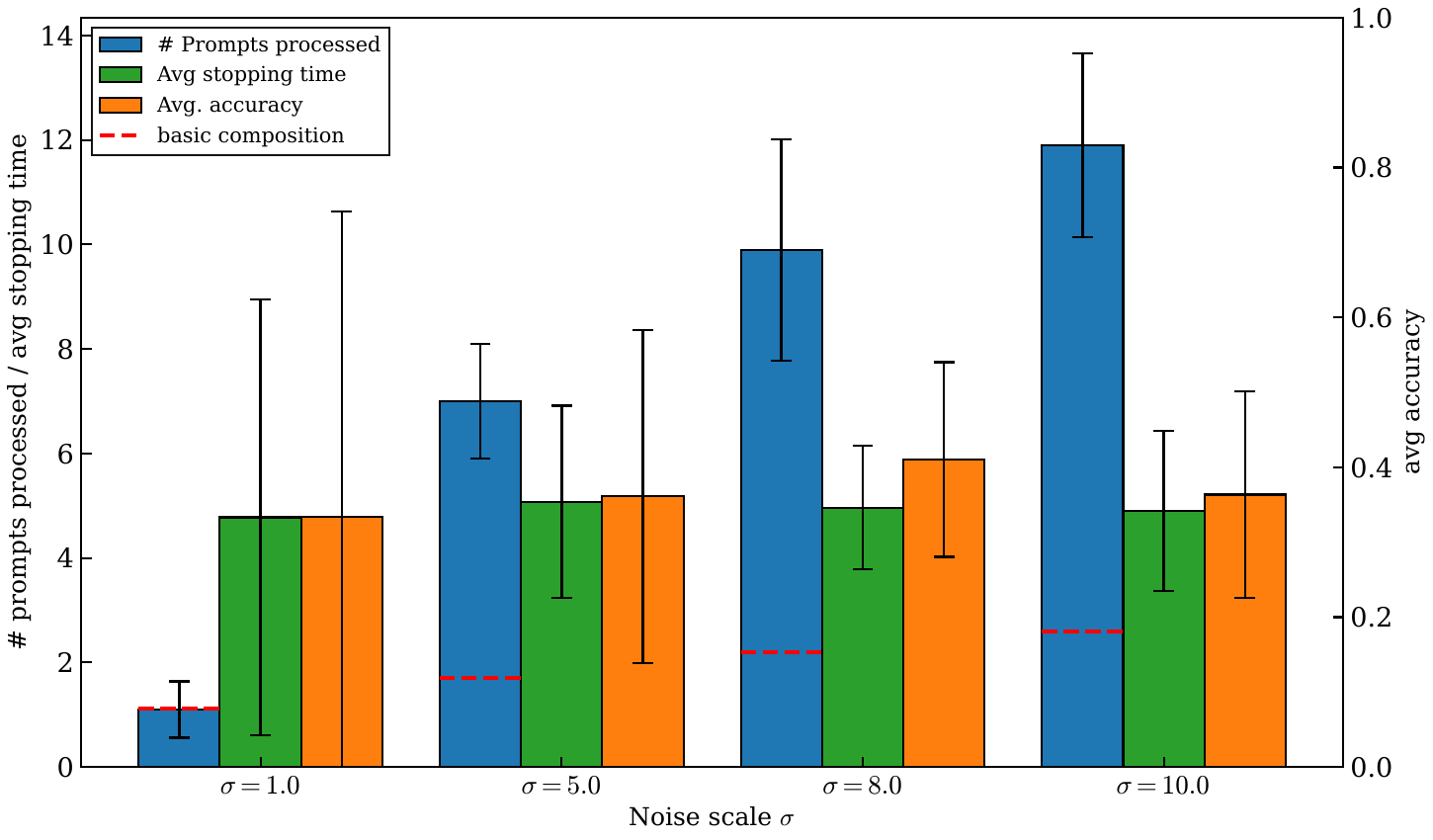}
        \caption{Armo-RM}
        \label{fig:sub4}
    \end{subfigure}
    \caption{Advantage of FSRC composition in PrivITP's ex-post privacy bound compared to the standard composition in ex-ante bound for different reward models and Gemma-2-2b-Instruct as base model}
    \label{fig:fsrc_comp}
\end{figure}

\section{Additional Experiments and Analysis}

\subsection{Open-Ended Evaluation (AlpacaEval-2.0)}

We evaluate our method on the AlpacaEval-2.0 task~\cite{li2023alpacaeval} under the standard proxy/gold protocol. In this protocol, a weaker reward model (RM) is used as the proxy $\hat r$ for selection, and a strong held-out RM is used as the gold $r^\star$ for scoring. Table \ref{tab:alpaca_eval} reports the gold win-rate (\%) versus the base policy, showing the mean and standard error for Gemma-2-2B-Instruct across three proxy RMs (Gemma, Llama, and Armo). We utilize Prometheus-7B-v2.0~\cite{kim2024prometheus2} as the gold $r^\star$ and test across $n \in \{64, 512, 2048, 4096\}$.

\begin{table}[h]
\centering
\caption{Gold win-rate (\%) vs.\ the base policy (Gemma-2-2B-Instruct) on AlpacaEval-2.0. We use Prometheus-7B-v2.0 as the gold judge $r^\star$ and evaluate three different proxy reward models.}
\label{tab:alpaca_eval}
\begin{tabular}{llcccc}
\toprule
\textbf{RM} & \textbf{n} & \textbf{BoN} & \textbf{PrivBoN} & \textbf{ITP} & \textbf{PrivITP} \\
\midrule
\textbf{Gemma} & 64 & 90.82 $\pm$ 1.91 & 88.35 $\pm$ 2.08 & 90.56 $\pm$ 1.99 & 87.89 $\pm$ 2.23 \\
 & 512 & 92.19 $\pm$ 1.72 & 89.65 $\pm$ 1.97 & 90.30 $\pm$ 1.97 & 89.78 $\pm$ 1.95 \\
 & 2048 & 92.06 $\pm$ 1.81 & 89.39 $\pm$ 1.98 & 90.82 $\pm$ 1.89 & 89.97 $\pm$ 2.07 \\
 & 4096 & 92.38 $\pm$ 1.84 & 90.17 $\pm$ 2.02 & 90.36 $\pm$ 1.95 & 91.15 $\pm$ 1.84 \\
\midrule
\textbf{Llama} & 64 & 92.32 $\pm$ 1.87 & 89.39 $\pm$ 1.90 & 92.32 $\pm$ 1.78 & 89.19 $\pm$ 2.03 \\
 & 512 & 91.73 $\pm$ 1.82 & 89.32 $\pm$ 2.03 & 91.93 $\pm$ 1.66 & 90.62 $\pm$ 1.92 \\
 & 2048 & 93.29 $\pm$ 1.67 & 88.87 $\pm$ 2.17 & 92.32 $\pm$ 1.73 & 91.34 $\pm$ 1.85 \\
 & 4096 & 93.16 $\pm$ 1.76 & 89.39 $\pm$ 2.01 & 92.19 $\pm$ 1.73 & 90.89 $\pm$ 2.03 \\
\midrule
\textbf{Armo} & 64 & 91.47 $\pm$ 1.77 & 88.93 $\pm$ 2.10 & 92.45 $\pm$ 1.67 & 87.63 $\pm$ 2.18 \\
 & 512 & 92.19 $\pm$ 1.64 & 89.45 $\pm$ 2.15 & 90.49 $\pm$ 1.99 & 90.04 $\pm$ 1.89 \\
 & 2048 & 91.21 $\pm$ 1.94 & 89.91 $\pm$ 1.99 & 90.17 $\pm$ 2.02 & 89.45 $\pm$ 2.08 \\
 & 4096 & 92.64 $\pm$ 1.75 & 89.52 $\pm$ 2.11 & 91.02 $\pm$ 1.87 & 89.65 $\pm$ 2.03 \\
\bottomrule
\end{tabular}
\end{table}

\textbf{Observations:} First, BoN does not exhibit reward hacking within the tested range, as the gold win-rate remains flat-to-rising as $n$ increases (e.g., Llama increases from 92.32 to 93.16) rather than peaking and declining. This behavior is expected because AlpacaEval win-rates saturate near 90\%+, and the proxy RMs track the gold judge closely enough that $n \le 4096$ does not produce the proxy–gold divergence that drives hacking. Consequently, the private variants match BoN's utility to within standard error at every $(n, \text{RM})$ configuration while adding the privacy guarantee that BoN lacks. The private–non-private gaps are approximately 2–3 points and overlap within one standard error, demonstrating that open-ended generation privacy is obtained at no meaningful utility cost.

\newpage
\subsection{Wall-Clock Overhead and Compute Cost}

The actual computational cost for PrivITP is well below a naive $2\times$ overhead because Phase 2 generation is lazy. Rejection sampling returns at the first accepted candidate at index $t$, meaning candidates only need to be generated on demand. Therefore, Phase 2 costs $\mathbb{E}[t]$ generations rather than $n$, and experiments show that $\mathbb{E}[t] = O(1)$ for informative RMs. The net generation cost is approximately $n + \mathbb{E}[t]$. Table \ref{tab:wall_clock} reports the wall-clock time (seconds per query) of PrivITP compared to PrivBoN at $n = 256$ on the GSM8K dataset using the Phi3-mini-instruct base model on a single H100 GPU.

\begin{table}[h]
\centering
\caption{Wall-clock generation time (seconds per query) for PrivBoN and PrivITP at $n=256$. Evaluated on the GSM8K dataset using the Phi3-mini-instruct base model.}
\label{tab:wall_clock}
\begin{tabular}{lcccc}
\toprule
\textbf{RM} & \textbf{Oasst} & \textbf{Gemma} & \textbf{Llama} & \textbf{Armo} \\
\midrule
\textbf{PrivBoN} & 0.4145 & 0.9003 & 1.1804 & 0.4530 \\
\textbf{PrivITP} & 0.4704 & 0.9827 & 1.2838 & 0.4812 \\
\textbf{Overhead} & +13.5\% & +9.2\% & +8.8\% & +6.2\% \\
\bottomrule
\end{tabular}
\end{table}

PrivITP's overhead ranges from 6–14\% across RMs, which is far below a $2\times$ multiplier. The mechanism overhead itself is negligible in both approaches—PrivBoN draws $n$ Gumbel scalars, while PrivITP draws one scalar Gaussian plus per-round noise, which cost microseconds compared to the $n$ forward passes. Therefore, the gap is dominated primarily by Phase 1's scalar-estimation pass.

\subsection{Privacy Budgets and Sensitivity Analysis}

In our practical implementation, we bound the sensitivity $\Delta_r = \sup|\hat r_D - \hat r_{D'}|$ using the maximum reward range $R_{\max}$, since a single example cannot move a bounded-output head beyond its fixed range. This provides an assumption-free bound when reward model training is outside of the deployer's control.  Table \ref{tab:budgets} reports the privacy budgets across different noise levels for GSM8K with Armo-RM, evaluated at $\delta = 10^{-2}$ and $n=16$. We compute $\varepsilon = 2\Delta_r/\sigma$ for PrivBoN. For PrivITP, the total ex-post cost is $\varepsilon_1 + \varepsilon_2^{\mathrm{post}}$, where Phase 1 costs $\varepsilon_1 = \Delta_r\sqrt{2\ln(1.25/\delta)}/\sigma_X$ and Phase 2 costs $\varepsilon_2^{\mathrm{post}}(\mathbb{E}[t])$ calculated via one-dimensional numerical integration at the mean halting time.

\begin{table}[h]
\centering
\caption{Privacy budgets ($\varepsilon$) across varying noise levels ($\sigma$) for Armo-RM evaluated on the GSM8K dataset. Budgets are computed at $\delta = 10^{-2}$ and a fallback cap of $n=16$. PrivITP's ex-post cost ($\varepsilon_1 + \varepsilon_2^{\mathrm{post}}$) is evaluated at the mean halting time $\mathbb{E}[t]$.}
\label{tab:budgets}
\begin{tabular}{lcccc}
\toprule
$\sigma$ & $\varepsilon$ \textbf{(PrivBoN)} & $\varepsilon_1$ & $\varepsilon_2^{\mathrm{post}}$ & $\varepsilon_1+\varepsilon_2^{\mathrm{post}}$ \textbf{(PrivITP)} \\
\midrule
0.10 & 7.80 & 24.25 & 23.25 & 47.50 \\
0.25 & 3.12 & 9.70 & 8.49 & 18.19 \\
0.50 & 1.56 & 4.85 & 3.48 & 8.33 \\
0.75 & 1.04 & 3.23 & 2.22 & 5.45 \\
1.00 & 0.78 & 2.42 & 1.43 & 3.85 \\
\bottomrule
\end{tabular}
\end{table}

While PrivBoN spans $\varepsilon \in [0.78, 7.80]$, PrivITP's worst-case ex-post total appears larger because it is evaluated at $\mathbb{E}[t]$ under conservative worst-case query values. However, its realized per-query cost is lower, and FSRC composition exploits this to answer approximately $3\times$ more queries at a fixed total budget. Furthermore, tightening to $\delta = 10^{-5}$ increases $\varepsilon_1$ by a factor of only $\approx 1.56$, and $\varepsilon_2^{\mathrm{post}}$ grows at most logarithmically in $n$ because the halting time $t$ is governed by the per-round acceptance probability where $\mathbb{E}[t] = O(1)$.

\FloatBarrier  

\end{document}